\keywords{LTL, Safety fragment, First-order logic}
\tikzset{
  alt/.code args={<#1>#2#3}{%
    \alt<#1>{\pgfkeysalso{#2}}{\pgfkeysalso{#3}}%
  },
  only/.code args={<#1>#2}{%
    \alt<#1>{\pgfkeysalso{#2}}{}%
  },
  invisible/.style={opacity=0},
  visible on/.style={alt=#1{}{invisible}},
  alerted on/.style={alt=#1{alerted}{}},
  math mode/.style={execute at begin node=$, execute at end node=$},
  math array with columns/.style={
    execute at begin node=\(\begin{array}{c},
    execute at end node=\end{array}\)
  },
  math array/.style={math array with columns=c},
  square/.style={regular polygon,regular polygon sides=4}
}
\NewDocumentCommand\ltlset{m}{\ensuremath{\set{\ltl{#1}}}}
\newtheorem{observation}[thm]{Observation}
\crefname{thm}{Theorem}{Theorems}
\crefname{cor}{Corollary}{Corollaries}
\crefname{lem}{Lemma}{Lemmas}
\crefname{prop}{Proposition}{Propositions}
\crefname{rem}{Reminder}{Reminders}
\crefname{defi}{Definition}{Definitions}
   \def\@citecolor{blue}%
   \def\@urlcolor{blue}%
   \def\@linkcolor{blue}%
\def\orcidID#1{\smash{\href{http://orcid.org/#1}{\protect\raisebox{-1.25pt}{\protect\includegraphics{ORCID_Color.eps}}}}}
\tikzset{
  math mode/.style={execute at begin node=\(,execute at end node=\)},
  tableau edge/.style={-, thin, solid},
  tableau node/.style={preprocess node content=\ltlset},
  step rule/.style={->,thick},
  state/.style={fill, circle, inner sep=2pt},
  sibling distance=4cm,
  level distance=1cm
}
\title[A FO logic char. of safety and co-safety languages]{%
  A first-order logic characterization of\texorpdfstring{\\}{ }safety and co-safety languages%
}
\author[A.~Cimatti]{Alessandro Cimatti\lmcsorcid{0000-0002-1315-6990}}[a]
\author[L.~Geatti]{Luca Geatti\lmcsorcid{0000-0002-7125-787X}}[b]
\author[N.~Gigante]{Nicola Gigante\lmcsorcid{0000-0002-2254-4821}}[c]
\author[A.~Montanari]{Angelo Montanari\lmcsorcid{0000-0002-4322-769X}}[b]
\author[S.~Tonetta]{Stefano Tonetta\lmcsorcid{0000-0001-9091-7899}}[a]
\address{Fondazione Bruno Kessler, Via Sommarive, 18, Povo, Trento, 38123, Italy}	
\email{\{cimatti,tonettas\}@fbk.eu}
\address{University of Udine, Via delle Scienze 206, Udine, 33100, Italy} 
\email{\{luca.geatti,angelo.montanari\}@uniud.it}
\address{Free University of Bozen-Bolzano, Piazza Università, 1, Bolzano, 39100, Italy}	
\email{gigante@inf.unibz.it}
\begin{document}

\maketitle

\begin{abstract}
  \emph{Linear Temporal Logic} (\LTL) is one of the most popular temporal
  logics and comes into play in a variety of branches of computer
  science.
  Among the various reasons of its widespread use there are its strong foundational properties:
  \LTL is equivalent to counter-free $\omega$-automata, to star-free
  $\omega$-regular expressions, and (by Kamp's theorem) to the
  \emph{First-Order Theory of Linear Orders} (\FOTLO).
  Safety and co-safety languages, where a finite prefix suffices to
  establish whether a word does not belong or belongs to the language,
  respectively, play a crucial role in lowering the complexity of problems
  like model checking and reactive synthesis for \LTL.
  \safetyltl (resp., \cosafetyltl) is a fragment of \LTL where only the
  \emph{tomorrow}, the \emph{weak tomorrow} and the \emph{until} temporal
  modalities (resp., the \emph{tomorrow}, the \emph{weak tomorrow} and the
  \emph{release} temporal modalities) are allowed, that recognises safety
  (resp., co-safety) languages only.

  The main contribution of this paper is the introduction of a fragment of
  \FOTLO, called \SafetyFO, and of its dual \coSafetyFO, which are
  \emph{expressively complete} with respect to the \LTL-definable safety
  and co-safety languages.
  We prove that they exactly characterize
  \safetyltl and \cosafetyltl, respectively, a result that joins Kamp's theorem, and
  provides a clearer view of the characterization of (fragments of) \LTL in
  terms of first-order languages.
  In addition, it gives a direct, compact, and self-contained proof that
  any safety language definable in \LTL is definable in \safetyltl as well.
  As a by-product, we obtain some interesting results on the expressive
  power of the \emph{weak tomorrow} operator of \safetyltl, interpreted over
  finite and infinite words. Moreover, we prove that, when interpreted
  over finite words, \safetyltl (resp. \cosafetyltl) devoid of the \emph{tomorrow} (resp., \emph{weak
  tomorrow}) operator captures the safety (resp., co-safety) fragment of
  \LTL over finite words.

  We then investigate some formal properties of \SafetyFO and \coSafetyFO:
  \begin{enumerate*}[label=(\roman*)]
    \item we study their succinctness with respect to their modal
      counterparts, namely, \safetyltl and \cosafetyltl;
    \item we illustrate an important practical application of them in the
      context of reactive synthesis;
    \item we compare them with expressively equivalent first-order
      fragments.
  \end{enumerate*}
  
  Last but not least, we provide different characterizations of the
  (co-)safety fragment of \LTL in terms of temporal logics, automata, and
  regular expressions.

  %
\end{abstract}


\section{Introduction}
\label{sec:intro}

\emph{Linear Temporal Logic} (\LTL) is the de-facto standard logic for
system specifications~\cite{pnueli1977temporal}. It is a modal logic that is usually
interpreted over infinite state sequences, but the finite-words semantics has
recently gained attention as well~\cite{DeGiacomoV13,DeGiacomoV15}. 
The widespread use of \LTL is due to its simple syntax and semantics,  and to 
its strong foundational properties. Among them, we would like to  mention the 
seminal work by Kamp \cite{kamp1968tense} and Gabbay \etal \cite{gabbay1980temporal} 
on its expressive completeness, that is, \LTL-definable languages 
are exactly those definable in the first-order fragment of the monadic
second-order theory of linear orders \cite{buchi1990decision} (\FOTLO
for short).

In formal verification, an important class of specifications is
that of \emph{safety languages}. They are languages of infinite words
where a finite prefix suffices to establish whether a word does not belong to
the language. As an example, the set of all and only those infinite sequences where some particular bad event never happens can be regarded as a safety language. 
In the dual \emph{co-safety} languages (sometimes called \emph{guarantee} languages), a finite prefix is sufficient to tell whether a word \emph{belongs} to the language, \eg when some desired event is mandated to eventually happen.
Safety and co-safety languages are important for verification, model-checking,
monitoring, and automated synthesis, because they capture a variety of real-world
requirements while being much simpler to deal with
algorithmically~\cite{kupferman2001model,biere2002liveness,ZhuTLPV17}.

\safetyltl is the fragment of \LTL where only the \emph{tomorrow}, the
\emph{weak tomorrow} and the \emph{until} temporal modalities are allowed.
Similarly, its dual \cosafetyltl is obtained by only allowing the
\emph{tomorrow}, the \emph{weak tomorrow} and the \emph{release}
modalities. It has been proved by Chang \etal~\cite{ChangMP92} that
\safetyltl and \cosafetyltl define exactly the safety and co-safety
languages that are definable in \LTL, respectively.

The paper consists of four parts.

In the first part, we provide a novel characterization of \LTL-definable safety
languages, and of their duals, in terms of a fragment of \FOTLO, called
\SafetyFO, and of its dual \coSafetyFO. We argue that they have a very natural
syntax, and we prove that they are \emph{expressively complete} with respect to
\LTL-definable safety and co-safety languages.
We first prove the correspondence between \coSafetyFO and \cosafetyltl,
which extends naturally to their duals and can be viewed as a version
of Kamp's theorem \cite{kamp1968tense} specialized for safety and co-safety
properties. Such a result provides a clearer picture of the correspondence
between (fragments of) temporal and first-order logics.
Then, we exploit it to prove the correspondence between co-safety languages
definable in \LTL and \coSafetyFO, thus establishing also the equivalence
between the former and \cosafetyltl.
This gives a new proof of the fact that \safetyltl captures exactly the set of
\LTL-definable safety languages~\cite{ChangMP92}, which can be viewed as
another contribution of the paper.

The interest of the latter proof is twofold: on the one hand, the original proof by
Chang \emph{et~al.}~\cite{ChangMP92} is only sketched and it relies on two
non-trivial translations scattered across different
sources~\cite{zuck1986past,ShermanPH84}; on the other hand, such an equivalence
result seems not to be very much known, as some authors presented the problem as
open as lately as 2021~\cite{ZhuTLPV17,de2021finite}. 
Thus, a compact and self-contained proof of the result seems to be a useful
contribution for the community. It is worth to note that both proofs build
on the fact that safety/co-safety languages can be captured by formulas of
the form $\ltl{G\alpha}$/$\ltl{F\alpha}$ with $\alpha$ pure-past, but, after
that, the two proofs significantly diverge.
At the end of this part, as a by-product, 
we give some results that assess the expressive power of the \emph{weak tomorrow}
operator of \safetyltl when interpreted over finite \emph{vs.}\ infinite
words.


The second part 
is devoted to the safety and co-safety
fragments of \LTL interpreted over finite words. We show that the logic
obtained from \safetyltl (resp. \cosafetyltl) by forbidding the \emph{tomorrow} (resp.,
\emph{weak-tomorrow}) operator captures the set of safety (resp., co-safety)
properties of \LTL over finite words. This provides a clearer view of
which fragments of \LTL over finite words characterize the safety and
co-safety fragments.

In the third part, 
we study some formal properties of \coSafetyFO and \SafetyFO.  We begin by studying 
the succinctness of \coSafetyFO with respect to \cosafetyltl. We first show 
that there is a linear-size equivalence-preserving translation from \cosafetyltl to
\coSafetyFO. Then, we show 
that the proposed translation from \cosafetyltl to \coSafetyFO 
that we exploit to prove 
the expressive equivalence between the two formalisms is nonelementary.  
Next, we illustrate an interesting practical application of \coSafetyFO to reactive synthesis from temporal specifications.  Finally, we
compare \coSafetyFO with another fragment of \FOTLO that has been proved
to be expressively equivalent to the co-safety fragment of
\LTL~\cite{thomas1988safety}. Naturally, all the above results can be
dualized for the case of \SafetyFO.

In the fourth and last part, 
we summarize the other
characterizations of the (co)safety fragment of \LTL that have been
proposed in the literature so far, that is, those in terms of
\begin{enumerate*}[label=(\roman*)]
  \item temporal logics,
  \item automata, and
  \item regular expressions.
\end{enumerate*}

The paper is organized as follows. \cref{sec:preliminaries} provides some
background knowledge. 
\cref{sec:fragments} introduces \SafetyFO and \coSafetyFO and proves their
correspondence with \safetyltl and \cosafetyltl, respectively.  Then, \cref{sec:proof}
proves their correspondence with the set of safety and co-safety languages
definable in \LTL, thus providing a compact and self-contained proof of the
equivalence between \safetyltl and \LTL-definable safety languages. Some
properties of the \emph{weak next} operator are outlined as well.
\cref{sec:finite} proves the expressive completeness of the
fragment of \LTL over finite words devoid of the \emph{tomorrow}
(resp., \emph{weak-tomorrow}) operator and the safety (resp., co-safety)
fragment of \LTL over finite words.
\cref{sec:comp} compares \coSafetyFO with related fragments and describes
a practical application of \coSafetyFO to reactive synthesis.
\cref{sec:char} summarizes the state of the art about different
characterizations of the (co)safety fragment of \LTL. Finally, \cref{sec:conclusions} 
provides an assessment of the work done and discusses future work.

The paper is a revised and largely extended version of~\cite{DBLP:conf/fossacs/CimattiGGMT22}. In particular, the whole second, third, and 
fourth parts of the paper were not present in~\cite{DBLP:conf/fossacs/CimattiGGMT22}.


\section{Preliminaries}
\label{sec:preliminaries}

\* 
  Scaletta:
  * Safety and co-safety languages
  * LTL, SafetyLTL, coSafetyLTL
  * First-order logic over words
  * Known results
*/

Let $A$ be a finite alphabet. We denote by $A^*$ and $A^\omega$ the set of all
finite and infinite words over $A$, respectively. Moreover, we let
$A^+=A^*\setminus\set{\epsilon}$, where $\epsilon$ is the empty word. Given a
word $\sigma\in A^*$, we denote by $|\sigma|$ the length of $\sigma$. For an
infinite word $\sigma\in A^\omega$, $|\sigma|=\omega$. Given a (finite or
infinite) word $\sigma$, we denote by $\sigma_i\in A$, for $0\le i < |\sigma|$,
the letter at the $i$-th position of the word. For $0\le i\le j < |\sigma|$,
we denote by $\sigma_{[i,j]}$ the subword that starts at the $i$-th position (letter)
 of the word and ends at the $j$-th one, extrema included. By $\sigma_{[i,\infty]}$ we
denote the suffix of $\sigma$ starting at the $i$-th position. Given a word
$\sigma\in A^*$ and $\sigma'\in A^*\cup A^\omega$, we denote the
\emph{concatenation} of the two words as $\sigma\cdot\sigma'$, or simply
$\sigma\sigma'$. A \emph{language} $\lang$, with $\lang\subseteq A^*$ or
$\lang\subseteq A^\omega$, is a set of words. Given two languages $\lang$ and
$\lang'$ with $\lang\subseteq A^*$ and either $\lang'\subseteq A^*$ or
$\lang'\subseteq A^\omega$, we define $\lang\cdot\lang'$ as the set $\set{\sigma\cdot\sigma'
\suchthat \text{$\sigma\in\lang$ and $\sigma'\in\lang'$}}$. Given a finite word
$\sigma=\sigma_0\ldots \sigma_k$, let $\sigma^r=\sigma_k\ldots \sigma_0$ be the
reverse of $\sigma$, and given a language of finite words $\lang$, let
$\lang^r=\set{\sigma^r\suchthat \sigma\in \lang}$. We are now ready to define
\emph{safety} and \emph{co-safety} languages.

\begin{defi}[Safety language \cite{kupferman2001model,thomas1988safety}]
  \label{def:safelang}
  Let $\lang \subseteq A^\omega$. We say that $\lang$ is a \emph{safety
  language} if and only if for all 
  $\sigma \in A^\omega$, it holds
  that if $\sigma \not \in \lang$, then there exists $i\in\N$ such that, for
  all $\sigma'\in A^\omega$, $\sigma_{[0,i]}\cdot\sigma' \not \in \lang$. The 
  class of safety languages is denoted by \SAFETY.
\end{defi}

\begin{defi}[Co-safety language \cite{kupferman2001model,thomas1988safety}]
  \label{def:cosafelang}
  Let $\lang \subseteq A^\omega$. We say that $\lang$ is a \emph{co-safety
  language} if and only if for all 
  $\sigma \in A^\omega$, it holds
  that if $\sigma \in \lang$, then there exists $i\in\N$ such that, for
  all $\sigma'\in A^\omega$, $\sigma_{[0,i]}\cdot\sigma' \in \lang$. The 
  class of co-safety languages is denoted by \coSAFETY.
\end{defi}

\emph{Linear Temporal Logic with Past} (\LTLP) is a temporal logic interpreted over
infinite or finite words. Given a set  of proposition letters $\Sigma$, the
set of \LTLP formulas $\phi$ is generated by the following grammar:
\begin{align*}
  \phi ::= p & \choice \neg\phi \choice \phi\lor\phi \choice 
                           \phi\land\phi  & \text{Boolean connectives} \\
                   & \choice \ltl{X\phi} \choice \ltl{wX\phi} 
                     \choice \ltl{\phi U \phi}
                     \choice \ltl{\phi R \phi} & \text{future modalities} \\
                   & \choice \ltl{Y\phi} \choice \ltl{Z\phi} 
                     \choice \ltl{\phi S \phi}
                     \choice \ltl{\phi T \phi} & \text{past modalities}
\end{align*}
where $p\in\Sigma$ and $\phi$ is an \LTLP formula. We say that
an \LTLP formula is a \emph{pure future} formula if it does not make use of past
modalities, and that it is \emph{pure past} if it does not make use of future
modalities. Let us denote by \LTL the set of pure future formulas, and by \LTLFP
the set of pure past formulas.

\LTLP is interpreted over \emph{state sequences}, which are finite or infinite
words over $2^\Sigma$. Given a state sequence $\sigma\in(2^\Sigma)^+$ or
$\sigma\in(2^\Sigma)^\omega$, the \emph{satisfaction} of a formula $\phi$ by
$\sigma$ at a time point $i\ge0$, denoted by $\sigma,i\models\phi$, is defined
as follows:

\begin{conditions}
  \item $\sigma,i \models p$                 & $p\in\state_i$;
  \item $\sigma,i \models \ltl{\neg\phi}$       & $\sigma,i \not\models \phi$;
  \item $\sigma,i \models \ltl{\phi_1 || \phi_2}$  &
          $\sigma,i \models \phi_1$ or $\sigma,i \models \phi_2$;
  \item $\sigma,i \models \ltl{\phi_1 && \phi_2}$ &
          $\sigma,i \models \phi_1$ and $\sigma,i \models \phi_2$;
  \item $\sigma,i \models \ltl{X\phi}$     & 
          $i+1<|\sigma|$ and  $\sigma,i+1\models \phi$;
  \item $\sigma,i \models \ltl{wX\phi}$     & 
          either $i+1=|\sigma|$ or $\sigma,i+1\models \phi$;
  \item $\sigma,i \models \ltl{Y\phi}$    &
          $i > 0$ and $\sigma,i-1\models \phi$;
  \item $\sigma,i \models \ltl{Z\phi}$    &
          either $i = 0$ or $\sigma,i-1\models \phi$;
  \item $\sigma,i \models \ltl{\phi_1 U \phi_2}$  &
          there exists $i\le j<|\sigma|$ such that $\sigma,j\models\phi_2$,
          \newline
          and $\sigma,k\models\phi_1$ for all $k$, with $i \le k < j$;
  \item $\sigma,i \models \ltl{\phi_1 S \phi_2}$    &
          there exists $j\le i$ such that $\sigma,j\models\phi_2$,\newline
          and $\sigma,k\models\phi_1$ for all $k$, with $j < k \le i$;
  \item $\sigma,i \models \ltl{\phi_1 R \phi_2}$  &
          either $\sigma,j\models\phi_2$ for all $i\le j < |\sigma|$, or there 
          exists \newline
          $k\ge i$ such that $\sigma,k\models\phi_1$ and\newline
          $\sigma,j\models\phi_2$ for all $i\le j \le k$;
  \item $\sigma,i \models \ltl{\phi_1 T \phi_2}$  &
          either $\sigma,j\models\phi_2$ for all $0\le j \leq i$, or there
          exists\newline $k \le i$ such that $\sigma,k\models\phi_1$ and 
          \newline
          $\sigma,j\models\phi_2$ for all $i\ge j \ge k$
\end{conditions}

Some connectives/operators of the language
can be defined in terms of a small number of basic ones. In particular,
$\phi_1\land\phi_2$ (conjunction) can be defined in terms of disjunction
as $\neg(\neg\phi_1\lor\neg\phi_2)$, $\ltl{\phi_1 R \phi_2}$ (the \emph{release} 
operator) in terms of the \emph{until} one as 
$\ltl{\lnot(\lnot\phi_1 U\lnot\phi_2)}$, and $\ltl{\phi_1 T
\phi_2}$ (the \emph{triggered} operator) in terms of 
the \emph{since} one as $\ltl{\lnot(\lnot\phi_1 S \lnot\phi_2)}$. 
Nevertheless, we consider
all these connectives and operators as primitive ones in order to be able to put any
formula in \emph{negated normal form} (NNF), that is, a form where negation is only
applied to proposition letters. 
Note that the syntax includes both a
\emph{tomorrow} ($\ltl{X\phi}$) and a \emph{weak tomorrow} ($\ltl{wX\phi}$)
operator, and, similarly, a \emph{yesterday} ($\ltl{Y\phi}$) and a \emph{weak
yesterday} ($\ltl{Z\phi}$) operator. 
Finally, standard shortcut operators are available such as the \emph{eventually}
($\ltl{F\phi}\equiv\ltl{\true U \phi}$) and \emph{always}
($\ltl{G\phi}\equiv\ltl{\neg F\neg\phi}$) future modalities, and the \emph{once}
($\ltl{O\phi}\equiv\ltl{\true S\phi}$) and \emph{historically}
($\ltl{H\phi}\equiv\ltl{\neg O\neg\phi}$) past modalities. 

We say that a state sequence $\sigma$ satisfies $\phi$, written
$\sigma\models\phi$, if $\sigma,0\models\phi$. 
If $\phi$ belongs to \LTLFP, that is, the pure past fragment of \LTLP, then we
interpret $\phi$ only on \emph{finite} state sequences and we say that
$\sigma \in (2^\Sigma)^+$ is a model of $\phi$ if and only if
$\sigma,|\sigma|-1 \models \phi$, \ie, each $\phi$ in \LTLFP is
interpreted at the \emph{last} state of a finite state sequence.

Notice that, when interpreted over an infinite word, the semantics of the \emph{tomorrow} and
\emph{weak tomorrow} operators is the same. The \emph{language}
of $\phi$, denoted by $\lang(\phi)$, is the set of words
$\sigma\in(2^\Sigma)^\omega$ such that $\sigma\models\phi$. The
\emph{language of finite words} of $\phi$, denoted by $\langfin(\phi)$, is
the set of finite words $\sigma\in(2^\Sigma)^+$ such that
$\sigma\models\phi$. Given a logic $\mathsf{L}$, 
we denote by $\sem{\mathsf{L}}$ the set of languages $\lang$ such that there is
a formula $\phi\in\mathsf{L}$ such that $\lang=\lang(\phi)$, and 
by $\semfin{\mathsf{L}}$ the set of languages of finite words $\lang$ such
that there is a formula $\phi\in\mathsf{L}$ such that
$\lang=\langfin(\phi)$ ($\semfin{\LTL}$ is usually referred to by \LTLf
in the literature~\cite{DeGiacomoV13}). It is known that \LTLf  and pure past \LTL
(\LTLFP) have the same expressive power \cite{lichtenstein1985glory,thomas1988safety}.

\begin{prop}
\label{prop:reverse}
  $\semfin{\LTL} = \semfin{\LTLFP}$.
\end{prop}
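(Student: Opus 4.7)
The plan is to exploit the syntactic mirror symmetry between the future and past fragments together with closure of the relevant class of languages under word reversal.

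First, I would define a mirror translation $(\cdot)^r \colon \LTL \to \LTLFP$ by structural induction, leaving propositions and Boolean connectives unchanged and swapping each future modality with its past counterpart: $\ltl{X} \leftrightarrow \ltl{Y}$, $\ltl{wX} \leftrightarrow \ltl{Z}$, $\ltl{U} \leftrightarrow \ltl{S}$, and $\ltl{R} \leftrightarrow \ltl{T}$. A dual mirror $(\cdot)^r \colon \LTLFP \to \LTL$ is defined by the inverse substitutions. The crucial technical step is a routine induction on the formula structure showing that, for every finite state sequence $\sigma$ and every $0 \le i < |\sigma|$,
\[
  \sigma, i \models \phi \quad \text{iff} \quad \sigma^r,\, |\sigma|-1-i \models \phi^r.
\]
Specializing to $i = 0$ and recalling that \LTLFP-formulas are evaluated at the last position of a finite word, this yields $\langfin(\phi^r) = (\langfin(\phi))^r$, and symmetrically for the translation in the other direction.

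From this I conclude that the two classes are reverses of one another: $\semfin{\LTLFP} = \{\lang^r \mid \lang \in \semfin{\LTL}\}$ and $\semfin{\LTL} = \{\lang^r \mid \lang \in \semfin{\LTLFP}\}$. To close the proof, it suffices to observe that $\semfin{\LTL}$ is closed under language reversal. For this I would appeal to the classical characterization, attributed to Kamp and adapted to finite words, that $\semfin{\LTL}$ coincides with the class of star-free regular languages over $2^\Sigma$, which is manifestly closed under reversal since the class is defined by Boolean operations and concatenation, all of which commute with the reversal operator.

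I expect the main obstacle to be not the mirror induction itself, which is mechanical once the base cases and the temporal cases are set up, but rather the appeal to the external star-free characterization needed to obtain closure under reversal. A fully self-contained alternative would be to construct a direct \emph{language-preserving} (rather than reversing) translation from \LTL to \LTLFP, but this is considerably more delicate because a future formula at position $0$ and a past formula at position $|\sigma|-1$ scan the word in opposite directions, so an operator-by-operator translation does not suffice and one would have to rely on the underlying star-free / first-order equivalence anyway. Given that the paper cites \cite{lichtenstein1985glory,thomas1988safety} exactly for this reason, the mirror-plus-reversal route appears to be the intended and most economical argument.
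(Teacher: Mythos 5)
Your argument is correct. Note, however, that the paper does not prove \cref{prop:reverse} at all: it is imported as a known result with citations to Lichtenstein~\etal and Thomas. The mirror translation (swapping $\ltl{X}/\ltl{Y}$, $\ltl{wX}/\ltl{Z}$, $\ltl{U}/\ltl{S}$, $\ltl{R}/\ltl{T}$), the induction giving $\sigma,i\models\phi$ iff $\sigma^r,|\sigma|-1-i\models\phi^r$, and the appeal to closure of star-free languages under reversal together constitute precisely the standard proof --- and indeed the very same reasoning the authors deploy explicitly inside the proof of \cref{lemma:cosafetytoltl}. Your closing caveat is also fair: the step ``$\semfin{\LTL}$ equals the star-free languages, hence is closed under reversal'' is an external ingredient (McNaughton--Papert/Kamp over finite words), so the proof is not self-contained; but since the paper itself treats the proposition as a literature result, this is exactly the intended level of rigor.
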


We now define the two fragments of \LTL that are the subject of this paper.
\begin{defi}[\safetyltl and \cosafetyltl \cite{sistla1994safety}]
\label{def:cosafetyltl}
  The logic \safetyltl (resp., the logic \cosafetyltl) is the fragment of \LTL where, 
  for formulas in \emph{negated normal form}, only the \emph{tomorrow}, \emph{weak
  tomorrow}, and \emph{release} (resp., \emph{until}) temporal modalities are
  allowed.
\end{defi}

Note that both \safetyltl and \cosafetyltl contain only \emph{future}
temporal operators.
We also define the logic $\cosafetyltlnoweak$ as the logic $\cosafetyltl$
devoid of the \emph{weak tomorrow} ($\ltl{wX}$) operator (this logic will
play a central role in our proofs). Similarly, we define $\safetyltlnonext$
as the logic \safetyltl devoid of the \emph{tomorrow} ($\ltl{X}$) operator.

In the next section, we introduce two fragments of the \emph{First-Order Theory of
Linear Orders} \cite{buchi1960weak,buchi1990decision}, namely \FOTLO (or simply 
$\FO$ for short). 
Given an alphabet $\Sigma$, $\FO$ is a first-order language with equality over the
signature $\seq{<,\set{P}_{p\in\Sigma}}$, and is interpreted over structures
$\mathcal{M} = \seq{D^{\mathcal{M}},<^{\mathcal{M}},\set{P^{\mathcal{M}}}_{p\in\Sigma}}$, where
$D^{\mathcal{M}}$ 
is either the set $\N$ of natural numbers or a
prefix $\set{0,\ldots,n}$ thereof, and $<^{\mathcal{M}}$ is the usual ordering
relation over natural numbers. A \emph{sentence} of \FO is a formula of \FO
with no free variables. Given an $\FO$ formula $\phi(x_0,\ldots,x_m)$,
with $m+1$ free variables, the satisfaction of $\phi$ by a first-order structure
$\mathcal{M}$ when $x_0=n_0,\ldots,x_m=n_m$, denoted by
$\mathcal{M},n_0,\ldots,n_m\models\phi(x_0,\ldots,x_m)$, is defined according
the standard first-order semantics. State sequences over $\Sigma$ map naturally
into such structures. Given a word $\sigma\in(2^\Sigma)^*$ or
$\sigma\in(2^\Sigma)^\omega$, we denote by $(\sigma)^s$ the corresponding
first-order structure. Given a formula $\phi(x)$ with exactly one free variable $x$,
the \emph{language} of $\phi$, denoted by $\lang(\phi)$, is the set of words
$\sigma\in(2^\Sigma)^\omega$ such that $(\sigma)^s,0\models\phi$. Similarly, the
\emph{language of finite words} of $\phi$, denoted by $\langfin(\phi)$, is the
set of finite words $\sigma\in(2^\Sigma)^+$ such that $(\sigma)^s\models\phi$.
We denote by $\sem{\FO}$ and $\semfin{\FO}$ the set of languages of respectively 
infinite and finite words definable by a \FO formula. 

Given a class of languages of finite words $\semfin{\mathsf{L}}$, we denote by 
$\semfininf{\mathsf{L}}$ the set of languages 
$\set{\lang \cdot (2^\Sigma)^\omega \suchthat \lang\in\semfin{\mathsf{L}}}$.
From now on, given a formula $\phi \in L$, we denote by $|\phi|$ the
number of symbols in $\phi$.

We conclude the section by recalling some fundamental known results.
\begin{prop}[%
  Kamp~\cite{kamp1968tense} and Gabbay~\cite{gabbay1980temporal}%
] \label{th:ltlfo}~\\
  $\sem{\LTL} = \sem{\FO}$ and $\semfin{\LTL} = \semfin{\FO}$.
\end{prop}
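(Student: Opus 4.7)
The plan is to prove the two equalities by establishing the two inclusions in each case. The inclusions $\sem{\LTL}\subseteq\sem{\FO}$ and $\semfin{\LTL}\subseteq\semfin{\FO}$ are handled by a compositional translation proceeding by structural induction on the shape of an \LTL formula $\phi$, producing an equivalent \FO formula $\phi^\star(x)$ with a single free variable representing the current position. Propositional atoms $p$ translate to $P(x)$; Boolean connectives translate compositionally; $\ltl{X\phi}$ translates to $\exists y\,(x<y \land \lnot\exists z\,(x<z\land z<y) \land \phi^\star(y))$; $\ltl{\phi_1 U \phi_2}$ translates to $\exists y\,(x\le y \land \phi_2^\star(y) \land \forall z\,(x\le z\land z<y \to \phi_1^\star(z)))$; the remaining future and past operators are handled analogously. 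A routine induction shows $\sigma,i\models\phi$ iff $(\sigma)^s,i\models\phi^\star(x)$, which also settles the finite-word case since the translation is agnostic to whether the domain of $(\sigma)^s$ is $\N$ or a finite prefix.

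The converse inclusions, which together constitute Kamp's theorem, are substantially harder. The approach I would take is the one popularized by Gabbay, based on the \emph{separation property}: every formula of \LTLP is equivalent to a Boolean combination of pure past, present and pure future formulas when evaluated at an arbitrary position. The proof strategy is then:
\begin{enumerate}
  \item Establish expressive completeness of \LTLP for \FO by induction on the quantifier rank of \FO formulas, treating free variables as position markers and using Ehrenfeucht--Fra\"iss\'e games over linear orders to show that every $n$-equivalence class of positions is definable by an \LTLP formula.
  \item Apply separation to any \LTLP formula evaluated at position $0$: the pure past part is trivial at the origin (so becomes a Boolean constant on a suitable extension, or, more directly, the past can be eliminated since there are no positions strictly before $0$), reducing it to a pure future \LTL formula. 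This yields $\sem{\FO}\subseteq\sem{\LTL}$.
  \item For finite words, reduce to \cref{prop:reverse} ($\semfin{\LTL}=\semfin{\LTLFP}$): given an \FO sentence $\phi$, use the analogous separation-based argument to produce an equivalent \LTLP formula, then interpret it at the last position and convert to \LTLf by the known equi-expressiveness with pure past \LTL.
\end{enumerate}

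The main obstacle is the separation theorem itself, whose proof requires a nontrivial case analysis over nested temporal operators combined with an inductive measure (typically the temporal height together with the alternation between past and future sub-formulas). An equivalent but more automata-theoretic route, which could be used in place of the model-theoretic argument, is to invoke the classical characterizations $\sem{\FO}=\text{star-free }\omega\text{-languages}=\text{languages of counter-free B\"uchi automata}=\sem{\LTL}$, due to McNaughton--Papert, Thomas, and Diekert--Gastin, with the analogous chain for finite words; this bypasses separation but substitutes for it several equally deep theorems. Either route delivers both equalities in the statement.
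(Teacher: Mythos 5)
The paper does not actually prove this proposition: it is recalled in the preliminaries as a classical result and attributed to Kamp and to Gabbay \etal, so there is no internal proof to compare yours against. Measured against the literature the paper cites, your outline is faithful. The inclusion $\sem{\LTL}\subseteq\sem{\FO}$ (and its finite-word analogue) is indeed routine by the compositional translation you give; your clauses for $\ltl{X}$ and $\ltl{U}$ are correct and, as you note, insensitive to whether the domain is $\N$ or a finite prefix, which matches the paper's semantics where $\ltl{X\phi}$ requires a successor position to exist. The converse inclusion is the substance of Kamp's theorem, and the two routes you name (Gabbay-style separation, or the chain through star-free languages and counter-free automata) are exactly the standard ones.

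That said, read as a self-contained proof your proposal has a genuine gap, and you are candid about it: everything difficult is concentrated in your step 1 (expressive completeness of \LTLP for \FO via an induction on quantifier rank with Ehrenfeucht--Fra\"iss\'e games) and in the separation theorem, neither of which you actually prove. Naming the inductive measure (temporal height together with past/future alternation) is not the same as carrying out the case analysis, and the automata-theoretic alternative, by your own admission, merely substitutes several equally deep theorems. Two smaller remarks: in step 2, eliminating the past at position $0$ is correct but deserves one sentence of justification (at the origin every $\ltl{Y}$-subformula is false and every $\ltl{Z}$-subformula is true, so pure-past components collapse to constants); and in step 3 the appeal to \cref{prop:reverse} is legitimate, since the paper also imports that equivalence from the literature, but it makes the finite-word case rest on yet another unproved classical fact. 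None of this is a defect relative to the paper, which imports \cref{th:ltlfo} wholesale; it just means your text is an accurate map of the proof rather than the proof itself.
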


Finally, we state a normal form for \LTL-definable safety/co-safety
languages.

\begin{prop}[Chang \emph{et~al.}~\cite{ChangMP92}, Thomas~\cite{thomas1988safety}]
  \label{prop:galphafalpha}
  A language $\lang\in\sem{\LTL}$ is \emph{safety} (resp., \emph{co-safety}) if
  and only if it is the language of a formula of the form $\ltl{G\alpha}$
  (resp., $\ltl{F\alpha}$), where $\alpha\in\LTLFP$.
\end{prop}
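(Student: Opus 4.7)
The plan is to prove both directions of the co-safety statement; the safety case follows by negation, using that \LTL is closed under negation and that $\ltl{G\alpha} \equiv \ltl{\lnot F \lnot\alpha}$, with $\ltl{\lnot\alpha}$ still in \LTLFP once negations are pushed inside.

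For the ``if'' direction, assume $\lang = \lang(\ltl{F\alpha})$ with $\alpha \in \LTLFP$. If $\sigma \in \lang$, pick $i \ge 0$ with $\sigma, i \models \alpha$. Since $\alpha$ is pure past, its truth at position $i$ depends only on the prefix $\sigma_{[0,i]}$. Hence for every continuation $\sigma' \in (2^\Sigma)^\omega$ we have $\sigma_{[0,i]} \cdot \sigma', i \models \alpha$, so $\sigma_{[0,i]} \cdot \sigma' \in \lang$. This matches \cref{def:cosafelang}, showing $\lang$ is co-safety.

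For the ``only if'' direction, assume $\lang \in \sem{\LTL}$ is co-safety. Introduce the \emph{good prefix} language
\[
W(\lang) = \{\, w \in (2^\Sigma)^+ \mid w \cdot (2^\Sigma)^\omega \subseteq \lang \,\}.
\]
By co-safety, $\sigma \in \lang$ iff some prefix of $\sigma$ lies in $W(\lang)$. The key step is to argue that $W(\lang) \in \semfin{\LTL}$. I would invoke Kamp's theorem (\cref{th:ltlfo}) together with the standard identification of $\sem{\LTL}$ with the star-free $\omega$-regular languages: take the minimal deterministic counter-free automaton recognising $\lang$ and observe that $W(\lang)$ is the finite-word language of runs reaching a state from which every continuation is accepted. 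Counter-freeness of the automaton transfers to this finite-word language, so $W(\lang)$ is star-free and therefore in $\semfin{\LTL}$. By \cref{prop:reverse}, there is then a pure past formula $\alpha \in \LTLFP$ with $\langfin(\alpha) = W(\lang)$. Since pure past formulas are evaluated at the last position of a finite word, for infinite $\sigma$ we have $\sigma, i \models \alpha$ iff $\sigma_{[0,i]} \in W(\lang)$, whence
\[
\sigma \in \lang \iff \exists i \ge 0 : \sigma_{[0,i]} \in W(\lang) \iff \exists i \ge 0 : \sigma, i \models \alpha \iff \sigma \models \ltl{F\alpha}.
\]

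The principal obstacle is the claim that $W(\lang) \in \semfin{\LTL}$: everything else is essentially bookkeeping, but this step funnels through the deep Kamp-style correspondence between \LTL and counter-free/star-free $\omega$-regular languages, and through the nontrivial fact that the automaton construction extracting $W(\lang)$ from a counter-free recogniser of $\lang$ preserves counter-freeness. Once that is granted, the equality $\sem{\LTLFP} = \semfin{\LTL}$ from \cref{prop:reverse} supplies the required pure past formula automatically.
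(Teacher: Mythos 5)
The paper does not actually prove this proposition: it is recalled as a known result and imported from Chang \etal and Thomas, so there is no in-paper proof to compare against. Your argument is essentially the one underlying the cited sources, and its structure is sound. The ``if'' direction is correct as written: the pure-past witness at position $i$ depends only on $\sigma_{[0,i]}$ and therefore survives any replacement of the suffix. The ``only if'' direction correctly reduces everything to showing that the good-prefix language $W(\lang)$ lies in $\semfin{\LTL}$, after which \cref{prop:reverse} supplies the pure-past formula and the chain of equivalences is routine. Note that your detour through $W(\lang)$ and the identity $\lang = W(\lang)\cdot(2^\Sigma)^\omega$ is in substance the paper's \cref{lemma:cosafetytoltl}, which the paper \emph{derives from} this proposition; you are in effect proving both at once, with no circularity since you never invoke that lemma.

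The step that needs tightening is exactly the one you flag, and the loose point is more specific than ``Kamp is deep.'' The phrase ``the minimal deterministic counter-free automaton recognising $\lang$'' is not a well-defined object for $\omega$-languages: there is no canonical minimal deterministic $\omega$-automaton, and deterministic B\"uchi automata do not capture all of $\sem{\LTL}$, so you must say which acceptance condition you determinize into (Muller/Rabin in general, or --- exploiting that $\lang$ is co-safety --- a deterministic terminal/reachability automaton as in the characterizations recalled in \cref{sec:char}), and you must cite the nontrivial fact that this determinization can be carried out while preserving counter-freeness of the semi-automaton. Granting a counter-free deterministic recognizer, the rest of your argument is correct: $W(\lang)$ is exactly the set of finite words driving the automaton into a state from which every infinite continuation is accepted, this set is recognized by the same counter-free semi-automaton with those states declared final, hence it is star-free by McNaughton--Papert and in $\semfin{\LTL}$ by Kamp. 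Do emphasize that determinism is essential here: for a nondeterministic recognizer, the states ``from which every continuation is accepted'' would not characterize $W(\lang)$.
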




\section{\SafetyFO and \coSafetyFO}
\label{sec:fragments}

In this section, we state and prove the main results of the paper: we define two simple fragments of \FO and
we show that they precisely capture \safetyltl and \cosafetyltl, respectively.
A summary of the achieved results 
is given in \cref{fig:summary}.


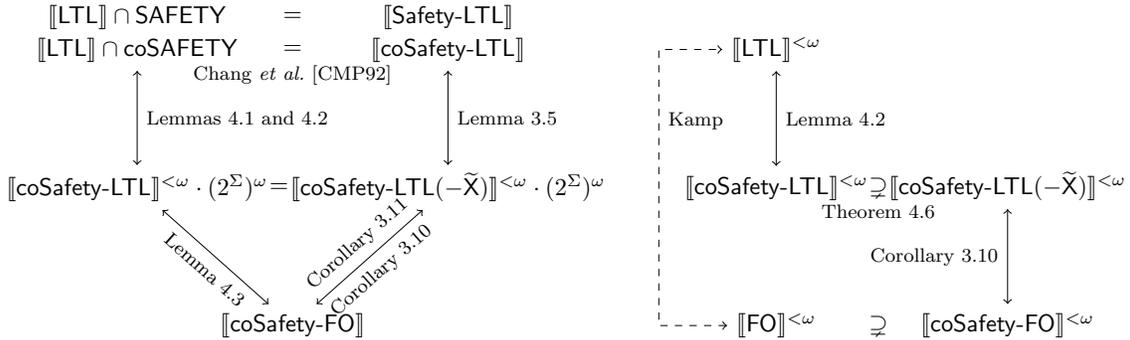
\begin{figure}
  \resizebox{\textwidth}{!}{
    \begin{tikzpicture}[
      sibling distance=4.5cm,
      level distance=2cm,
      edge label/.style={midway,font=\scriptsize},
      font=\small
    ]
      \node[math mode] (cosafety-fo) {\sem{\coSafetyFO}} [grow=up]
        child {
          node[math mode] (cosafetyltlnoweak)
            {\semfininf{\cosafetyltlnoweak}} edge from parent[draw=none]
          child {
            node[math mode] (cosafetyltl) {\sem{\cosafetyltl}} 
              edge from parent[draw=none]
          }
        } child {
          node[math mode] (cosafetyltlfininf)
            {\semfininf{\cosafetyltl}} edge from parent[draw=none]
          child {
            node[math mode] (cosafetylangs) 
              {\sem{\LTL}\cap\coSAFETY} edge from parent[draw=none]
          }
        };
  
      \path (cosafetylangs) node[above,yshift=1ex,math mode] (safetylangs) 
        {\sem{\LTL}\cap\SAFETY};
  
      \path (cosafetyltl) node[above,yshift=1ex,math mode] (safetyltl) 
        {\sem{\safetyltl}};
  
      \draw[<->] (cosafety-fo) -- (cosafetyltlfininf) 
        node[edge label,sloped,anchor=north] 
        {\cref{lem:cosafetyltlinfsonesfoguardinf}};
  
      \draw[<->] (cosafetyltlfininf) -- (cosafetylangs)
        node[edge label,anchor=west]
        {
          \cref{lemma:cosafetytoltl,lem:ltlfinequivcosafetyltlfin}
        };
  
      \draw[<->] (cosafety-fo) -- (cosafetyltlnoweak)
        node[edge label, sloped, anchor=south]
          {\cref{lemma:cosafetyfoseminf}}
        node[edge label, sloped, anchor=north] 
          {\cref{cor:cosafetyfoiscosafetyltlnoweak}};
  
      \draw[<->] (cosafetyltlnoweak) -- (cosafetyltl)
        node[edge label, anchor=west]
          {\cref{lem:cosafetyinfcosafetynoweakfin}};
  
      \path (cosafetyltlfininf) -- (cosafetyltlnoweak) node[midway] {$=$};
      
      \path (cosafety-fo |- cosafetyltl) node {$=$} 
        node[below = 0.1cm,font=\scriptsize] {
          Chang \emph{et~al.}~\cite{ChangMP92}
        };
  
      \path (cosafety-fo |- safetyltl) node {$=$};
  
      \path (cosafetyltlnoweak.east) ++(1cm,0)
        node[anchor=west,inner xsep=0pt] 
          (semfincosafetyltl) {$\semfin{\cosafetyltl}$};
      \path (semfincosafetyltl.east)
        node[anchor=west,inner xsep=0pt] (ne) {$\supsetneq$};
      \path (ne.east)
        node[anchor=west, inner xsep=0pt] 
          (semfincosafetyltlnoweak) {$\semfin{\cosafetyltlnoweak}$};
  
      \path (ne) node[below,yshift=-3pt, font=\scriptsize] { 
        \cref{thm:negativecosafetyinfinitefinite}
      };
  
      \path (semfincosafetyltl |- cosafetyltl)
        node (semfinltl) {$\semfin{\LTL}$};
  
      \path (semfincosafetyltl |- cosafety-fo)
        node (semfinfo) {$\semfin{\FO}$};
  
      \path(semfincosafetyltlnoweak |- semfinfo)
        node (semfincosafetyfo) {$\semfin{\coSafetyFO}$};
  
      \draw[<->] (semfinltl) -- (semfincosafetyltl)
        node[edge label, anchor=west] (semfinltlsemfinsafetyltllabel){
          \cref{lem:ltlfinequivcosafetyltlfin}
        };
  
      \draw[<->,dashed] (semfinfo) -- ++(-1.7,0) |- (semfinltl);

      \path (semfinltlsemfinsafetyltllabel.base west) ++(-1.7,0) 
        node[font=\scriptsize, anchor=base west] { Kamp };
  
      \draw[<->] (semfincosafetyfo) -- (semfincosafetyltlnoweak)
        node[edge label, anchor=east] {
          \cref{cor:cosafetyfoiscosafetyltlnoweak}
        };
  
      \path (ne |- semfincosafetyfo) node {$\supsetneq$};
      
    \end{tikzpicture}
  }
  \caption{
    Summary of the results 
    about languages over infinite words on the left and over finite words on the right. 
    Solid arrows are own results; dashed arrows are known from literature.
  }
  \label{fig:summary}
\end{figure}

\begin{defi}[\SafetyFO]
  The logic \SafetyFO is generated by the following grammar:
  \begin{align*}
    \mathit{atomic} & {} ::= x < y \choice x = y \choice x \ne y \choice P(x) 
                                     \choice \neg P(x) \\
    \phi & {}::= \mathit{atomic} 
                    \choice \phi\lor\phi 
                    \choice \phi\land\phi \choice 
                    \exists y(x < y < z \land \phi) \choice 
                    \forall y(x < y \implies \phi)
  \end{align*}
  where $x$, $y$, and $z$ are first-order variables, $P$ is a unary
  predicate, and $\phi_1$ and $\phi_2$ are \SafetyFO formulas.
\end{defi}

\begin{defi}[\coSafetyFO]
  The logic \coSafetyFO is generated by the following grammar:
  \begin{align*}
    \mathit{atomic} & {}::= x < y \choice x = y \choice x \ne y \choice P(x) 
                                     \choice \neg P(x) \\
    \phi & {}::= \mathit{atomic} 
                    \choice \phi\lor\phi 
                    \choice \phi\land\phi \choice 
                    \exists y(x < y \land \phi) \choice 
                    \forall y(x < y < z \implies \phi)
  \end{align*}
  where $x$, $y$, and $z$ are first-order variables, $P$ is a unary
  predicate, and $\phi_1$ and $\phi_2$ are \coSafetyFO formulas.
\end{defi}

We need to make a few observations on the syntax of the two fragments. First of
all, note how any formula of \SafetyFO is the negation of a formula of
\coSafetyFO and \viceversa, and how any formula in this fragments has at
least one free variable. 
Then, note that the two fragments are defined in
\emph{negated normal form}, \ie negation only appears on atomic formulas. The
particular kind of existential and universal quantifications allowed are the
culprit of these fragments. In particular, \SafetyFO restricts any existentially
quantified variable to be bounded between two free variables. The
same applies to universal quantification in \coSafetyFO. Moreover \SafetyFO and
\coSafetyFO formulas are \emph{future formulas}, \ie the quantifiers can only
range over values \emph{greater} than some free variables. These two
features are essential to precisely capture \safetyltl and \cosafetyltl.
Finally, note that the comparisons in the guards of the quantifiers are strict,
but non-strict comparisons can be used as well. In particular, $\exists y(x\le y
\land \phi)$ can be rewritten as $\phi[y/x]\lor \exists y(x < y \land \phi)$,
where $\phi[y/x]$ is the formula obtained by renaming all the free
occurrences of $y$ in $\phi$
with $x$. Similarly, $\forall z(x\le z \le y \implies \phi)$ can be rewritten as
$\phi[z/x]\land\phi[z/y]\land\forall z(x < z < y \implies \phi)$.

To prove the relationship between \safetyltl, \cosafetyltl, and these fragments,
we focus now on \coSafetyFO. By duality, all the results transfer to \SafetyFO.
We focus on \coSafetyFO because the unbounded quantification is existential, and
it is easier to reason about the existence of prefixes than on all the prefixes
at once.
We start by observing that, since the \emph{weak tomorrow} operator, over
infinite words, coincides with the \emph{tomorrow} operator, the following
holds.\fitpar
\begin{observation}
  \label{obs:observationnoweaknext}
  $\sem{\cosafetyltl}=\sem{\cosafetyltlnoweak}$
\end{observation}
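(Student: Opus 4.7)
The plan is to split the equality into the two inclusions. The inclusion $\sem{\cosafetyltlnoweak}\subseteq\sem{\cosafetyltl}$ is immediate from the definitions, since $\cosafetyltlnoweak$ is a syntactic fragment of $\cosafetyltl$: every formula of the former is already a formula of the latter, with the same semantics. The interesting direction is $\sem{\cosafetyltl}\subseteq\sem{\cosafetyltlnoweak}$, which I would handle by giving a syntactic rewriting $\phi\mapsto\phi'$ that replaces every subformula of the form $\ltl{wX\psi}$ by $\ltl{X\psi}$ (while leaving the rest of the formula untouched). By construction, if $\phi\in\cosafetyltl$, then $\phi'$ contains no occurrence of $\ltl{wX}$ and still only uses the operators $\ltl{X}$, $\ltl{U}$, plus Boolean connectives, so $\phi'\in\cosafetyltlnoweak$.

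The heart of the argument is the semantic observation highlighted in the paragraph preceding the statement. For any infinite state sequence $\sigma\in(2^\Sigma)^\omega$ we have $|\sigma|=\omega$, so the condition $i+1<|\sigma|$ in the clause defining $\ltl{X}$ always holds, and the disjunct $i+1=|\sigma|$ in the clause defining $\ltl{wX}$ is vacuous. Hence for every $i\ge 0$,
\[
\sigma,i\models\ltl{X\psi}\iff\sigma,i+1\models\psi\iff\sigma,i\models\ltl{wX\psi}.
\]
A routine structural induction on $\phi$, using this equivalence for the $\ltl{wX}$ case and noting that all other connectives are treated identically by the rewriting, then yields $\sigma,i\models\phi\iff\sigma,i\models\phi'$ for every $\sigma\in(2^\Sigma)^\omega$ and every $i\ge 0$. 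In particular, $\lang(\phi)=\lang(\phi')$, proving the missing inclusion.

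I do not expect any real obstacle: the observation is essentially a direct consequence of the fact, already recorded in the preliminaries, that $\ltl{X}$ and $\ltl{wX}$ have coincident semantics over infinite words. The only points requiring a modicum of care are (i) making sure the rewriting is defined as a recursive descent that preserves negated normal form (which it trivially does, since $\ltl{X}$ and $\ltl{wX}$ behave symmetrically under negation and the rewriting does not introduce any new negations), and (ii) checking that $\phi'$ still belongs to $\cosafetyltl$'s NNF and uses only the operators permitted in $\cosafetyltlnoweak$, which is clear by inspection.
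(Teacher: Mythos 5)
Your proposal is correct and matches the paper's reasoning: the paper justifies this observation with exactly the remark that $\ltl{X}$ and $\ltl{wX}$ have coincident semantics over infinite words (since $|\sigma|=\omega$ makes the guard $i+1<|\sigma|$ always true and the disjunct $i+1=|\sigma|$ vacuous), and your explicit rewriting plus structural induction simply spells out the details the paper leaves implicit.
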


When reasoning over finite words, the \emph{weak tomorrow} operator plays
a crucial role, since it can be used to recognize when we are at the last
position of a word. In fact, the formula $\sigma,i \models \ltl{wX \false}$ is
true if and only if $i = |\sigma|-1$, for any $\sigma \in (2^\Sigma)^*$.

Now, let us note that, thanks to the absence of the \emph{weak tomorrow}
operator, the \cosafetyltlnoweak logic is such that the concatenation of
any (finite or infinite) suffix to a finite model of a \cosafetyltlnoweak
formula results in a correct model of a formula.
\Cref{lem:cosafetyfincosafetynoweakfin,lem:cosafetyinfcosafetynoweakfin}
prove this result for finite and infinite suffixes, respectively.

\begin{lem}
\label{lem:cosafetyfincosafetynoweakfin}
    $\semfin{\cosafetyltlnoweak} = \semfinfin{\cosafetyltlnoweak}$
\end{lem}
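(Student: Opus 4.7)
The plan is to reduce the statement to a single structural claim about the logic, namely that satisfaction by a finite word of a $\cosafetyltlnoweak$ formula is preserved under arbitrary right extensions. More precisely, I would establish, by induction on the structure of $\phi \in \cosafetyltlnoweak$ in negated normal form, the following strengthened property: for every finite word $\sigma \in (2^\Sigma)^+$, every $\tau \in (2^\Sigma)^*$, and every position $0 \le i < |\sigma|$,
\[
  \sigma, i \models \phi \implies \sigma\cdot\tau,\, i \models \phi.
\]
Specialising this to $i = 0$ yields $\langfin(\phi)\cdot(2^\Sigma)^* \subseteq \langfin(\phi)$, from which both inclusions in the lemma follow at once: the inclusion $\semfin{\cosafetyltlnoweak} \subseteq \semfinfin{\cosafetyltlnoweak}$ is the immediate observation that $\lang = \lang\cdot\{\epsilon\}$, while the reverse inclusion follows because $\lang\cdot(2^\Sigma)^*$ collapses to $\lang$ itself whenever $\lang = \langfin(\phi)$ for some $\phi \in \cosafetyltlnoweak$.

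For the induction itself, the atomic cases $p$ and $\lnot p$ are immediate since $(\sigma\cdot\tau)_i = \sigma_i$ for all $i < |\sigma|$, and the Boolean cases $\phi_1 \lor \phi_2$ and $\phi_1 \land \phi_2$ follow from the inductive hypothesis applied to the subformulas. The two temporal cases are where the hypothesis that the \emph{weak tomorrow} is forbidden is used. For $\ltl{X\psi}$, the assumption $\sigma, i \models \ltl{X\psi}$ forces $i+1 < |\sigma| \le |\sigma\cdot\tau|$, and the IH on $\psi$ at position $i+1$ gives $\sigma\cdot\tau, i+1 \models \psi$, hence $\sigma\cdot\tau, i \models \ltl{X\psi}$. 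For $\ltl{\psi_1 U \psi_2}$, the witness $j$ with $i \le j < |\sigma|$ still satisfies $j < |\sigma\cdot\tau|$, and applying the IH to $\psi_2$ at $j$ and to $\psi_1$ at each $k$ with $i \le k < j$ carries the witness over to $\sigma\cdot\tau$.

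I expect the proof itself to be fully routine; the only real subtlety is conceptual rather than technical, namely identifying why the absence of \ltl{wX} is precisely what makes the induction go through. A formula such as $\ltl{wX\false}$, which is true exactly at the last position of a finite word, would immediately break the preservation property, since extending $\sigma$ by a non-empty $\tau$ would shift the ``last position''; more generally, $\ltl{wX\psi}$ is vacuously satisfied at $|\sigma|-1$ regardless of $\psi$, and this vacuous truth is not preserved after extension. In $\cosafetyltlnoweak$, every temporal modality requires an actually existing future position as a witness, so no ``end-of-word'' information leaks into the semantics, and the induction above goes through without complications.
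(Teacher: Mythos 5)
Your proposal is correct and follows essentially the same route as the paper: a structural induction on $\phi$ showing that satisfaction by a finite word is preserved when a suffix is appended, with the \ltl{X} and \ltl{U} cases going through precisely because their witnesses lie strictly inside the original word. Your formulation via a position-indexed one-directional implication $\sigma,i\models\phi\implies\sigma\cdot\tau,i\models\phi$ is just a cleaner packaging of the paper's suffix-language biconditionals, and your remark on why \ltl{wX} would break the induction matches the paper's own observation following the lemma.
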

\begin{proof}
  We have to prove that, for each formula $\phi \in \cosafetyltlnoweak$, it
  holds that:
  \begin{align*}
    \langfin(\phi) = \langfin(\phi) \cdot (2^\Sigma)^*
  \end{align*}
  We proceed by induction on the structure of $\phi$. 
  For the base case, consider $\ltl{\phi = p \in \Sigma}$. The case for
  $\ltl{\phi = ! p}$ is similar.  Let $\sigma \in (2^\Sigma)^*$.  It holds
  that $\sigma\in\langfin(p)$ iff $\sigma_0 \in \lang(p)$ iff $\sigma_0
  \cdot \sigma'\in \lang(p)$ (for any $\sigma' \in (2^\Sigma)^*$) iff
  $\sigma \in \langfin(p)\cdot(2^\Sigma)^*$.

  For the inductive step:
  \begin{enumerate}
  \item Let $\ltl{\phi = \phi_1 \land \phi_2}$. It holds that $\sigma \in
      \langfin(\phi_1 \land \phi_2)$ iff $\sigma \in \langfin(\phi_1)$ and
      $\sigma\in\langfin(\phi_2)$. By inductive hypothesis, this is
      equivalent to $\sigma\in\langfin(\phi_1)\cdot(2^\Sigma)^*$ and
      $\sigma\in\langfin(\phi_2)\cdot(2^\Sigma)^*$. This holds iff
      $\sigma\in \langfin(\phi_1\land\phi_2)\cdot(2^\Sigma)^*$.
  \item Let $\ltl{\phi = \phi_1 \lor \phi_2}$. It holds that $\sigma \in
      \langfin(\phi_1 \lor \phi_2)$ iff $\sigma \in \langfin(\phi_1)$ or
      $\sigma\in\langfin(\phi_2)$. By inductive hypothesis, this is
      equivalent to $\sigma\in\langfin(\phi_1)\cdot(2^\Sigma)^*$ or
      $\sigma\in\langfin(\phi_2)\cdot(2^\Sigma)^*$. This holds iff
      $\sigma\in \langfin(\phi_1\lor\phi_2)\cdot(2^\Sigma)^*$.
  \item Let $\ltl{\phi = X \phi_1}$. It holds that
      $\sigma\in\langfin(\ltl{X\phi_1})$ iff
      $\sigma_{[1,|\sigma|-1]}\in\langfin(\phi_1)$. By inductive
      hypothesis, this is equivalent to $\sigma_{[1,|\sigma|-1]} \in
      \langfin(\phi_1)\cdot(2^\Sigma)^*$. This holds iff $\sigma \in
      \langfin(\ltl{X\phi_1})\cdot(2^\Sigma)^*$.
  \item Let $\ltl{\phi = \phi_1 U \phi_2}$.  It holds that
      $\sigma_{[i,|\sigma|-1]} \in \langfin(\phi_1)$ and
      $\sigma_{[j,|\sigma|-1]} \in \langfin(\phi_2)$, for some $0\le
      i<|\sigma|$ and for all $0\le j< i$. By inductive hypothesis,
      $\sigma_{[i,|\sigma|-1]} \in \langfin(\phi_1)\cdot(2^\Sigma)^*$ and
      $\sigma_{[j,|\sigma|-1]} \in \langfin(\phi_2)\cdot(2^\Sigma)^*$ (for
      some $0\le i<|\sigma|$ and for all $0\le j< i$). This is equivalent
      to $\sigma\in\langfin(\ltl{\phi_1 U \phi_2})\cdot(2^\Sigma)^*$. \qedhere
  \end{enumerate}
\end{proof}

The following lemma generalizes \cref{lem:cosafetyfincosafetynoweakfin} to
the case of infinite words.

\begin{lem}
  \label{lem:cosafetyinfcosafetynoweakfin}
    $\sem{\cosafetyltlnoweak} = \semfininf{\cosafetyltlnoweak}$
\end{lem}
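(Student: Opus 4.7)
The plan is to show, by structural induction on $\phi \in \cosafetyltlnoweak$, that $\lang(\phi) = \langfin(\phi) \cdot (2^\Sigma)^\omega$, which gives both inclusions of the lemma at once. Unfolding definitions, the claim amounts to: $\sigma \models \phi$ in the infinite-word semantics iff some finite prefix $\sigma_{[0,n-1]}$ of $\sigma$ satisfies $\phi$ in the finite-word semantics. The atomic base cases ($\phi = p$ and $\phi = \neg p$) follow from inspection of $\sigma_0$. The mechanism I would use throughout the induction is the previous lemma, \cref{lem:cosafetyfincosafetynoweakfin}: the fact that finite models of a $\cosafetyltlnoweak$ formula are closed under arbitrary right-extension is precisely what lets me align several witnessing prefixes into one.

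For the Boolean and $\ltl{X}$ inductive cases, the induction goes through almost mechanically. In the conjunction case, the IH gives two finite-prefix witnesses $\rho_1, \rho_2$ of $\sigma$, one for each conjunct; taking the longer one and applying \cref{lem:cosafetyfincosafetynoweakfin} to the shorter, that common prefix witnesses both. Disjunction is immediate, and for $\ltl{X\phi_1}$ a prefix $\rho$ of $\sigma_{[1,\infty]}$ witnessing $\phi_1$ corresponds to the prefix $\sigma_0 \rho$ of $\sigma$ witnessing $\ltl{X\phi_1}$, and vice versa.

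The main obstacle is the until case $\phi = \ltl{\phi_1 U \phi_2}$. From $\sigma \models \ltl{\phi_1 U \phi_2}$ I obtain an index $j$ with $\sigma_{[j,\infty]} \models \phi_2$ and $\sigma_{[k,\infty]} \models \phi_1$ for every $k < j$; the IH then provides $j+1$ separate finite-prefix witnesses, each of possibly different length. Invoking \cref{lem:cosafetyfincosafetynoweakfin} on each, I extend them all to share a common endpoint $n-1$ in $\sigma$, so that $\sigma_{[0,n-1]}$ is a single finite prefix satisfying $\ltl{\phi_1 U \phi_2}$ in the finite-word semantics. The converse direction is a direct unfolding of the finite semantics followed by the IH applied to each sub-suffix. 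This is exactly where the absence of $\ltl{wX}$ is essential: without the right-extension closure given by \cref{lem:cosafetyfincosafetynoweakfin}, different witness suffixes could force incompatible prefix lengths and the merging step would fail.
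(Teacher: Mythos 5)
Your proposal is correct and follows essentially the same route as the paper's proof: a structural induction establishing $\lang(\phi) = \langfin(\phi)\cdot(2^\Sigma)^\omega$, with \cref{lem:cosafetyfincosafetynoweakfin} used in exactly the same way to extend the several finite-prefix witnesses (in the conjunction and until cases) to a common endpoint. No substantive differences to report.
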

\begin{proof}
  We have to prove that, for each formula $\phi \in \cosafetyltlnoweak$, it
  holds that:
  \begin{align*}
    \lang(\phi) = \langfin(\phi) \cdot (2^\Sigma)^\omega
  \end{align*}
  We proceed by induction on the structure of $\phi$. 
  For the base case, consider $\ltl{\phi = p \in \Sigma}$. The case for
  $\ltl{\phi = ! p}$ is similar.  Let $\sigma \in (2^\Sigma)^\omega$.  It
  holds that $\sigma\in\lang(p)$ iff $\sigma_0 \in \lang(p)$ iff $\sigma_0
  \cdot \sigma'\in \lang(p)$ (for any $\sigma' \in (2^\Sigma)^\omega$) iff
  $\sigma \in \langfin(p)\cdot(2^\Sigma)^\omega$.

  For the inductive step:
  \begin{enumerate}
  \item Let $\ltl{\phi = \phi_1 \land \phi_2}$. It holds that $\sigma \in
    \lang(\phi)$ iff $\sigma \in \lang(\phi_1)$ and $\sigma \in
      \lang(\phi_2)$. By the inductive hypothesis, this is equivalent to
      $\sigma \in \langfin(\phi_1)\cdot(2^\Sigma)^\omega$ and $\sigma \in
      \langfin(\phi_2)\cdot(2^\Sigma)^\omega$. This means that there exist
      two indices $i,j \in \N$ such that $\sigma_{[0,i]} \in
      \langfin(\phi_1)$ and $\sigma_{[0,j]} \in \langfin(\phi_2)$. Let $m$
      be the greatest between $i$ and $j$. By
      \cref{lem:cosafetyfincosafetynoweakfin} it holds that $\sigma_{[0,m]}
      \in \langfin(\phi_1)$ and $\sigma_{[0,m]} \in\langfin(\phi_2)$, \ie
      $\sigma_{[0,m]} \in \langfin(\phi_1 \land \phi_2)$.  Therefore
      $\sigma \in \langfin(\phi_1 \land \phi_2) \cdot (2^\Sigma)^\omega$.
  \item Let $\ltl{\phi = \phi_1 \lor \phi_2}$. It holds that $\sigma \in
    \lang(\phi)$ iff $\sigma \in \lang(\phi_1)$ or $\sigma \in
      \lang(\phi_2)$. Without loss of generality, we consider the case that
      $\sigma \in \lang(\phi_1)$ (the other case is specular). By the
      inductive hypothesis, this is equivalent to $\sigma \in
      \langfin(\phi_1)\cdot(2^\Sigma)^\omega$. Therefore, $\sigma
      = \sigma'\cdot\sigma''$ where $\sigma'\in\langfin(\phi_1)$ and
      $\sigma''\in(2^\Sigma)^\omega$. Since $\langfin(\phi_1) \subseteq
      \langfin(\phi_1 \lor \phi_2)$, this is equivalent to $\sigma \in
      \langfin(\phi_1 \lor \phi_2) \cdot (2^\Sigma)^\omega$.
  \item Let $\ltl{\phi = X \phi_1}$. It holds that
      $\sigma\in\lang(\ltl{X\phi_1})$ iff
      $\sigma_{[1,\infty)}\in\lang(\phi_1)$. By inductive hypothesis, this
      is equivalent to $\sigma_{[1,\infty)} \in
      \langfin(\phi_1)\cdot(2^\Sigma)^\omega$. This holds iff $\sigma \in
      \langfin(\ltl{X\phi_1})\cdot(2^\Sigma)^\omega$.
  \item Let $\ltl{\phi = \phi_1 U \phi_2}$. By the semantics of the
    \emph{until} operator, it holds that $\sigma \in \lang(\phi)$ iff there
      exists an index $i \in \N$ such that $\sigma_{[i,\infty)} \in
      \lang(\phi_2)$ and $\sigma_{[j,\infty)} \in \lang(\phi_1)$ for all $0
      \le j < i$.  By the inductive hypothesis, this is equivalent to
      $\sigma_{[i,\infty)} \in \langfin(\phi_2)\cdot(2^\Sigma)^\omega$ and
      $\sigma_{[j,\infty)} \in \langfin(\phi_1)\cdot(2^\Sigma)^\omega$ for
      all $0 \le j < i$. This means that there exists an index $i \in \N$
      and $i+1$ indices $k_0, \dots , k_i \in \N$ such that
      $\sigma_{[i,k_i]} \in \langfin(\phi_2)$ and $\sigma_{[j,k_j]} \in
      \langfin(\phi_1)$ for all $0 \le j < i$. Let $m$ be the greatest
      between $k_0 , \dots , k_i$. By
      \cref{lem:cosafetyfincosafetynoweakfin}, it holds that there exists
      an index $i \in \N$ such that $\sigma_{[i,m]} \finmodels \phi_2$ and
      $\sigma_{[j,m]} \finmodels \phi_1$ for all $0 \le j < i$. Therefore,
      this is equivalent to $\sigma \in \langfin(\ltl{\phi_1
      U \phi_2})\cdot(2^\Sigma)^\omega$. \qedhere
  \end{enumerate}
\end{proof}

Note that an equality similar to
\cref{lem:cosafetyfincosafetynoweakfin,lem:cosafetyinfcosafetynoweakfin}
does not hold if we allow the \emph{weak tomorrow} operator, because if
$\sigma\in\langfin(\ltl{wX\phi})$, it might still be that $|\sigma|=1$ and
$\phi$ is unsatisfiable, hence there is no way to extend $\sigma$ to an
infinite word while still satisfying the formula.

In~\cite{DeGiacomoDM14}, De Giacomo \etal define the notion of
\emph{insensitive to infiniteness} as a way to compare the finite and the
infinite word semantics of fragments of \LTL. They define
a formula $\phi$ (over an alphabet $\Sigma$) to be \emph{insensitive to
infiniteness} if, and only if, for any finite word $\sigma\in\Sigma^+$, it
holds that $\sigma \models \phi$ iff $\sigma \cdot \set{\mathsf{e}^\omega}
\models \phi$, where $\mathsf{e}$ is a fresh proposition letter
($\mathsf{e} \not\in \Sigma$).  By
\cref{lem:cosafetyinfcosafetynoweakfin}, it follows that every formula of \cosafetyltlnoweak is
insensitive to infiniteness.

Then, we can focus on $\cosafetyltlnoweak$ and $\coSafetyFO$ on finite words. If
we can prove that $\semfin{\cosafetyltlnoweak}=\semfin{\coSafetyFO}$, we are
done. At first, we show how to encode $\cosafetyltlnoweak$ formulas into
$\coSafetyFO$ with exactly one free variable.
\begin{lem}
  \label{lemma:fromTLtocosafetyfo}
  $\semfin{\cosafetyltlnoweak}\subseteq\semfin{\coSafetyFO}$
\end{lem}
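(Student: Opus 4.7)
The plan is to proceed by structural induction on $\phi \in \cosafetyltlnoweak$, defining a translation $\phi \mapsto \phi^\circ$ that yields a $\coSafetyFO$ formula with exactly one free variable $x$, maintaining the invariant that for every finite word $\sigma \in (2^\Sigma)^+$ and every position $i$ with $0 \le i < |\sigma|$, $\sigma, i \models \phi$ iff $(\sigma)^s, i \models \phi^\circ(x)$. Taking $i = 0$ will then give $\langfin(\phi) = \langfin(\phi^\circ)$, which proves the desired inclusion.

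The atomic and Boolean cases are routine: set $p^\circ := P(x)$, $(\neg p)^\circ := \neg P(x)$, $(\phi_1 \lor \phi_2)^\circ := \phi_1^\circ \lor \phi_2^\circ$, and $(\phi_1 \land \phi_2)^\circ := \phi_1^\circ \land \phi_2^\circ$; all these are well-formed $\coSafetyFO$ formulas. For $\ltl{X\phi_1}$, the key trick is to express ``$y$ is the immediate successor of $x$'' using only the restricted $\coSafetyFO$ quantifiers. I abbreviate $\mathsf{succ}(x,y) \equiv \forall z(x < z < y \implies z < z)$, a valid bounded universal in $\coSafetyFO$ whose body $z < z$ is an atomic formula that is identically false, forcing $y = x+1$. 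Then set
\[
(\ltl{X\phi_1})^\circ := \exists y\bigl(x < y \land \mathsf{succ}(x,y) \land \phi_1^\circ[y/x]\bigr),
\]
with $y$ fresh. On a finite word of length $|\sigma|$ the first-order domain is $\{0,\ldots,|\sigma|-1\}$, so requiring $y$ to exist in the domain encodes precisely the condition $i+1 < |\sigma|$, matching the semantics of $\ltl{X}$.

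For the until, I split according to whether $\phi_2$ holds now or strictly later and define
\[
(\ltl{\phi_1 U \phi_2})^\circ := \phi_2^\circ \lor \exists y\bigl(x < y \land \phi_2^\circ[y/x] \land \phi_1^\circ \land \forall z(x < z < y \implies \phi_1^\circ[z/x])\bigr),
\]
with $y, z$ chosen fresh. The left disjunct handles the witness $j = i$; in the right disjunct, $y$ plays the role of $j > i$, $\phi_1^\circ$ at $x$ covers position $i$ itself, and the bounded universal enforces $\phi_1$ at every intermediate position, mirroring the semantics of $\ltl{U}$ exactly. Every quantifier introduced has the shape mandated by the $\coSafetyFO$ grammar: unbounded existentials are guarded by $x < y$ and universals are doubly bounded as $x < z < y$.

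The only step needing genuine thought is the $\ltl{X}$ case, since $\coSafetyFO$ has no successor primitive and its existentials admit only the one-sided bound $x < y$; what makes the translation go through is that $\coSafetyFO$ \emph{does} permit a doubly-bounded universal with an arbitrary body, which I exploit with an identically-false atom to rule out intermediate positions. Beyond that, the semantic equivalence at each inductive step is a straightforward unfolding of definitions, and capture during the substitutions $[y/x]$ and $[z/x]$ is avoided by choosing the quantified variables fresh at each inductive step.
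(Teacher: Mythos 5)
Your proposal is correct and follows essentially the same route as the paper's proof: a structural translation into \coSafetyFO with one free variable, encoding $\ltl{X}$ via an existential guarded by a doubly-bounded universal with an unsatisfiable body to force the successor, and encoding $\ltl{U}$ via a guarded existential plus a bounded universal (your explicit disjunctive split for the $j=i$ case is just the expansion of the paper's non-strict guard $x\le y$, which the paper notes is definable in exactly this way). The only cosmetic difference is that you use the atom $z<z$ where the paper writes $\false$ directly.
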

\begin{proof}
  Let $\lang \in \semfin{\cosafetyltlnoweak}$, and let
  $\phi\in\cosafetyltlnoweak$ such that $\lang = \langfin(\phi)$.
  By following the semantics of the operators in $\phi$, we can obtain
  an equivalent \coSafetyFO formula $\phi_{\FO}$.
  We inductively define the formula $FO(\phi,x)$, where $x$ is a variable, as 
  follows:\fitpar
  \begin{itemize}
    \item
      $FO(p,x) = P(x)$, for each $p \in \Sigma$
    \item
      $FO(\lnot p, x) = \lnot P(x)$, for each $p \in \Sigma$
    \item
      $FO(\phi_1 \land \phi_2,x) = FO(\phi_1,x) \land FO(\phi_2,x)$
    \item
      $FO(\phi_1 \lor \phi_2,x) = FO(\phi_1,x) \lor FO(\phi_2,x)$
    \item
      $FO(\ltl{X \phi_1},x) = \exists y(x < y \land y=x+1 \land FO(\phi_1,y))$\\
      where $y=x+1$ can be expressed as $\forall z(x<z<y \implies \false)$.
    \item 
      $FO(\ltl{\phi_1 U \phi_2},x) = {}
        \exists y( x \le y \land FO(\phi_2,y) \land 
          \forall z (x \le z < y \implies FO(\phi_1,z)))$
  \end{itemize}
  For each $\phi \in \cosafetyltlnoweak$, the formula $FO(\phi,x)$ has exactly
  one free variable $x$. 
  It is easy to see that for all finite state sequences $\sigma \in
  (2^\Sigma)^*$, it holds that $\sigma \finmodels \phi$ if and only if
  $(\sigma)^s,0 \finmodels FO(\phi,x)$, and $FO(\phi,x) \in \coSafetyFO$.
  Therefore, $\lang \in \semfin{\coSafetyFO}$.
\end{proof}

It is time to show the opposite direction, \ie that any \coSafetyFO formula can
be translated into a \cosafetyltlnoweak formula which is equivalent over finite
words. To prove this fact we adapt a proof of Kamp's theorem by
Rabinovich~\cite{DBLP:journals/corr/Rabinovich14}. Kamp's theorem is one of the
fundamental results about temporal logics, which states that \LTL
corresponds to \FO in terms of expressiveness. Here, we prove a similar result
in the context of co-safety languages. The proof goes by introducing a
\emph{normal form} for \FO formulas, and showing that 
\begin{enumerate*}[label=(\roman*)]
  \item
    any \coSafetyFO formula can be translated into such normal form and 
  \item
    any formula in normal form can be straightforwardly translated into
    a \cosafetyltlnoweak formula. 
\end{enumerate*}
We start by introducing such a normal form.

\begin{defi}[$\decomp$-formulas]
  An $\decomp$-formula $\phi(z_0,\ldots,z_m)$ with $m$ free variables is a
  formula of this form:
  \begin{align*}
    \phi(z_0,\ldots,z_m) \coloneqq
    \exists x_0 &\ldots \exists x_n \big( \\
      &x_0 < x_1 < \dots < x_n && \text{ordering constraints} \\
      {}\land{}&z_0=x_0\land\bigwedge_{k=1}^{m} (z_{k} = x_{i_k}) 
        && \text{binding constraints} \\
      {}\land{}&\bigwedge_{j=0}^{n}\alpha_j(x_j) 
        && \text{punctual constraints} \\
      {}\land{}&\bigwedge_{j=1}^{n} \forall y (
        x_{j-1} < y < x_{j} \to \beta_j(y))
    \big) && \text{interval constraints}
  \end{align*}
  where $i_k \in \set{0,\ldots,n}$ for each $0 \le k \le m$, and $\alpha_j$ and
  $\beta_j$, for each $1 \le j \le n$, are quantifier-free formulas with exactly
  one free variable.
\end{defi}

Some explanations are due. Each $\decomp$-formula states a number of
requirements for its free variables and for its quantified variables. Through
the binding constraints, the free variables are identified with a subset of the
quantified variables in order to uniformly state the punctual and interval
constraints, and the ordering constraints which sort all the variable in a total
order. Note that there is no relationship between $n$ and $m$: there might be
more quantified variables than free variables, or less. Note as well that the
binding constraint $z_0=x_0$ is always present, \ie at least one free variable
has to be the minimal element of the ordering. This ensures that
$\decomp$-formulas constrain only positions of the word that are greater
than the value of $x_0$.

We say that a formula of \coSafetyFO is in \emph{normal form} if and only if it
is a disjunction of $\decomp$-formulas.
To see how formulas in normal form make sense, let us immediately show how to
translate them into \cosafetyltlnoweak formulas.
\begin{lem}
  \label{lem:fromS1SFOguardtoTL}
  For any formula $\phi(z) \in \coSafetyFO$ in normal form, with a single free
  variable, there exists a formula $\psi\in\cosafetyltlnoweak$ such that
  $\langfin(\phi(z)) = \langfin(\psi)$.
\end{lem}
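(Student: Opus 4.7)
The plan is to translate each $\decomp$-formula occurring as a disjunct of $\phi(z)$ separately into a \cosafetyltlnoweak formula, and then combine the results by disjunction, which \cosafetyltlnoweak supports. Since $\phi(z)$ has the single free variable $z$, the binding constraints of every disjunct reduce to $z = x_0$, so each disjunct takes the shape
\[
  \exists x_0 \ldots \exists x_n \Bigl(\bigwedge_{j<n} x_j < x_{j+1} \land z = x_0 \land \bigwedge_{j=0}^n \alpha_j(x_j) \land \bigwedge_{j=1}^n \forall y (x_{j-1} < y < x_j \to \beta_j(y))\Bigr).
\]
Because each $\alpha_j$ and $\beta_j$ is quantifier-free with a single free variable, only the atoms $P(\cdot)$ and $\neg P(\cdot)$ can contribute nontrivially, so they may be identified with propositional formulas over $\Sigma$ in negated normal form, which are syntactically already \cosafetyltlnoweak formulas.

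For each such disjunct I would build the target formula by an outer-to-inner recursion: set $\psi_n := \alpha_n$ and, for $0 \le j < n$, $\psi_j := \alpha_j \land \ltl{X(\beta_{j+1} U \psi_{j+1})}$. Then $\psi_0$ is the translation of the disjunct, and the final $\psi$ is the disjunction of these $\psi_0$'s taken over all disjuncts of $\phi(z)$. By construction only $\ltl{X}$, $\ltl{U}$, $\land$, $\lor$, and negation on propositional atoms occur, so $\psi \in \cosafetyltlnoweak$.

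For correctness I would prove, by downward induction on $j$, the following equivalence: for every finite word $\sigma$ and every position $k < |\sigma|$, $\sigma, k \finmodels \psi_j$ iff there exist positions $k = x_j < x_{j+1} < \cdots < x_n < |\sigma|$ with $\sigma, x_l \finmodels \alpha_l$ for every $j \le l \le n$ and $\sigma, y \finmodels \beta_l$ whenever $j < l \le n$ and $x_{l-1} < y < x_l$. The inductive step rests on the semantics of $\ltl{X(\beta_{j+1} U \psi_{j+1})}$ at $x_j$: the $\ltl{X}$ forces the strict inequality $x_{j+1} > x_j$ by shifting evaluation to $x_j+1$, after which $\ltl{U}$ supplies a witness $x_{j+1} \ge x_j+1$ with $\beta_{j+1}$ holding at precisely the positions $x_j+1,\ldots,x_{j+1}-1$, that is, at exactly the $y$ with $x_j < y < x_{j+1}$. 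Instantiating the equivalence at $j = 0$ and $k = 0$ then yields $\langfin(\phi(z)) = \langfin(\psi)$. The main subtlety, rather than a real obstacle, is the degenerate case $x_{j+1} = x_j+1$, where the universal interval constraint on $\beta_{j+1}$ is vacuously true and the $\ltl{U}$ semantics correctly requires nothing of $\beta_{j+1}$ either, so both sides agree.
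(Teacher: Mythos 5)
Your proposal is correct and follows essentially the same route as the paper: split the normal form into its $\decomp$-disjuncts, translate each one into the nested pattern $\alpha_j \land \ltl{X}(\beta_{j+1} \ltl{U} (\cdots))$, and close under disjunction; your recursively defined $\psi_0$ is exactly the paper's displayed formula unfolded. Your explicit downward induction on $j$ (including the degenerate case $x_{j+1}=x_j+1$) supplies the correctness argument the paper leaves as ``it can be seen that,'' and your indexing of the interval formulas $\beta_{j+1}$ between $x_j$ and $x_{j+1}$ is in fact cleaner than the paper's labeling.
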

\begin{proof}
  We show how any $\decomp$-formula is equivalent to a
  \cosafetyltlnoweak-formula, over finite words.  Since each formula in normal
  form is a disjunction of $\decomp$-formulas, and since \cosafetyltlnoweak is
  closed under disjunction, this implies the proposition.
  Let $\phi(z)$ be a $\decomp$-formula with a single free variable.
  Having only one free variable, $\phi(z)$ is of the form:
  \begin{align*}
    \exists x_0 \dots \exists x_n \big(&
      x_0 < \dots < x_n 
      {}\land{}z = x_0 \\
      &{}\land{}\bigwedge_{j=0}^{n} \alpha_j(x_j)
      {}\land{}\bigwedge_{j=1}^{n} \forall y (
        x_{j-1} < y < x_{j} \to \beta_j(y))
    \big)
  \end{align*}
  Now, let $A_i$ be the temporal formulas corresponding to $\alpha_i$ and $B_i$
  be the ones corresponding to $\beta_i$. Recall that $\alpha_i$ and $\beta_i$
  are quantifier free with only one free variable, hence this correspondence is
  trivial. Since $z$ is the first time point of the ordering mandated by the
  formula, we only need future temporal operators to encode $\phi$ into a
  \cosafetyltlnoweak formula $\psi$ defined as follows:
  \begin{align*}
    \psi =
    A_0 \land \ltl{X} ( B_0 \ltl{U} (A_1 \land \ltl{X} ( B_1 \ltl{U} A_2 
    \land 
      \dots 
    \ltl{X}(B_{n-1} \ltl{U} A_{n}) \dots )
    ))
  \end{align*}
  It can be seen that $\sigma,k \models \psi$ if and only if
  $(\sigma)^s,k\models\phi(z)$, for each $\sigma \in (2^\Sigma)^+$ and each
  $k\ge0$.
  Thus, $\langfin(\phi(z)) = \langfin(\psi)$.
\end{proof}

Two differences between our $\decomp$-formulas and those used by
Rabinovich~\cite{DBLP:journals/corr/Rabinovich14} are crucial: first, we do not
have unbounded universal requirements, but all interval constraints use bounded
quantifications, hence we do not need the \emph{always} operator to encode them;
second, our $\decomp$-formulas are \emph{future} formulas, hence we only need
future operators to encode them.

We now show that any \coSafetyFO formula can be translated into normal form,
that is, into a \emph{disjunction} of $\decomp$-formulas. 
\begin{lem}
  \label{lem:froms1sfoguardtonormalform}
  Any \coSafetyFO formula is equivalent to a disjunction of $\decomp$-formulas.
\end{lem}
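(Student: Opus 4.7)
The plan is to proceed by structural induction on the \coSafetyFO formula $\phi$, adapting the technique used by Rabinovich in his proof of Kamp's theorem that the authors cite immediately before the lemma. The induction hypothesis is that every subformula is equivalent to a disjunction of $\decomp$-formulas over (as a set) the same free variables, with each disjunct allowed to choose its own distinguished minimum $z_0$. The atomic cases are immediate: $P(x)$, $\neg P(x)$, $x = y$, and $x < y$ each correspond to a single $\decomp$-formula with at most two existential variables and trivial punctual and interval constraints, while $x \ne y$ becomes a two-disjunct form, one for $x < y$ (with $z_0 = x$) and one for $y < x$ (with $z_0 = y$). The disjunction case is trivial.

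For the existential case $\exists y(x < y \land \phi)$, I would distribute the quantifier through the disjunction $\phi \equiv \bigvee_i \phi_i$ supplied by the induction hypothesis. In each disjunct $\phi_i$, if $y$ appears as a free variable, it is already bound to some existential point $x_{i_y}$ via the binding constraint, and removing $y$ from the free-variable list simply drops that binding. If the new free variable $x$ is not already among those of $\phi_i$, one enumerates the possible positions of $x$ in the existing chain $x_0 < \cdots < x_n$ (either coinciding with some $x_j$ with $j < i_y$, or inserted strictly between two consecutive variables, or becoming the new minimum $x_0$), yielding a finite set of $\decomp$-formulas that together exhaust the cases. The conjunction case is the first source of real combinatorics: given two $\decomp$-formulas sharing the same free variables, one enumerates all interleavings of their two ascending sequences of existential points (including coincidences) that are consistent with the shared binding; each interleaving produces a single $\decomp$-formula whose punctual constraints at identified points are conjoined and whose interval constraints are refined on the sub-intervals cut by the merged sequence.

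The main obstacle is the bounded universal case $\forall y(x < y < z \implies \phi)$, where $\phi \equiv \bigvee_i \phi_i$ is already in normal form but the universal quantifier does not distribute over $\lor$. My plan is to refine each $\phi_i$, viewed as a property rooted at $y$, by separating its \emph{local} information at $y$, captured by a quantifier-free $\gamma_i(y)$, from the strictly-future part of the pattern, whose witnesses lie at existential points $x_j$ with $j \ge 1$. The universal requirement then becomes the conjunction of (i) an interval constraint stating that every $y \in (x, z)$ satisfies $\bigvee_i \gamma_i(y)$, which fits directly as a $\beta$-constraint inside a $\decomp$-formula, and (ii) a finite family of existential obligations describing the witnesses mandated by the disjuncts that actually fire inside $(x, z)$, which must be threaded past $z$ and then absorbed through a further application of the conjunction case. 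Making this separation precise, and ensuring that every firing pattern of the $\phi_i$ inside $(x, z)$ is accounted for without re-introducing unbounded universal quantification, is the technical heart of the argument.
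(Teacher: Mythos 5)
Your overall skeleton --- structural induction with the atomic, disjunction, conjunction-via-interleavings, and existential cases handled essentially as you describe --- matches the paper's proof. The gap is in the bounded universal case, which you rightly identify as the heart of the lemma but for which your plan does not work as stated. You propose to split each disjunct $\phi_i$ into a quantifier-free local part $\gamma_i(y)$ and a residue of existential witnesses lying strictly beyond $y$, absorb $\bigvee_i\gamma_i$ as an interval constraint, and account for the witnesses by ``a finite family of existential obligations \dots\ threaded past $z$''. But the witnesses mandated by the disjunct firing at a given $y$ depend on $y$ itself (through binding, ordering, and interval constraints relative to $y$), and $y$ sweeps over the whole of $(x,z)$; there is no a priori finite family of existential obligations that certifies, for \emph{every} $y$ in the interval, the witnesses of \emph{whichever} disjunct fires at $y$. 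This is exactly the unbounded universal quantification you say must be avoided, and the proposal supplies no mechanism for avoiding it.

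The paper closes this gap with two moves that are absent from your plan. First, it argues that in each disjunct one may assume the variables bound to $x$, $y$, and $z$ (in the paper's notation, $z_i$, $z_{m+1}$, $z_j$) are \emph{consecutive} in the ordering constraints --- a disjunct positing a point strictly between them fails for $y = x+1$ (resp.\ $y = z-1$) and is discarded --- so that the only material mentioning $y$ is a punctual constraint $\alpha(y)$ and two interval constraints over $(x,y)$ and $(y,z)$: no existential witnesses between $x$ and $z$ survive at all. Second, since $\forall$ still does not distribute over the remaining disjunction $\bigvee_k(\theta_k\land\eta_k)$, the paper converts to conjunctive normal form, distributes the universal over the resulting conjunction, pulls the $y$-free parts $\eta_k$ out of each universally quantified clause, and observes that what is left, $\forall y(x<y<z\to\theta_{k_1}\lor\cdots)$, collapses into interval constraints over $(x,z)$ when $y$ is swept over the interval. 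If you want to salvage your decomposition, you would first have to show that the witness residue is empty after a reduction of the first kind, at which point you would be reproducing the paper's argument; as written, the universal case remains unproved.
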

\begin{proof}
  Let $\phi$ be a \coSafetyFO formula. We proceed by structural induction on
  $\phi$. For the base case, for each atomic formula $\phi(z_0,z_1)$ we provide
  an equivalent $\decomp$-formula $\psi(z_0,z_1)$:
  \begin{enumerate}
    \item if $\phi = (z_0 < z_1)$ then $\psi := \exists x_0 \exists x_1 (z_0
    = x_0 \land z_1 = x_1 \land x_0 < x_1)$;
  \item if $\phi = (z_0 = z_1)$, then $\psi := \exists x_0 (z_0 = x_0 \land
    z_1 = x_0)$.
  \item if $\phi = (z_0 \ne z_1)$, we can note that $\phi \equiv z_0 < z_1 
    \lor z_1 < z_0$ and then apply Item~1;
  \item if $\phi = P(z_0)$ then we define $\psi := \exists x_0 (z_0
    = x_0 \land P(x_0))$. Similarly if $\phi = \lnot P(z_0)$.
  \end{enumerate}
  For the inductive step:
  \begin{enumerate}
    \item The case of a disjunction is trivial.
    \item If $\phi(z_0,\dots,z_k)$ is a conjunction, by the inductive hypothesis
      each conjunct is equivalent to a disjunction of $\decomp$-formulas. By
      distributing the conjunction over the disjunction we can reduce ourselves to
      the case of a conjunction
      $\psi_1(z_0,\ldots,z_k)\land\psi_2(z_0,\ldots,z_k)$ of two
      $\decomp$-formulas~\footnotemark. In this case we have that:
      \begin{align*}
        \psi_1 &\equiv\exists x_0\ldots\exists x_n\big(x_0<\dots<x_n\land 
          z_0=x_0\land \ldots
        \big)  \\
        \psi_2&\equiv\exists x_{n+1}\ldots\exists x_m(x_{n+1}<\dots<x_m\land
          z_0=x_{n+1} \land \ldots
        )
      \end{align*}
      \footnotetext{%
        Note that, without loss of generality, we can assume that $\psi_1$
        and $\psi_2$ have the same free variables $z_1,\dots,z_k$. In the
        case one of the two is not using a variable (say $z_i$), then its
        binding constraint will not bind any variable to $z_i$.
      }
      Since the set of quantified variables in $\psi_1$ is disjoint from the set
      of quantified variables in $\psi_2$, we can distribute the existential
      quantifiers over the conjunction $\psi_1 \land \psi_2$, obtaining:
      \begin{align*}
        \psi_1\land\psi_2&\equiv{}\exists x_0 \ldots\exists x_n\exists x_{n+1}
        \ldots\exists x_m \\
        &\big(x_0<\dots<x_n\land x_{n+1}<\dots<x_m \land 
           z_0=x_0 \land z_0=x_{n+1} \land \ldots
        \big)
        \shortintertext{Note that we can identify $x_0$ and $x_{n+1}$, obtaining:}
        \psi_1\land\psi_2&\equiv{}\exists x_0\ldots\exists x_n\exists x_{n+2},\ldots\exists x_m \\
        &\big(x_0<\dots<x_n\land x_0<x_{n+2}<\dots<x_m \land {}\\
        &  
           z_0=x_0 \land
           \bigwedge_{i=1}^k(z_i=x_{j_i}) \land {}
           \bigwedge_{\substack{i=0 , i \neq n+1}}^m\alpha_i(x_i) \land \\
           &\bigwedge_{\substack{i=1 , i \neq n+1 \\ i \neq n+2}}^m\forall y(x_{i-1}<y<x_i\to\beta_i(y)) \land
           \forall y (x_0 < y <x_{n+2} \to \beta_{n+2})
        \big)
      \end{align*}
      where $j_i \in \set{0,\dots,k}$, for each $0\le i \le k$.
      Now, to turn this formula into a disjunction of $\decomp$-formulas, we
      consider all the possible interleavings of the variables that respect the
      two imposed orderings and explode the formula into a disjunction that
      consider each such interleaving. Let
      $X=\set{x_0,\ldots,x_n,x_{n+2},\ldots,x_m}$ and let $\Pi$ be the set of
      all the permutations of $X$ compatible with the orderings $x_0<\dots<x_n$
      and $x_0<x_{n+1}<\dots < x_m$. For any $\pi \in \Pi$, $\pi(0)=x_0$. Now,
      $\psi_1\land\psi_2$ becomes the disjunction of a set of $\decomp$-formulas
      $\psi_\pi$, for each $\pi\in\Pi$, defined as:
      \begin{align*}
        \psi_\pi\equiv{}&\exists x_{\pi(0)}\ldots\exists x_{\pi(m)}\\
        & \big(x_{\pi(0)}<\dots<x_{\pi(m)} \land {}\\
        & z_0=x_0 \land 
          \bigwedge_{i=1}^k(z_i=x_{\pi(j_i)}) \land 
          \bigwedge_{i=0}^m\alpha_i(x_i) \land \\
          &\bigwedge_{i=0}^m
            \forall y(x_{\pi(i-1)} < y < x_{\pi(i)} \to \beta^*_i(y))
        \big)
      \end{align*}
      where $\beta^*_i$ suitably combines the formulas $\beta$ according to the
      interleaving of the orderings of the original variables, and is defined as
      follows:
      \begin{equation*}
        \beta^*_i=\begin{cases}
          \beta_{\pi(i)} & 
            \text{if both $\pi(i),\pi(i-1)\le n$ or both $\pi(i),\pi(i-1)>n$ }\\
          \beta_{\pi(i)} \land \beta_{\pi(i-1)} &
            \text{if $\pi(i)\le n$ and $\pi(i-1)>n$ or \viceversa}
        \end{cases}
      \end{equation*}
      Then we have that $\psi_1\land\psi_2\equiv\bigvee_{\pi\in\Pi}(\psi_\pi)$,
      which is a disjunction of $\decomp$-formulas.
    \item Let $\phi(z_0,\dots,z_m) = \exists z_{m+1} \suchdot (z_i < z_{m+1} 
      \land
      \phi_1(z_0,\dots,z_m,z_{m+1}))$, for some $0 \le i \le m$.
      By the inductive hypothesis, this is equivalent to the formula $\exists
      z_{m+1} (z_i < z_{m+1} \land \bigvee_{k=0}^{j}
      \psi_k(z_0,\dots,z_m,z_{m+1}))$, where $\psi_k(z_0,\dots,z_m,z_{m+1})$ is
      a $\decomp$-formula, for each $0 \le k \le j$, that is:
      \begin{align*}
        \exists z_{m+1} \suchdot ( z_i < z_{m+1} \land
          \bigvee_{k=0}^{j} (
            \exists x_0 \dots &\exists x_{n_k} 
            \psi^\prime_k(z_0,\dots,z_{m+1},x_0,\dots,x_{n_k})
          )
        )
      \end{align*}
      %
      %
      By distributing the conjunction over the disjunction, we obtain:
      \begin{align*}
        \exists z_{m+1} \suchdot (
          \bigvee_{k=0}^{j} (
            (z_i < z_{m+1}) \land
            \exists x_0 \dots &\exists x_{n_k} 
            \psi^\prime_k(z_0,\dots,z_{m+1},x_0,\dots,x_{n_k})
          )
        )
      \end{align*}
      and by distributing the existential quantifier over the disjunction, we
      have:
      \begin{align*}
        \bigvee_{k=0}^{j} (
          \exists z_{m+1} (
            (z_i < z_{m+1}) \land
            \exists x_0 \dots &\exists x_{n_k} 
            \psi^\prime_k(z_0,\dots,z_{m+1},x_0,\dots,x_{n_k})
          )
        )
      \end{align*}
      Since the subformula $z_i < z_{m+1}$ does not contain the variables
      $x_0,\dots,x_n$, we can push it inside the existential quantification,
      obtaining:
      \begin{align*}
        \bigvee_{k=0}^{j} (
          \exists z_{m+1} \suchdot 
            \exists x_0 \dots &\exists x_{n_k} \suchdot 
            (
              (z_i < z_{m+1}) \land
              \psi^\prime_k(z_0,\dots,z_{m+1},x_0,\dots,x_{n_k})
            )
        )
      \end{align*}
      Now we divide in cases:
      \begin{enumerate}
        \item 
          suppose that the formula
          $\psi^\prime_k(z_0,\dots,z_{m+1},x_0,\dots,x_{n_k})$ contains the following
          conjuncts: $z_i = x_{l_i}$ and $z_{m+1} = x_{l_{m+1}}$, with $l_{i}
          = l_{m+1}$.
          It holds that these formulas are in contradiction with the formula $z_i
          < z_{m+1}$, that is:
          \begin{align*}
            (z_i < z_{m+1}) \land (z_i = x_{l_i}) \land (z_{m+1} = x_{l_{m+1}}) \equiv \false
          \end{align*}
          Therefore, the disjunct $(z_i < z_{m+1}) \land
          \psi^\prime_k(z_0,\dots,z_{m+1},x_0,\dots,x_{n_k})$ is equivalent to
          $\false$, and thus can be safely removed from the disjunction.
        \item
          suppose that the formula
          $\psi^\prime_k(z_0,\dots,z_{m+1},x_0,\dots,x_{n_k})$ contains the following
          conjuncts: $z_i = x_{l_i}$, $z_{m+1} = x_{l_{m+1}}$ (with $l_{i} \neq
          l_{m+1}$), and $x_{l_{m+1}} < \dots < x_{l_{i}}$.
          As in the previous case, it holds that:
          \begin{align*}
            (z_i < z_{m+1}) \land (z_i = x_{l_i}) \land (z_{m+1} = x_{l_{m+1}})
            \land (x_{l_{m+1}} < \dots < x_{l_{i}}) \equiv \false
          \end{align*}
          Thus, also in this case, this disjunct can be safely removed from the
          disjunction.
        \item
          otherwise, it holds that 
          the formula
          $\psi^\prime_k(z_0,\dots,z_{m+1},x_0,\dots,x_{n_k})$ contains the following
          conjuncts: $z_i = x_{l_i}$, $z_{m+1} = x_{l_{m+1}}$ (with $l_{i} \neq
          l_{m+1}$), and $x_{l_i} < \dots < x_{l_{m+1}}$.
          Therefore, the subformula $z_i < z_{m+1}$ is redundant, and can be safely
          removed from $\psi^\prime_k(z_0,\dots,z_{m+1},x_0,\dots,x_{n_k})$. The
          resulting formula is a $\decomp$-formula.
      \end{enumerate}
      After the previous transformation, we obtain:
      \begin{align*}
        \bigvee_{k=0}^{j^\prime} (
          \exists z_{m+1} \suchdot 
            \exists x_0 \dots &\exists x_{n_k} \suchdot 
              \psi^{\prime\prime}_k(z_0,\dots,z_{m+1},x_0,\dots,x_{n_k})
        )
      \end{align*}
      Finally, since each formula
      $\psi^{\prime\prime}_k(z_0,\dots,z_{m+1},x_0,\dots,x_{n_k})$ contains the
      conjunct $z_{m+1} = x_{l_{m+1}}$, we can safely remove the quantifier
      $\exists z_{m+1}$. We obtain the formula:
      \begin{align*}
        \bigvee_{k=0}^{j^\prime} (
            \exists x_0 \dots &\exists x_{n_k} \suchdot 
              \psi^{\prime\prime}_k(z_0,\dots,z_m,x_0,\dots,x_{n_k})
        )
      \end{align*}
      which is a disjunction of $\decomp$-formulas.
    \item Let $\phi(z_0,\dots,z_m) = \forall z_{m+1} (z_i < z_{m+1} < z_j \to
    \phi_1(z_0,\dots,z_m,z_{m+1}))$, for some $0 \le i,j \le m$. By the
    induction hypothesis we know that $\phi_1$ is equivalent to a disjunction
    $\bigvee_k \psi_k$ where $\psi_k$ are $\decomp$-formulas, \ie each $\psi_k$
    is of the form:
    \begin{align*}
      \psi_k\equiv \exists x_0,\ldots,x_n\big(&
        x_0<\ldots<x_n \land 
        z_0=x_0 \land
        \bigwedge_{l=1}^{m+1}(z_l=x_{u_l}) \land {} \\
        &\bigwedge_{l=0}^n\alpha_l(x_l) \land
        \bigwedge_{l=1}^n\forall y(x_{l-1}<y<x_l \to \beta_l(y))
      \big)
    \end{align*}
    Without loss of generality, we can suppose that $z_i$, $z_{m+1}$ and
    $z_j$ are binded to some variables $x_{u_i}$, $x_{u_{m+1}}$ and
    $x_{u_{j}}$ that are ordered consecutively, \ie $x_{u_i} < x_{u_{m+1}}
    < x_{u_j}$ with no other variable in between. That is because otherwise
    the ordering constraints and the binding constraints would be in
    conflict with the guard $z_i < z_{m+1} < z_j$ of the universal
    quantification, and the disjunct $\psi_k$ could be removed from the
    disjunction. 
    As a matter of fact, take for example a disjunct of $\bigvee_{k}\psi_k$
    with ordering constraints inducing the order $z_i < z_h < z_{m+1}$, for
    some $h$. The existence of such a $z_h$ is not guaranteed \emph{for
    each} value of $z_{m+1}$ between $z_i$ and $z_j$ because when
    $z_{m+1}=z_i+1$ there is no value between $z_i$ and $z_i+1$ (we are on
    discrete time models), and thus such a disjunct can be safely removed
    from $\bigvee_{k}\psi_k$.
    That said, we can now isolate all the
    parts of $\psi_k$ that talk about $z_{m+1}$,
    bringing them out of the existential quantification, obtaining $\psi_k\equiv \theta_k\land \eta_k$, where:
    \begin{align*}
      \theta_k\equiv {}& 
      z_i<z_{m+1}<z_j\\
      {}\land{} &\alpha(z_{m+1})\land\forall y(z_i<y<z_{m+1}\to\beta(y))\land
                          \forall y(z_{m+1}<y<z_i\to\beta'(y)) 
    \end{align*}
    \begin{align*}
      \eta_k\equiv{} \exists x_0,\ldots,x_n 
      \big(&x_0<\ldots<x_n \land
        z_0=x_0\land
        \bigwedge_{l=1}^m(z_l=x_{u_l}) \land \\
       & \smashoperator{\bigwedge_{\substack{l=0\\l\ne u_{m+1}}}^n}
          \alpha_l(x_l) \land
        \smashoperator{
          \bigwedge_{\substack{l=1\\l-1\ne u_i\\l\ne u_j\\}}^n
        }
          \forall y(x_{l-1}<y<x_l \to \beta_l(y))
      \big)
    \end{align*}
    Now, we have $\phi\equiv\forall z_{m+1}(z_i<z_{m+1}<z_j\to
    \bigvee_k(\theta_k\land\eta_k))$. We can distribute the head of the
    implication over the disjunction: 
    \begin{equation*}
      \phi\equiv\forall z_{m+1}(\bigvee_k( z_i<z_{m+1}<z_j\to
      (\theta_k\land\eta_k)))
    \end{equation*}
    and then over the conjunction, obtaining:
    \begin{equation*}
      \phi\equiv\forall z_{m+1}\big(\bigvee_k
      ((z_i<z_{m+1}<z_j\to\theta_k)\land(z_i<z_{m+1}<z_j\to\eta_k))\big)
    \end{equation*}
    In order to simplify the exposition, we now show how to proceed in the case
    of two disjuncts, which is easily generalizable. So suppose we have:
    \begin{equation*}
      \phi\equiv \forall z_{m+1}\left(\lor
        \begin{aligned}
        (z_i<z_{m+1}<z_j\to\theta_1)&\land(z_i<z_{m+1}<z_j\to\eta_1)\\
        (z_i<z_{m+1}<z_j\to\theta_2)&\land(z_i<z_{m+1}<z_j\to\eta_2)
        \end{aligned}
      \right)
    \end{equation*}
    We can a) distribute the disjunction over the conjunction (\ie convert
    in conjunctive normal form in the case of multiple disjuncts):
    \begin{equation*}
      \phi\equiv\forall z_{m+1}\left(
    \begin{aligned}
      &((z_i<z_{m+1}<z_j\implies\theta_1)\lor(z_i<z_{m+1}<z_j\implies\theta_2))\\
      {}\land{} &((z_i<z_{m+1}<z_j\implies\theta_1)\lor(z_i<z_{m+1}<z_j\implies\eta_2))\\
      {}\land{} &((z_i<z_{m+1}<z_j\implies\eta_1)\lor(z_i<z_{m+1}<z_j\implies\theta_2))\\
      {}\land{}
      &((z_i<z_{m+1}<z_j\implies\eta_1)\lor(z_i<z_{m+1}<z_j\implies\eta_2))\\
    \end{aligned}
    \right)
    \end{equation*}
    b) factor out the head of the implications:
    \begin{equation*}
      \phi\equiv\forall z_{m+1}\left(
    \begin{aligned}
                &(z_i<z_{m+1}<z_j\implies\theta_1\lor\theta_2)\\
      {}\land{} &(z_i<z_{m+1}<z_j\implies\theta_1\lor\eta_2)\\
      {}\land{} &(z_i<z_{m+1}<z_j\implies\eta_1\lor\theta_2)\\
      {}\land{} &(z_i<z_{m+1}<z_j\implies\eta_1\lor\eta_2)\\
    \end{aligned}
    \right)
    \end{equation*}
    and c) distribute the universal quantification over the conjunction,
    obtaining:
    \begin{equation*}
      \phi\equiv\left(
    \begin{aligned}
                &\forall z_{m+1}(z_i<z_{m+1}<z_j\implies\theta_1\lor\theta_2)\\
      {}\land{} &\forall z_{m+1}(z_i<z_{m+1}<z_j\implies\theta_1\lor\eta_2)\\
      {}\land{} &\forall z_{m+1}(z_i<z_{m+1}<z_j\implies\eta_1\lor\theta_2)\\
      {}\land{} &\forall z_{m+1}(z_i<z_{m+1}<z_j\implies\eta_1\lor\eta_2)\\
    \end{aligned}
    \right)
    \end{equation*}
    Now, note that $\eta_1$ and $\eta_2$ do not contain $z_{m+1}$ as a free
    variable, because we factored out all the parts mentioning $z_{m+1}$ into
    $\theta_1$ and $\theta_2$ before. Therefore we can push them out from the
    universal quantifications, obtaining:
    \begin{equation*}
      \phi\equiv\left(
    \begin{aligned}
                &\forall z_{m+1}(z_i<z_{m+1}<z_j\implies\theta_1\lor\theta_2)\\
      {}\land{} &\forall z_{m+1}(z_i<z_{m+1}<z_j\implies\theta_1)\lor\eta_2\\
      {}\land{} &\forall z_{m+1}(z_i<z_{m+1}<z_j\implies\theta_2)\lor\eta_1\\
      {}\land{} &\neg\exists z_{m+1}(z_i<z_{m+1}<z_j)\lor\eta_1\lor\eta_2\\
    \end{aligned}
    \right)
    \end{equation*}
    Now, note that $\neg\exists z_{m+1}(z_i<z_{m+1}<z_j)$ is equivalent to
    $z_i=z_j\lor z_j=z_i+1$, which is the disjunction of two formulas that can
    be turned into $\decomp$-formulas. Since both $\eta_1$ and $\eta_2$ are
    already $\decomp$-formulas and since we already know how to deal with
    conjunctions and disjunctions of $\decomp$-formulas, it remains to show that
    the universal quantifications in the formula above can be turned into
    $\decomp$-formulas. Take $\forall z_{m+1}(z_i<z_{m+1}<z_j\to\theta_1)$, \ie:
    \begin{equation*}
      \forall z_{m+1}\left(z_i<z_{m+1}<z_j\to
        \begin{aligned}
        &z_i<z_{m+1}<z_j \\
        {}\land{}&\alpha(z_{m+1})\\
        {}\land{}&\forall y(z_i<y<z_{m+1}\to\beta(y))\\
        {}\land{}&\forall y(z_{m+1}<y<z_j\to\beta'(y))
        \end{aligned}
      \right)
    \end{equation*}
        
    Note that the first conjunct of the consequent can be removed, since it is
    redundant. Now, this formula is requesting $\beta(y)$ for all $y$ between
    $z_i$ and $z_{m+1}$, but with $z_{m+1}$ that ranges between $z_i$ and
    $z_j-1$, hence effectively requesting $\beta(y)$ to hold between $z_i$ and
    $z_j$. Similarly for $\beta'(y)$, which has to hold for all $y$ between
    $z_i+1$ and $z_j$. 
    
    Hence, it is equivalent to: 
    \begin{align*}
      & z_i=z_j \\
      {}\lor{} & z_j = z_i+1 \\
      {}\lor{} & \exists x_{i+1}(z_i<x_{i+1} \land x_{i+1}=z_i+1 \land z_j=x_{i+1}+1 \land \alpha(x_{i+1}))\\
      {}\lor{}&\exists x_i \exists x_{i+1} \exists x_{j-1} \exists x_{j}
      \left(
        \begin{aligned}
          &x_i < x_{i+1} < x_{j-1} < x_j \\
          {}\land{}&z_i = x_i \land z_j = x_j \\
          {}\land{}&\alpha(x_{i+1})\land\alpha(x_{j-1})\\
          {}\land{}&\forall y(x_i < y < x_{i+1} \to \false) \\
          {}\land{}&\forall y(x_{j-1} < y < x_{j} \to \false) \\
          {}\land{}&\forall y(x_{i} < y < x_{j-1} \to \alpha(y)\land\beta(y)) \\
          {}\land{}&\forall y(x_{i+1} < y < x_{j} \to \alpha(y)\land 
          \beta^\prime(y)) 
        \end{aligned}
      \right) 
    \end{align*}
    which is a disjunction of a $\decomp$-formula and others that can be turned
    into disjunctions of $\decomp$-formulas. The reasoning is at all similar for
    $\forall z_{m+1}(z_i<z_{m+1}<z_j\implies\theta_1\lor\theta_2)$.\qedhere
  \end{enumerate}
\end{proof}
Any \coSafetyFO formula can be translated into a disjunction of
$\decomp$-formulas by\linebreak[4]\cref{lem:froms1sfoguardtonormalform}, and then to
a \cosafetyltlnoweak formula by \cref{lem:fromS1SFOguardtoTL}. Together with
\cref{lemma:fromTLtocosafetyfo}, we obtain the following.

\begin{cor}
  \label{cor:cosafetyfoiscosafetyltlnoweak}
  $\semfin{\coSafetyFO}=\semfin{\cosafetyltlnoweak}$
\end{cor}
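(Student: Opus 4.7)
My plan is to prove the corollary by establishing the two set inclusions separately, each following essentially directly from the lemmas just proved, and then to combine them.

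The inclusion $\semfin{\cosafetyltlnoweak}\subseteq\semfin{\coSafetyFO}$ is immediate from \cref{lemma:fromTLtocosafetyfo}, which explicitly constructs, for every $\phi\in\cosafetyltlnoweak$, an equivalent $\coSafetyFO$ formula $FO(\phi,x)$ with a single free variable $x$ and $\langfin(\phi)=\langfin(FO(\phi,x))$.

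For the converse $\semfin{\coSafetyFO}\subseteq\semfin{\cosafetyltlnoweak}$, I would pick an arbitrary $\lang\in\semfin{\coSafetyFO}$ with witness $\phi(x)\in\coSafetyFO$ having exactly one free variable. First, I apply \cref{lem:froms1sfoguardtonormalform} to rewrite $\phi(x)$ as an equivalent disjunction $\bigvee_{i=1}^{n}\psi_i(x)$ of $\decomp$-formulas; since the translation preserves logical equivalence and hence the set of free variables, each $\psi_i$ is itself a single-free-variable $\decomp$-formula. Next, by \cref{lem:fromS1SFOguardtoTL}, each $\psi_i(x)$ translates to an equivalent $\cosafetyltlnoweak$ formula $\chi_i$, with $\langfin(\psi_i(x))=\langfin(\chi_i)$. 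Since $\cosafetyltlnoweak$ is syntactically closed under disjunction, the formula $\bigvee_{i=1}^{n}\chi_i$ belongs to $\cosafetyltlnoweak$ and defines $\lang$ over finite words, which establishes the inclusion and hence the corollary.

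The proof itself is essentially bookkeeping, since all the substantive work is already encapsulated in the three preceding results; in particular, the hard step — the intricate inductive case analysis for the bounded-existential and bounded-universal quantifiers in \cref{lem:froms1sfoguardtonormalform} — has already been carried out. The only point requiring minor care is checking that starting from a formula with the single free variable $x$ yields a disjunction whose disjuncts all have $x$ as their unique free variable, but this follows directly from the fact that \cref{lem:froms1sfoguardtonormalform} produces a \emph{logically equivalent} disjunction, so \cref{lem:fromS1SFOguardtoTL} applies verbatim to each disjunct.
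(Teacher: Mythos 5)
Your proposal is correct and follows essentially the same route as the paper: one inclusion is \cref{lemma:fromTLtocosafetyfo}, and the other chains \cref{lem:froms1sfoguardtonormalform} (normal form) with \cref{lem:fromS1SFOguardtoTL} (normal form to \cosafetyltlnoweak). The extra bookkeeping about the single free variable being preserved across the disjuncts is a reasonable clarification but does not change the argument.
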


Moreover,
\cref{cor:cosafetyfoiscosafetyltlnoweak,lem:cosafetyfincosafetynoweakfin,lem:cosafetyinfcosafetynoweakfin},
imply the following corollary.

\begin{cor}
\label{lemma:cosafetyfoseminf}
  It holds that:
  \begin{itemize}
    \item $\semfin{\coSafetyFO}=\semfinfin{\coSafetyFO}$
    \item $\sem{\coSafetyFO}=\semfininf{\coSafetyFO}$
  \end{itemize}
\end{cor}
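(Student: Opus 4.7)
The plan is to derive both items by composing the three referenced equalities, plus a short observation that lifts \cref{cor:cosafetyfoiscosafetyltlnoweak} from the finite- to the infinite-word semantics.

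For the first item, the chain
\[
  \semfin{\coSafetyFO}
  \;\stackrel{\text{(\ref{cor:cosafetyfoiscosafetyltlnoweak})}}{=}\;
  \semfin{\cosafetyltlnoweak}
  \;\stackrel{\text{(\ref{lem:cosafetyfincosafetynoweakfin})}}{=}\;
  \semfinfin{\cosafetyltlnoweak}
  \;\stackrel{\text{(\ref{cor:cosafetyfoiscosafetyltlnoweak})}}{=}\;
  \semfinfin{\coSafetyFO}
\]
does the job. The last step is justified by the general fact that the operator $\mathsf{L} \mapsto \semfinfin{\mathsf{L}}$ depends only on the class $\semfin{\mathsf{L}}$, so any two logics that agree on finite-word languages automatically agree on the languages obtained by appending arbitrary finite suffixes.

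For the second item, the analogous chain
\[
  \semfininf{\coSafetyFO}
  \;\stackrel{\text{(\ref{cor:cosafetyfoiscosafetyltlnoweak})}}{=}\;
  \semfininf{\cosafetyltlnoweak}
  \;\stackrel{\text{(\ref{lem:cosafetyinfcosafetynoweakfin})}}{=}\;
  \sem{\cosafetyltlnoweak}
\]
reduces the statement to proving $\sem{\cosafetyltlnoweak} = \sem{\coSafetyFO}$. This is the main obstacle, because \cref{cor:cosafetyfoiscosafetyltlnoweak} is formally phrased only for the finite-word semantics. I would dispose of it by inspecting the two syntactic translations that underlie that corollary---the $FO(\cdot,x)$ map from \cref{lemma:fromTLtocosafetyfo} and the normal-form conversion from \cref{lem:froms1sfoguardtonormalform,lem:fromS1SFOguardtoTL}---and observing that each clause translates one operator or quantifier in a way that respects its semantics pointwise in the position $i$, never referring to the length of the word. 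Hence the finite-word equivalences of the form $\sigma, i \models \phi \iff (\sigma)^s, i \models FO(\phi, x)$ extend verbatim to infinite $\sigma$, yielding the infinite-word counterpart of \cref{cor:cosafetyfoiscosafetyltlnoweak} for free. This closes the chain and completes the proof.
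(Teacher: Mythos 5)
Your derivation is correct and is essentially the paper's intended argument: the corollary is presented there as an immediate consequence of \cref{cor:cosafetyfoiscosafetyltlnoweak,lem:cosafetyfincosafetynoweakfin,lem:cosafetyinfcosafetynoweakfin}, and your two chains are exactly the composition of those equalities, using that $\semfinfin{\cdot}$ and $\semfininf{\cdot}$ are by definition determined by the finite-word class. You are also right to flag that the second item does not follow from the three cited statements as black boxes: they only give $\semfininf{\coSafetyFO}=\semfininf{\cosafetyltlnoweak}=\sem{\cosafetyltlnoweak}$, and the remaining equality $\sem{\coSafetyFO}=\sem{\cosafetyltlnoweak}$ needs the observation that the translations underlying \cref{cor:cosafetyfoiscosafetyltlnoweak} are position-wise and therefore preserve the infinite-word semantics as well --- a point the paper leaves implicit here and states explicitly only later (in \cref{prop:succinct}, for the temporal-to-first-order direction). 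Your way of closing that gap is sound; an equally valid alternative would be a direct structural induction on \coSafetyFO showing $\lang(\phi)=\langfin(\phi)\cdot(2^\Sigma)^\omega$, mirroring the proofs of \cref{lem:cosafetyfincosafetynoweakfin,lem:cosafetyinfcosafetynoweakfin}.
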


We are now ready to state the main result of this section.
\begin{thm}
  \label{thm:cosafetyltltofo}
  $\sem{\cosafetyltl}=\sem{\coSafetyFO}$
\end{thm}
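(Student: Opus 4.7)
My plan is to obtain the equality as a short chain of identities, since most of the substantive work has already been done in the preceding lemmas and corollaries. The core idea is that, over infinite words, the \emph{weak tomorrow} operator is redundant, so \cosafetyltl collapses to \cosafetyltlnoweak, and then the ``finite-prefix plus arbitrary extension'' characterization (\cref{lem:cosafetyinfcosafetynoweakfin} and \cref{lemma:cosafetyfoseminf}) lets us swing everything back to the already-proved equivalence of \cosafetyltlnoweak and \coSafetyFO on finite words.

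Concretely, I would proceed by the following chain of equalities:
\begin{align*}
  \sem{\cosafetyltl}
    &= \sem{\cosafetyltlnoweak}
      &&\text{by \cref{obs:observationnoweaknext}} \\
    &= \semfininf{\cosafetyltlnoweak}
      &&\text{by \cref{lem:cosafetyinfcosafetynoweakfin}} \\
    &= \semfininf{\coSafetyFO}
      &&\text{by \cref{cor:cosafetyfoiscosafetyltlnoweak}} \\
    &= \sem{\coSafetyFO}
      &&\text{by \cref{lemma:cosafetyfoseminf}.}
\end{align*}
The first step uses that, over infinite state sequences, $\ltl{wX\phi}$ and $\ltl{X\phi}$ have the same semantics, so any \cosafetyltl formula can be equivalently rewritten in \cosafetyltlnoweak without changing its infinite-word language. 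The second step uses that formulas of \cosafetyltlnoweak are insensitive to arbitrary suffix extensions, so their infinite-word language is exactly the set of infinite extensions of their finite-word language. The third step transports the finite-word equivalence between \coSafetyFO and \cosafetyltlnoweak (already established by combining \cref{lemma:fromTLtocosafetyfo} with \cref{lem:froms1sfoguardtonormalform} and \cref{lem:fromS1SFOguardtoTL}) to the corresponding classes of $\omega$-languages obtained by appending arbitrary infinite suffixes. The last step is exactly the content of the second item of \cref{lemma:cosafetyfoseminf}, which says that the infinite-word semantics of \coSafetyFO agrees with the suffix-closure of its finite-word semantics.

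There is essentially no real obstacle left: all the nontrivial machinery (the normal-form argument for \coSafetyFO, the translation from $\decomp$-formulas into \cosafetyltlnoweak, and the insensitivity-to-infiniteness of \cosafetyltlnoweak) has already been done. The only thing worth double-checking is that the third equality is legitimate, i.e.\ that $\semfin{L_1}=\semfin{L_2}$ implies $\semfininf{L_1}=\semfininf{L_2}$; but this is immediate from the definition $\semfininf{L} = \{\lang\cdot(2^\Sigma)^\omega \mid \lang \in \semfin{L}\}$, since suffix-concatenation is applied uniformly on both sides.
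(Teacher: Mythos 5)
Your proof is correct and follows essentially the same route as the paper's own argument: the identical chain $\sem{\cosafetyltl}=\sem{\cosafetyltlnoweak}=\semfininf{\cosafetyltlnoweak}=\semfininf{\coSafetyFO}=\sem{\coSafetyFO}$, justified by the same observation, lemmas, and corollaries. Your explicit remark that $\semfin{L_1}=\semfin{L_2}$ implies $\semfininf{L_1}=\semfininf{L_2}$ by the definition of $\semfininf{\cdot}$ is exactly the (implicit) step the paper also relies on.
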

\begin{proof}
  We know that $\sem{\cosafetyltl}=\semfininf{\cosafetyltlnoweak}$ by
  Observation~\ref{obs:observationnoweaknext} and
  \cref{lem:cosafetyinfcosafetynoweakfin}. Since
  $\semfin{\cosafetyltlnoweak}=\semfin{\coSafetyFO}$ by
  \cref{cor:cosafetyfoiscosafetyltlnoweak}, we have that
  $\semfininf{\cosafetyltlnoweak}=\semfininf{\coSafetyFO}$. Then, by
  \cref{lemma:cosafetyfoseminf} we have that
  $\semfininf{\coSafetyFO}=\sem{\coSafetyFO}$, hence
  $\sem{\cosafetyltl}=\sem{\coSafetyFO}$.
\end{proof}
\begin{cor}
  $\sem{\safetyltl}=\sem{\SafetyFO}$
\end{cor}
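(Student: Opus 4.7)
The plan is to derive this corollary from Theorem~\ref{thm:cosafetyltltofo} via a purely syntactic duality, without re-running the normal-form machinery. First I would observe that the grammars of \safetyltl and \SafetyFO are the De Morgan duals, in negated normal form, of the grammars of \cosafetyltl and \coSafetyFO, respectively. At the temporal level, the standard identities $\ltl{\neg X\phi}\equiv\ltl{wX\neg\phi}$, $\ltl{\neg wX\phi}\equiv\ltl{X\neg\phi}$, and $\ltl{\neg(\phi_1 U\phi_2)}\equiv\ltl{\neg\phi_1 R\neg\phi_2}$ (together with their symmetric counterparts) hold on every state sequence, so pushing negation inward through a \safetyltl formula produces a \cosafetyltl formula with the complementary language, and \viceversa. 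At the first-order level, the negation of $\exists y(x<y\land\varphi)$ is equivalent to $\forall y(x<y\implies\neg\varphi)$, the negation of the bounded $\forall y(x<y<z\implies\varphi)$ is equivalent to the bounded $\exists y(x<y<z\land\neg\varphi)$, and the atomic literals are closed under negation. A routine structural induction then shows that $\phi\mapsto\mathrm{NNF}(\neg\phi)$ restricts to a bijection between \coSafetyFO and \SafetyFO, and to an analogous bijection between \cosafetyltl and \safetyltl.

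Given these dualities, the argument reduces to a three-step chain of equivalences. A language $L\subseteq(2^\Sigma)^\omega$ belongs to $\sem{\safetyltl}$ if and only if its complement $(2^\Sigma)^\omega\setminus L$ belongs to $\sem{\cosafetyltl}$; by Theorem~\ref{thm:cosafetyltltofo}, this holds if and only if the complement lies in $\sem{\coSafetyFO}$; and by the \SafetyFO/\coSafetyFO duality, this in turn holds if and only if $L\in\sem{\SafetyFO}$. Composing the three equivalences yields the claimed equality.

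The only step requiring care is the inductive verification that the two syntactic dualisations are \emph{exact}, that is, that every production of \SafetyFO arises as the NNF of the negation of a production of \coSafetyFO, and symmetrically for the temporal fragments. This is mechanical but worth making explicit, since without it one would only obtain one of the two inclusions. Once it is in place, no further normal-form work is needed: Theorem~\ref{thm:cosafetyltltofo} carries all of the semantic content, and the corollary follows by complementation.
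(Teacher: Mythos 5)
Your argument is correct and is precisely the route the paper takes: the corollary is obtained from \cref{thm:cosafetyltltofo} by the syntactic duality between \safetyltl/\cosafetyltl and \SafetyFO/\coSafetyFO under negation-normal-form complementation, which the paper invokes with the remark that ``by duality, all the results transfer to \SafetyFO.'' The only cosmetic caveat is that the negation of the atomic $x<y$ is not itself a literal but the disjunction $x=y\lor y<x$, which is harmless since both fragments are closed under disjunction.
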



\section{\SafetyFO captures \LTL-definable safety languages}
\label{sec:proof}

In this section, we prove that \coSafetyFO captures \LTL-definable co-safety
languages. By duality, we have that \SafetyFO captures \LTL-definable
safety languages, and by the equivalence shown in the previous section, this
provides a novel proof of the fact that \safetyltl captures \LTL-definable
safety languages.  We start by characterizing co-safety languages in terms of
\LTL over finite words.

\begin{lem}
  \label{lemma:cosafetytoltl}
  $\sem{\LTL}\cap\coSAFETY=\semfininf{\LTL}$
\end{lem}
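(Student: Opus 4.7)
The plan is to establish the two inclusions separately, exploiting the normal form of \cref{prop:galphafalpha} together with \cref{prop:reverse}, which equates $\semfin{\LTL}$ with $\semfin{\LTLFP}$.

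For the inclusion $\sem{\LTL}\cap\coSAFETY \subseteq \semfininf{\LTL}$, I would start from a language $\lang \in \sem{\LTL}\cap\coSAFETY$ and invoke \cref{prop:galphafalpha} to obtain a pure-past formula $\alpha \in \LTLFP$ such that $\lang = \lang(\ltl{F\alpha})$. The key observation is that $\ltl{F\alpha}$ is true on an infinite word $\sigma$ iff there exists a position $i$ at which $\alpha$ holds in $\sigma$; since $\alpha$ is pure past, $\sigma, i \models \alpha$ depends only on the prefix $\sigma_{[0,i]}$, and by the convention that \LTLFP formulas are evaluated at the last state, $\sigma, i \models \alpha$ iff $\sigma_{[0,i]} \finmodels \alpha$. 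Thus $\lang = \langfin(\alpha) \cdot (2^\Sigma)^\omega$. Applying \cref{prop:reverse} to get a finite-word \LTL formula $\phi$ with $\langfin(\phi) = \langfin(\alpha)$ yields $\lang \in \semfininf{\LTL}$.

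For the converse $\semfininf{\LTL} \subseteq \sem{\LTL} \cap \coSAFETY$, consider a language of the form $\lang = \langfin(\phi) \cdot (2^\Sigma)^\omega$ with $\phi \in \LTL$ interpreted over finite words. The co-safety property is immediate: if $\sigma \in \lang$, then $\sigma = \sigma'\sigma''$ with $\sigma' \in \langfin(\phi)$, and the finite prefix $\sigma' = \sigma_{[0,|\sigma'|-1]}$ already forces membership in $\lang$ regardless of the continuation. It remains to show \LTL-definability over infinite words. Using \cref{prop:reverse} in the opposite direction, pick $\alpha \in \LTLFP$ with $\langfin(\alpha) = \langfin(\phi)$. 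Then $\lang(\ltl{F\alpha}) = \langfin(\alpha) \cdot (2^\Sigma)^\omega = \lang$, by the same pure-past argument as above, so $\lang \in \sem{\LTL}$.

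The only subtle point is the bridge between the two semantics of $\alpha$: in the finite-word semantics, $\alpha$ is evaluated at the last state of a finite word, while in the infinite-word semantics of $\ltl{F\alpha}$, $\alpha$ is evaluated at some intermediate position $i$ of $\sigma$. These agree precisely because $\alpha$ is pure past, so the evaluation at position $i$ of $\sigma$ coincides with the evaluation at the last position of the prefix $\sigma_{[0,i]}$. This observation, applied in both directions, is the main ingredient of the proof; once it is isolated, both inclusions follow by a direct rewriting.
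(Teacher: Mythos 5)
Your proof is correct and follows essentially the same route as the paper: both directions reduce to the $\ltl{F\alpha}$ normal form of \cref{prop:galphafalpha} together with the observation that a pure-past formula evaluated at position $i$ depends only on the prefix $\sigma_{[0,i]}$. The only difference is that you invoke \cref{prop:reverse} as a black box to pass between pure-past and pure-future formulas over finite words, whereas the paper inlines that step by explicitly reversing the formula and appealing to the closure of star-free languages under reversal.
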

\begin{proof}
  ($\subseteq$) By \cref{prop:galphafalpha} we know that each language
  $\lang\in\sem{\LTL}\cap\coSAFETY$ is definable by a formula of the form
  $\ltl{F\alpha}$ where $\alpha\in\LTLFP$. Hence for each $\sigma\in\lang$
  there exists an $n$ such that $\sigma,n\models\alpha$, hence
  $\sigma_{[0,n]},n\models\alpha$. Note that $\sigma_{[n+1,\infty]}$ is
  unconstrained. By replacing all the \emph{since}/\emph{yesterday}/\emph{weak
  yesterday} operators in $\alpha$ with \emph{until}/\emph{tomorrow}/\emph{weak
  tomorrow} operators, we obtain an \LTL formula $\alpha^r$ such that
  $(\sigma_{[0,n]})^r,0\models\alpha^r$ (where $\sigma^r$ is the reverse of
  $\sigma$). Since \LTL captures star-free
  languages~\cite{lichtenstein1985glory} and star-free languages are closed by
  reversal, there is also an \LTL formula $\beta$ such that
  $\sigma_{[0,n]},0\models\beta$. Hence
  $\lang=\langfin(\beta)\cdot(2^\Sigma)^\omega$, and we proved that
  $\sem{\LTL}\cap\coSAFETY\subseteq\semfininf{\LTL}$.

  ($\supseteq$) Given $\lang\in\semfininf{\LTL}$, we know
  $\lang=\langfin(\beta)\cdot(2^\Sigma)^\omega$ for some \LTL formula $\beta$.
  Hence, for each $\sigma\in\lang$ there is an $n$ such that
  $\sigma_{[0,n]},0\models\beta$. Since \LTL captures star-free languages and
  star-free languages are closed by reversal, there is an \LTL formula
  $\alpha^r$ such that $(\sigma_{[0,n]})^r,0\models\alpha^r$. Now, by replacing
  all the \emph{until}/\emph{tomorrow}/\emph{weak tomorrow} operators in
  $\alpha^r$ with \emph{since}/\emph{yesterday}/\emph{weak yesterday} operators,
  we obtain an \LTLFP formula $\alpha$ such that
  $\sigma_{[0,n]},n\models\alpha$. Hence, $\sigma$ is such that there is an $n$
  such that $\sigma,n\models\alpha$, \ie $\sigma\models\ltl{F\alpha}$.
  Therefore, by \cref{prop:galphafalpha}, $\lang\in\sem{\LTL}\cap\coSAFETY$, and
  this in turn implies that $\semfininf{\LTL}\subseteq\sem{\LTL}\cap\coSAFETY$.
\end{proof}
Now, we show that, over finite words, the \emph{release} and the
\emph{globally} modalities can be defined only in terms of the \emph{weak
tomorrow}, the \emph{until} and the \emph{eventually} modalities.
Similarly, we also show that, over finite trace, the \emph{until} and the
\emph{eventually} operators can be defined only in terms of the \emph{tomorrow},
the \emph{release} and the \emph{globally} modalities.
\begin{lem}
  \label{lem:ltlfinequivcosafetyltlfin}
  $\semfin{\LTL}=\semfin{\safetyltl}=\semfin{\cosafetyltl}$
\end{lem}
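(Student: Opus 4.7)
My approach is to deduce the triple equality from the two non-trivial inclusions $\semfin{\LTL} \subseteq \semfin{\cosafetyltl}$ and $\semfin{\LTL} \subseteq \semfin{\safetyltl}$; the reverse inclusions hold trivially because both fragments are syntactic subsets of $\LTL$. For each non-trivial inclusion, I would put the given $\LTL$ formula into negated normal form so that its temporal operators are among $\ltl{X}$, $\ltl{wX}$, $\ltl{U}$ and $\ltl{R}$, and then eliminate the forbidden operator ($\ltl{R}$ in the first case, $\ltl{U}$ in the second) by a structural-induction rewrite that preserves the finite-word language.

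The cornerstone of the rewrites is the availability of an end-of-word detector on finite words: $\ltl{wX\false}$ holds exactly at position $|\sigma|-1$, while dually $\ltl{X\true}$ holds exactly at the non-last positions. Using these, I would prove the two finite-word equivalences
\[
\ltl{\phi R \psi} \equiv \ltl{\psi U (\psi \land (\phi \lor wX\false))}
\qquad\text{and}\qquad
\ltl{\phi U \psi} \equiv \ltl{\psi R (\psi \lor (\phi \land X\true))}.
\]
The first lives in $\cosafetyltl$ and the second in $\safetyltl$, provided that $\phi$ and $\psi$ have already been translated into the respective fragment by induction. Both are established by a short case analysis on the two-disjunct semantics of $\ltl{R}$: for the first, the ``global'' disjunct of $\ltl{R}$ corresponds to taking the maximal witness $j = |\sigma|-1$ in the $\ltl{U}$-formula (selected by $\ltl{wX\false}$), while the ``eventually'' disjunct corresponds to an internal witness at which $\phi$ fires; for the second, the $\ltl{U}$-witness to associate with a model is the smallest $j^*$ at which $\psi$ holds, and the guard $\ltl{X\true}$ forces $\phi$ to hold on the prefix $[i,j^*-1]$ without over-constraining the last position.

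Once these identities are verified, the translation is routine: a structural induction handles atomic and Boolean cases trivially, passes $\ltl{X}$ and $\ltl{wX}$ through unchanged (they belong to both fragments), and replaces each $\ltl{R}$-subformula (respectively, each $\ltl{U}$-subformula) with the right-hand side of the corresponding equivalence, applied to inductively translated operands. Combining the resulting translations with the trivial containment $\semfin{\safetyltl}\cup\semfin{\cosafetyltl} \subseteq \semfin{\LTL}$ then yields the desired triple equality.

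The main obstacle will be to get the boundary cases of the two equivalences right, in particular when the current position already coincides with $|\sigma|-1$ or when the chosen $\ltl{U}/\ltl{R}$-witness is the current position itself; the shape of the guards $\ltl{wX\false}$ and $\ltl{X\true}$ is dictated precisely by these corner cases. Once those are handled, no delicate reasoning about word length or fixpoints is needed, and the induction propagates through the formula in the obvious way.
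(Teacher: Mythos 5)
Your proposal is correct and follows essentially the same route as the paper: both reduce the claim to the two nontrivial inclusions and eliminate $\ltl{R}$ (resp.\ $\ltl{U}$) by structural induction using $\ltl{wX\false}$ as an end-of-word marker (resp.\ $\ltl{X\true}$ as a not-last-position guard). Your single identities $\ltl{\phi R \psi} \equiv \ltl{\psi U (\psi \land (\phi \lor wX\false))}$ and $\ltl{\phi U \psi} \equiv \ltl{\psi R (\psi \lor (\phi \land X\true))}$ are just the paper's two-step rewrites with $\ltl{U}$ distributed over $\lor$ (resp.\ $\ltl{R}$ over $\land$), and they check out on the boundary cases.
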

\begin{proof}
  Since \safetyltl and \cosafetyltl are fragments of \LTL, we only need to
  show one direction, \ie that $\semfin{\LTL}\subseteq\semfin{\safetyltl}$
  and $\semfin{\LTL}\subseteq\semfin{\cosafetyltl}$. At first, we show
  the case of \cosafetyltl. For each \LTL formula
  $\phi$, we can build a \cosafetyltl formula whose language over finite
  words is exactly $\langfin(\phi)$. The \emph{globally} operator can be
  replaced by means of an \emph{until} operator whose existential part
  always refers to the last position of the word. In turn, this can be done
  with the formula $\ltl{wX\false}$, which is true only at the final
  position:
  \begin{align*}
    \ltl{G\phi \equiv \phi U (\phi \land wX \false)}
  \end{align*}
  Similarly, the \emph{release} operator can be expressed by means of a
  \emph{globally} operator in disjunction with an \emph{until} operator:
  \begin{align*}
    \ltl{\phi_1 R \phi_2} &\equiv \ltl{G\phi_2 \lor (\phi_2 U (\phi_1
      \land \phi_2))} 
    \equiv \ltl{\big(\phi_2 U (\phi_2 \land wX \false)\big) \lor
      \big(\phi_2 U (\phi_1 \land \phi_2)\big)}
  \end{align*}
  Hence, $\semfin{\LTL}=\semfin{\cosafetyltl}$. Now, if we exploit the duality
  between the \emph{eventually}/\emph{until} and the
  \emph{globally}/\emph{release} operators, we obtain:
  \begin{align*}
    \ltl{F\phi} &\equiv \ltl{\phi R (\phi | X\true)} \\
    \ltl{\phi_1 U \phi_2} &\equiv 
      \ltl{\phi_2 R (\phi_2 | X\true) & \phi_2 R (\phi_1 \lor \phi_2)}
  \end{align*}
  Hence, since we showed that any \emph{eventually} operator and any
  \emph{until} operator can be defined only in terms of the
  \emph{tomorrow}, the \emph{globally}, and the \emph{release} operators, we
  have that $\semfin{\LTL}=\semfin{\safetyltl}$.
\end{proof}
Then, we relate \cosafetyltl on finite words and \coSafetyFO.

\begin{lem}
  \label{lem:cosafetyltlinfsonesfoguardinf}
  $\semfininf{\cosafetyltl}=\sem{\coSafetyFO}$
\end{lem}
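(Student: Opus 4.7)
The plan is to prove the two inclusions separately.

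For $\sem{\coSafetyFO} \subseteq \semfininf{\cosafetyltl}$, I will just chain lemmas already established in this section: by \cref{lemma:cosafetyfoseminf}, $\sem{\coSafetyFO} = \semfininf{\coSafetyFO}$; by \cref{cor:cosafetyfoiscosafetyltlnoweak}, $\semfin{\coSafetyFO} = \semfin{\cosafetyltlnoweak}$; and since $\cosafetyltlnoweak$ is a syntactic fragment of $\cosafetyltl$, $\semfin{\cosafetyltlnoweak} \subseteq \semfin{\cosafetyltl}$. Composing these, $\sem{\coSafetyFO} = \semfininf{\cosafetyltlnoweak} \subseteq \semfininf{\cosafetyltl}$.

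For the reverse direction, let $\lang = \langfin(\phi) \cdot (2^\Sigma)^\omega$ for some $\phi \in \cosafetyltl$. The plan is to construct a $\coSafetyFO$ formula $\Phi(x_0)$ with $\lang(\Phi) = \lang$, by first defining, by structural induction on $\phi$, a two-variable $\coSafetyFO$ formula $\Psi_\phi(x, y)$ internalizing the finite-word semantics of $\phi$ over a subword: for every $\sigma$ and every $0 \le x \le y < |\sigma|$, $\sigma_{[x,y]} \finmodels \phi$ iff $(\sigma)^s, x, y \models \Psi_\phi(x, y)$. The recipe follows the pattern of \cref{lemma:fromTLtocosafetyfo}, but every quantification is restricted to the interval $[x, y]$: $\ltl{X \phi_1}$ becomes an existential over the successor of $x$ additionally constrained to lie at or below $y$; $\ltl{\phi_1 U \phi_2}$ becomes an existential ranging over $[x, y]$ with a nested universal bounded between $x$ and the witness. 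The new ingredient is $\ltl{wX \phi_1}$, which I translate as $(x = y) \lor \Psi_{\ltl{X \phi_1}}(x, y)$: the disjunct $x = y$ captures the case where $x$ is the last position of the subword, mirroring the free pass given by weak tomorrow at the end of a finite word. Setting $\Phi(x_0) := \exists y(x_0 \le y \land \Psi_\phi(x_0, y))$ then yields a formula satisfied at $x_0 = 0$ by exactly those infinite words having a prefix satisfying $\phi$ in the finite-word semantics, that is, $\lang(\Phi) = \lang$, witnessing $\lang \in \sem{\coSafetyFO}$.

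The main obstacle is to verify that $\Psi_\phi$ actually lives in the grammar of $\coSafetyFO$. Existentials must come in the form $\exists z(x < z \land \cdots)$, so the upper bound $z \le y$ must be pushed into the body as an atomic conjunct rather than into the quantifier guard; the universal arising from $\ltl{U}$ must fit the form $\forall w(x < w < z \implies \cdots)$ between two free variables. Non-strict comparisons are unfolded into strict and equality cases as explained after the definition of $\coSafetyFO$, and the bounded-successor constraint $z = x + 1$ used in the $\ltl{X}$ case is expressed as $\forall w(x < w < z \implies \false)$, which is a legal $\coSafetyFO$ universal. The weak tomorrow case, which is precisely where the finite- and infinite-word semantics of $\cosafetyltl$ disagree, is absorbed cleanly by the boundary disjunct $x = y$; this is the only syntactic point at which the translation needs to do anything substantively different from the one of \cref{lemma:fromTLtocosafetyfo}.
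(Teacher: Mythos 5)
Your proof is correct, but it takes a genuinely different route from the paper's on both inclusions. For $\semfininf{\cosafetyltl}\subseteq\sem{\coSafetyFO}$, the paper does not translate \cosafetyltl formulas directly: it first passes to full \FO via $\semfin{\cosafetyltl}=\semfin{\LTL}=\semfin{\FO}$ (\cref{lem:ltlfinequivcosafetyltlfin} and Kamp's theorem) and then relativizes an arbitrary NNF \FO formula $\phi$ to a window $[x,y)$, forming $\exists y(x\le y\land\phi'(x,y))$. You instead give a direct structural induction on \cosafetyltl, extending the translation of \cref{lemma:fromTLtocosafetyfo} with an upper bound $y$ on all quantifiers and with the single new case $\ltl{wX\phi_1}\mapsto (x=y)\lor\Psi_{\ltl{X\phi_1}}(x,y)$, which correctly internalizes the finite-word semantics of the weak tomorrow at the window's right endpoint. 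This is more elementary and self-contained (no appeal to Kamp or to \cref{lem:ltlfinequivcosafetyltlfin}), at the cost of having to argue a two-free-variable invariant and re-verify membership in the \coSafetyFO grammar; the paper's relativization trick is reused later for \cref{lem:fo:to:cosafetyfo}, so the authors get it essentially for free. For the converse inclusion, the paper again routes through $\semfininf{\FO}$ and Kamp, whereas you chain \cref{lemma:cosafetyfoseminf}, \cref{cor:cosafetyfoiscosafetyltlnoweak}, and the syntactic inclusion $\cosafetyltlnoweak\subseteq\cosafetyltl$; your chain is shorter, stays within results already proved in \cref{sec:fragments}, and in fact yields the slightly sharper identity $\sem{\coSafetyFO}=\semfininf{\cosafetyltlnoweak}$. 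The only point deserving a remark is the unfolding of the non-strict guards ($x\le z$, $x\le w<z$) into the strict forms admitted by the grammar, including the degenerate case $z=x$ where the inner universal must become vacuous; this is handled by the rewriting described after the definition of \coSafetyFO and is at the same level of detail as the paper's own use of $\forall z(x\le z<y\implies\cdot)$ in \cref{lemma:fromTLtocosafetyfo}.
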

\begin{proof}
  ($\subseteq$) We have that $\semfin{\cosafetyltl}=\semfin{\LTL}$ by
  \cref{lem:ltlfinequivcosafetyltlfin}, and this implies that
  $\semfininf{\cosafetyltl}=\semfininf{\LTL}$, and
  $\semfininf{\cosafetyltl}=\semfininf{\FO}$ by \cref{th:ltlfo}. Now, let
  $\phi\in\FO$, and suppose \emph{w.l.o.g.}\ that $\phi$ is in \emph{negated
  normal form}.
  We define the formula $\phi'(x,y)$, where $x$ and $y$ are two fresh variables
  that do not occur in $\phi$, as the formula obtained from $\phi$ by a)
  replacing each subformula of $\phi$ of type $\exists z \phi_1$ with $\exists
  z(x \le z \land \phi_1)$, and b) by replacing each subformula of $\phi$ of
  type $\forall z \phi_1$ with $\forall z(x \le z < y \to \phi_1)$.
  Now, consider the formula $\psi = \exists y (x \le y \land \phi'(x,y))$.
  Note that $\psi$ is a \coSafetyFO formula. When interpreted over
  \emph{infinite} words, the models of $\psi$ are exactly those containing a
  prefix that belongs to $\langfin(\phi)$, with the remaining suffix
  unconstrained, that is $\lang(\psi) = \langfin(\phi)\concinf$, hence
  $\semfininf{\FO}\subseteq\sem{\coSafetyFO}$, and this implies that
  $\semfininf{\cosafetyltl}\subseteq\sem{\coSafetyFO}$.

  ($\supseteq$) We know by \cref{lemma:cosafetyfoseminf} that
  $\sem{\coSafetyFO}=\semfininf{\coSafetyFO}$. Since \coSafetyFO formulas are
  also \FO formulas, we have $\sem{\coSafetyFO}\subseteq\semfininf{\FO}$. By
  \cref{th:ltlfo} and \cref{lem:ltlfinequivcosafetyltlfin}, we obtain that
  $\sem{\coSafetyFO}\subseteq\semfininf{\cosafetyltl}$.
\end{proof}
We are ready now to state the main result.
\begin{thm}
\label{th:mainth}
  $\sem{\LTL}\cap\coSAFETY=\sem{\coSafetyFO}$
\end{thm}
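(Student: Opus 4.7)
The plan is to prove the theorem by chaining together the three lemmas just established, which between them cover every link needed. The proof will essentially be a transitive composition, with no new combinatorial content, since all the work has been done upstream.

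More precisely, I would proceed as follows. First, invoke \cref{lemma:cosafetytoltl} to rewrite the left-hand side as $\sem{\LTL}\cap\coSAFETY=\semfininf{\LTL}$, reducing the problem to finite-word \LTL-definability of the prefix. Second, apply \cref{lem:ltlfinequivcosafetyltlfin}, which gives $\semfin{\LTL}=\semfin{\cosafetyltl}$; taking the concatenation with $(2^\Sigma)^\omega$ on both sides yields $\semfininf{\LTL}=\semfininf{\cosafetyltl}$. Third, \cref{lem:cosafetyltlinfsonesfoguardinf} asserts $\semfininf{\cosafetyltl}=\sem{\coSafetyFO}$, closing the chain:
\[
  \sem{\LTL}\cap\coSAFETY
  \;=\;\semfininf{\LTL}
  \;=\;\semfininf{\cosafetyltl}
  \;=\;\sem{\coSafetyFO}.
\]

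There is no genuine obstacle left in this final step, because the heavy lifting has been distributed across the preceding lemmas: \cref{lemma:cosafetytoltl} uses \cref{prop:galphafalpha} together with closure of star-free languages under reversal to pass between $\ltl{F\alpha}$-form safety and the $\langfin(\beta)\cdot(2^\Sigma)^\omega$-form; \cref{lem:ltlfinequivcosafetyltlfin} handles the peculiarities of finite-word semantics by expressing $\ltl{G}$ and $\ltl{R}$ via $\ltl{wX\false}$; and \cref{lem:cosafetyltlinfsonesfoguardinf} provides the key bridge from finite-word \cosafetyltl to \coSafetyFO via guarded quantification, using \cref{lemma:cosafetyfoseminf} for the reverse direction. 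The main thing to be careful about is merely that the concatenation operator $\semfininf{\cdot}$ is monotone (so equality of the finite-word classes yields equality of the $\semfininf{\cdot}$ classes), which is immediate from the definition $\semfininf{\mathsf{L}}=\set{\lang\cdot(2^\Sigma)^\omega\suchthat \lang\in\semfin{\mathsf{L}}}$.

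Hence the proof is a one-line chain of equalities, each justified by citing the appropriate preceding lemma, with a parenthetical remark (if needed for clarity) that equality of finite-word language classes lifts to equality of their $\semfininf{\cdot}$ closures.
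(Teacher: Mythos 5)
Your proposal is correct and follows exactly the same chain of equalities as the paper's own proof: \cref{lemma:cosafetytoltl}, then lifting \cref{lem:ltlfinequivcosafetyltlfin} to the $\semfininf{\cdot}$ classes, then \cref{lem:cosafetyltlinfsonesfoguardinf}. Nothing is missing.
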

\begin{proof}
  We know that $\sem{\LTL}\cap\coSAFETY=\semfininf{\LTL}$ by
  \cref{lemma:cosafetytoltl}. Then, by \cref{lem:ltlfinequivcosafetyltlfin} we
  know that $\semfin{\LTL}=\semfin{\cosafetyltl}$, and this in turn implies that
  $\semfininf{\LTL}=\semfininf{\cosafetyltl}$. Since
  $\semfininf{\cosafetyltl}=\sem{\coSafetyFO}$ by
  \cref{lem:cosafetyltlinfsonesfoguardinf}, we conclude that
  $\sem{\LTL}\cap\coSAFETY=\sem{\coSafetyFO}$.
\end{proof}

This result together with \cref{thm:cosafetyltltofo} allow us to conclude the
following.
\begin{thm}
  \label{thm:final}
  $\sem{\safetyltl}=\sem{\LTL}\cap\SAFETY$
\end{thm}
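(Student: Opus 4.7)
The plan is brief: combine the two preceding theorems and apply duality. From \cref{thm:cosafetyltltofo} and \cref{th:mainth} I immediately get the co-safety form of the desired identity, namely
\[
  \sem{\cosafetyltl} \;=\; \sem{\coSafetyFO} \;=\; \sem{\LTL}\cap\coSAFETY.
\]
The only remaining task is to transport this equality to the safety side via complementation.

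For this I would rely on three standard dualities. First, \LTL is closed under negation. Second, \cref{def:safelang,def:cosafelang} are manifestly dual: a language $\lang\subseteq(2^\Sigma)^\omega$ is safety iff its complement $(2^\Sigma)^\omega\setminus\lang$ is co-safety, so $\lang\in\sem{\LTL}\cap\SAFETY$ iff $(2^\Sigma)^\omega\setminus\lang\in\sem{\LTL}\cap\coSAFETY$. Third, the NNF grammars of \safetyltl and \cosafetyltl are De Morgan duals of one another: pushing a negation through a \safetyltl formula using $\lnot\ltl{X\phi}\equiv\ltl{wX}\lnot\phi$, $\lnot\ltl{wX\phi}\equiv\ltl{X}\lnot\phi$ and $\lnot(\ltl{\phi R \psi})\equiv \lnot\phi\,\ltl{U}\,\lnot\psi$ (and symmetrically in the other direction) yields a \cosafetyltl formula whose language is the complement, so $\lang\in\sem{\safetyltl}$ iff $(2^\Sigma)^\omega\setminus\lang\in\sem{\cosafetyltl}$.

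Chaining these three equivalences with the co-safety identity above gives
\[
  \lang\in\sem{\safetyltl}
  \;\iff\; (2^\Sigma)^\omega\setminus\lang\in\sem{\cosafetyltl}
  \;\iff\; (2^\Sigma)^\omega\setminus\lang\in\sem{\LTL}\cap\coSAFETY
  \;\iff\; \lang\in\sem{\LTL}\cap\SAFETY,
\]
which is exactly the desired equality. I do not foresee any real obstacle here: the whole argument is essentially a one-line chain once the syntactic duality between \safetyltl and \cosafetyltl is spelled out, and that duality is a routine structural check on the NNF grammars given in \cref{def:cosafetyltl}. The substantive content of the theorem is already carried by \cref{th:mainth}; the present statement is, in effect, its dualization.
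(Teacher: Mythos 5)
Your proposal is correct and matches the paper's own (very terse) derivation: the paper likewise obtains $\sem{\cosafetyltl}=\sem{\LTL}\cap\coSAFETY$ by chaining \cref{thm:cosafetyltltofo} with \cref{th:mainth} and then dualizes, leaving the complementation argument implicit where you spell it out. The three dualities you invoke (closure of \LTL under negation, complementation swapping \SAFETY and \coSAFETY, and the De Morgan duality of the NNF grammars of \safetyltl and \cosafetyltl) are all sound, so nothing is missing.
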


Note that by Observation~\ref{obs:observationnoweaknext} and
\cref{lem:cosafetyinfcosafetynoweakfin} on one hand, and by
\cref{lemma:cosafetytoltl,lem:ltlfinequivcosafetyltlfin} on the other, the
question of whether $\sem{\safetyltl}=\sem{\LTL}\cap\SAFETY$ can be reduced to
whether $\semfininf{\cosafetyltl}=\semfininf{\cosafetyltlnoweak}$. If
\cosafetyltl and \cosafetyltlnoweak were equivalent over finite words, this
would already prove \cref{thm:final}. 
However, the next theorem states that this is not the case.

\begin{thm}
\label{thm:negativecosafetyinfinitefinite}
  $\semfin{\cosafetyltl} \neq \semfin{\cosafetyltlnoweak}$
\end{thm}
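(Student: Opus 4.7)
The plan is to exhibit a single \cosafetyltl formula whose finite-word language lies outside $\semfin{\cosafetyltlnoweak}$. A natural candidate is $\phi = \ltl{wX\false}$, which uses only the \emph{weak tomorrow} operator applied to an atomic constant and is in negated normal form, hence belongs to \cosafetyltl. Unfolding the semantics of \emph{weak tomorrow} on finite words, $\sigma, 0 \models \ltl{wX\false}$ holds iff either $1 = |\sigma|$ or $\sigma, 1 \models \false$; since the second disjunct never holds, one obtains $\langfin(\phi) = \set{\sigma \in (2^\Sigma)^+ \suchthat |\sigma| = 1}$, exactly the set of length-one words (a fact already observed in the excerpt immediately after \cref{lem:cosafetyfincosafetynoweakfin}).

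Next, I would argue that no $\psi \in \cosafetyltlnoweak$ can satisfy $\langfin(\psi) = \langfin(\phi)$. By \cref{lem:cosafetyfincosafetynoweakfin}, every $\psi \in \cosafetyltlnoweak$ satisfies $\langfin(\psi) = \langfin(\psi) \cdot (2^\Sigma)^*$, so its language of finite words is closed under arbitrary finite extensions. But $\langfin(\phi)$ is nonempty (because $(2^\Sigma)^+$ is nonempty) and contains no word of length greater than one; in particular, extending any word of $\langfin(\phi)$ by one letter yields a word of length two, which is not in $\langfin(\phi)$. Hence $\langfin(\phi) \neq \langfin(\phi) \cdot (2^\Sigma)^*$, so $\langfin(\phi) \notin \semfin{\cosafetyltlnoweak}$, and therefore $\semfin{\cosafetyltl} \neq \semfin{\cosafetyltlnoweak}$.

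There is no substantial obstacle here: once the behavior of \emph{weak tomorrow} at the last position of a finite word is pinned down, the separation follows immediately from the finite-extension closure established in \cref{lem:cosafetyfincosafetynoweakfin}. The only point meriting a brief mention is the status of $\false$ in \cosafetyltl, which is unproblematic since $\false$ can be taken as the shorthand $p \land \neg p$ for some $p \in \Sigma$ and thus conforms to the negated normal form syntax of \cosafetyltl without introducing negation over compound subformulas.
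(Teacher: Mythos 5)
Your proof is correct and follows essentially the same route as the paper's: both rest on the fact that every \cosafetyltlnoweak-definable language of finite words is closed under concatenation with arbitrary finite suffixes (which you obtain by citing \cref{lem:cosafetyfincosafetynoweakfin}, while the paper re-derives it by a brief structural induction, phrasing the consequence as ``$\langfin(\psi)$ is either empty or infinite''), and both then exhibit a \cosafetyltl formula built around $\ltl{wX\false}$ whose finite-word language violates that closure. Your witness $\ltl{wX\false}$ is marginally simpler than the paper's $\ltl{a & X(a & wX\false)}$, and your remark on expressing $\false$ in negated normal form is a sensible, unproblematic detail.
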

\begin{proof}
  Note that in \cosafetyltlnoweak we cannot hook the final position of the
  word without the \emph{weak tomorrow} operator. For these reasons, given
  a \cosafetyltlnoweak formula $\phi$, with a simple structural induction
  we can prove that for each $\sigma\in(2^\Sigma)^+$ such that
  $\sigma\models\phi$, it holds that $\sigma\sigma'\models\phi$ for any
  $\sigma'\in(2^\Sigma)^+$, \ie all the extensions of $\sigma$ satisfy
  $\phi$ as well. This implies that $\langfin(\phi)$ is either empty (\ie
  if $\phi$ is unsatisfiable) or infinite.  Instead, by using the
  $\emph{weak tomorrow}$ operator to hook the last position of the word, we
  can describe a finite non-empty language, for example as in the formula
  $\phi = \ltl{a & X(a & wX\false)}$. The language of $\phi$ is
  $\lang(\phi)=\set{\mathsf{aa}}$, including exactly one word, hence
  $\lang(\phi)$ cannot be described without the \emph{weak tomorrow}
  operator.
\end{proof}

Note that \cref{thm:negativecosafetyinfinitefinite} does \emph{not} contradict
\cref{thm:final}, that is, it does not imply that
$\semfininf{\cosafetyltl}\ne\semfininf{\cosafetyltlnoweak}$. For example,
consider again the formula $\ltl{a & X(a & wX\false)}$. It cannot be expressed without the \emph{weak tomorrow} operator, yet it holds that:
$
  \langfin(\ltl{a & X(a & wX\false)})\cdot(2^\Sigma)^\omega =
  \langfin(\ltl{a & Xa})\cdot(2^\Sigma)^\omega
$.


\section{The (co)safety fragment of \LTL over finite words}
\label{sec:finite}

\* 
  Scaletta:
  * Definitions of Safety and co-safety languages over finite words
  * This section is devoted to the proof of the theorem: coSafetyLTL(-wX)
    captures the cosafety fragment of LTL over finite words
  * Teorema: for any formula of FO, there exists one in coSafetyFO such
    that ...
  * Lemma:
  * Corollario:
  * Dualization of all results for the safety case.
*/

So far, we focused primarily on safety and co-safety languages of infinite
words. Naturally, safety and co-safety languages of \emph{finite words}
deserve attention as well. In this section, we define the notion of
(co-)safety languages of finite words and we prove that
\cosafetyltlnoweak (resp., \safetyltlnonext), \ie the logic
obtained from \cosafetyltl (resp., \safetyltl) by forbidding the $\ltl{wX}$
(resp., the $\ltl{X}$) operator, captures the set of co-safety (resp.,
safety) languages of \LTL interpreted over finite words.

We start with the definitions of safety and co-safety languages of finite
words, which (unsurprisingly) are the natural restriction of
\cref{def:safelang,def:cosafelang} to finite words.

\begin{defi}
  \label{def:fin:safelang}
  Let $\lang \subseteq A^*$ be a language of finite words. We say that
  $\lang$ is a \emph{safety language} if and only if for all the words
  $\sigma \in A^*$ it holds that, if $\sigma \not \in \lang$, then there
  exists an $i<|\sigma|$ such that, for all $\sigma'\in A^*$,
  $\sigma_{[0,i]}\cdot\sigma' \not \in \lang$. The class of safety
  languages of finite words is denoted as \SAFETYfin.
\end{defi}

\begin{defi}
  \label{def:fin:cosafelang}
  Let $\lang \subseteq A^*$ be a language of finite words. We say that
  $\lang$ is a \emph{co-safety language} if and only if for all the words
  $\sigma \in A^*$ it holds that, if $\sigma \in \lang$, then there exists
  an $i<|\sigma|$ such that, for all $\sigma'\in A^*$,
  $\sigma_{[0,i]}\cdot\sigma' \in \lang$. The class of co-safety languages
  of finite words is denoted as \coSAFETYfin.
\end{defi}

The remaining part of the section is devoted to the proof of the following
theorem, which gives two characterizations of the safety and co-safety
fragments of \LTL over finite words, one in terms of temporal logics and
one in terms of first-order logics.

\begin{restatable}{thm}{thfinchar}
\label{th:fin:char}
  It holds that:
  \begin{itemize}
    \item $\semfin{\LTL} \cap \coSAFETYfin = \semfin{\cosafetyltlnoweak}
      = \semfin{\coSafetyFO}$
    \item $\semfin{\LTL} \cap \SAFETYfin = \semfin{\safetyltlnonext}
      = \semfin{\SafetyFO}$
  \end{itemize}
\end{restatable}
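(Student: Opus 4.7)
We focus on the co-safety statement; the safety statement follows by dualization, since negation swaps $\coSafetyFO$ with $\SafetyFO$, $\cosafetyltlnoweak$ with $\safetyltlnonext$, and $\coSAFETYfin$ with $\SAFETYfin$. The equality $\semfin{\cosafetyltlnoweak} = \semfin{\coSafetyFO}$ is already \cref{cor:cosafetyfoiscosafetyltlnoweak}, so the remaining work is to identify both classes with $\semfin{\LTL} \cap \coSAFETYfin$.

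The easy inclusion $\semfin{\cosafetyltlnoweak} \subseteq \semfin{\LTL} \cap \coSAFETYfin$ is immediate: containment in $\semfin{\LTL}$ is syntactic, while \cref{lem:cosafetyfincosafetynoweakfin} gives $\langfin(\phi) = \langfin(\phi) \cdot (2^\Sigma)^*$ for every $\phi \in \cosafetyltlnoweak$, yielding the co-safety condition with $i = |\sigma|-1$ as witness for each $\sigma \in \langfin(\phi)$.

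For the reverse inclusion $\semfin{\LTL} \cap \coSAFETYfin \subseteq \semfin{\coSafetyFO}$, we adapt the $\FO$-to-$\coSafetyFO$ relativization used in the proof of \cref{lem:cosafetyltlinfsonesfoguardinf}. Given $\lang \in \semfin{\LTL} \cap \coSAFETYfin$, Kamp's theorem (\cref{th:ltlfo}) yields $\phi \in \FO$ in negated normal form with $\langfin(\phi) = \lang$. Replacing each $\exists z\,\phi_1$ by $\exists z(x \leq z \land \phi_1)$ and each $\forall z\,\phi_1$ by $\forall z(x \leq z < y \to \phi_1)$, and then setting $\psi(x) := \exists y(x \leq y \land \phi'(x,y))$, yields a $\coSafetyFO$ formula. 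The key claim $\langfin(\psi) = \lang$ would be established by showing that $\psi(0)$ holds on a finite $\sigma$ exactly when some prefix of $\sigma$ lies in $\langfin(\phi)$: the forward direction is witnessed by $y = |\sigma|-1$, while the converse uses co-safety---the identity $\lang = \lang \cdot (2^\Sigma)^*$ upgrades prefix-membership to $\sigma$-membership.

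The main obstacle is making this prefix-equivalence precise on finite words. Since the $\coSafetyFO$ grammar leaves existentials unbounded above, on a finite word the rewritten existentials may still range past $y$; the construction must therefore either insert explicit upper bounds $z \leq y$ (expressible as $z < y \lor z = y$ within the grammar) or invoke the co-safety upward-closure $\lang = \lang \cdot (2^\Sigma)^*$ to show that any existential witness beyond position $y$ is innocuous. Once this prefix-equivalence is in hand, the co-safety statement follows, and the safety statement is obtained by negating the construction, since $\phi \mapsto \neg\phi$ bijects $\coSafetyFO$ with $\SafetyFO$ (and, at the temporal level, $\cosafetyltlnoweak$ with $\safetyltlnonext$).
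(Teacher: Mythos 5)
Your proof is correct and relies on the same core machinery as the paper's, but it connects $\semfin{\LTL}\cap\coSAFETYfin$ to the first-order side by a somewhat different route. The paper's proof is a chain of three equalities: $\semfin{\LTL}\cap\coSAFETYfin=\semfinfin{\FO}$ (via \cref{prop:reverse,th:ltlfo,prop:galphafalpha}, i.e.\ the finite-word analogue of the $\ltl{F\alpha}$ normal form together with the reversal argument), then $\semfinfin{\FO}=\semfin{\coSafetyFO}$ (\cref{lem:cosafetyfo:finfin}, whose engine is exactly the relativization you describe, spelled out in \cref{lem:fo:to:cosafetyfo}), and finally \cref{cor:cosafetyfoiscosafetyltlnoweak}. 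You replace the first link by the elementary observation that a finite-word language is co-safety precisely when it is extension-closed, $\lang=\lang\cdot(2^\Sigma)^*$: this yields $\semfin{\cosafetyltlnoweak}\subseteq\semfin{\LTL}\cap\coSAFETYfin$ directly from \cref{lem:cosafetyfincosafetynoweakfin}, and it lets you conclude $\langfin(\psi)=\lang\cdot(2^\Sigma)^*=\lang$ for the converse inclusion without invoking the pure-past normal form or \cref{prop:reverse}. This is a modest but genuine simplification, and it sidesteps the paper's implicit appeal to a finite-word version of \cref{prop:galphafalpha}, which is only stated for infinite words. Your flagged ``obstacle'' is real, and of your two proposed fixes the first is the right one: the relativization must bound the existentials above by $y$, writing $\exists z(x\le z<y\land\cdots)$ as in \cref{lem:fo:to:cosafetyfo}, rather than leaving them unbounded as in the infinite-word construction of \cref{lem:cosafetyltlinfsonesfoguardinf}. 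With unbounded existentials $\phi'(x,y)$ no longer expresses satisfaction of $\phi$ on the prefix up to $y$, and the alternative of arguing that stray witnesses are ``innocuous'' by co-safety does not obviously repair this, since the mismatch is at the level of the formula's semantics, not of the language. With the bounded version the prefix-equivalence is immediate and your argument closes.
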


We first prove the following auxiliary lemma.

\begin{lem}
\label{lem:fo:to:cosafetyfo}
  For any formula $\phi(x)\in\FO$ with one free variable, there exists a formula
  $\phi'(x)\in\coSafetyFO$ such that $\langfin(\phi'(x))
  = \langfin(\phi(x))\cdot(2^\Sigma)^*$.
\end{lem}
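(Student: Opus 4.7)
The plan is to ``bound'' the FO formula using a fresh \emph{horizon} variable $y$: I would relativize every quantifier of $\phi(x)$ to positions between $x$ and $y$, and then existentially quantify $y$. Intuitively, $y$ marks the last position of a prefix, and the outer existential captures ``there exists a prefix satisfying $\phi$''.

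More precisely, I would first put $\phi(x)$ in negated normal form and then inductively define $\phi^y(x,y)$ by leaving atomic formulas and Boolean connectives unchanged and translating each quantifier as $\exists z\,\psi \leadsto \exists z(x \le z \le y \land \psi^y)$ and $\forall z\,\psi \leadsto \forall z(x \le z \le y \implies \psi^y)$; then I would set $\phi'(x)$ to be $\exists y(x \le y \land \phi^y(x,y))$. To see that $\phi'(x) \in \coSafetyFO$, I would invoke the rewritings of non-strict into strict comparisons already observed in the paper right after the definitions of \SafetyFO and \coSafetyFO (namely $\exists u(v \le u \land \chi) \equiv \chi[u/v] \lor \exists u(v < u \land \chi)$ and the analogous rewriting for bounded universals). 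After these rewritings, the outermost existential fits the \coSafetyFO unbounded-existential pattern, each relativized inner existential fits the same pattern (the additional conjunct $z \le y$ is a legal atomic formula of the grammar), and each relativized universal fits the \coSafetyFO bounded-universal pattern.

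For correctness, I would prove by structural induction on $\psi$ the invariant that for every subformula $\psi(v_1,\ldots,v_k)$ of $\phi$, every $\sigma \in (2^\Sigma)^+$, and every assignment satisfying $0 \le a_1, \ldots, a_k \le b < |\sigma|$, the equivalence
\[
  (\sigma)^s, v_1 = a_1, \ldots, v_k = a_k, x=0, y=b \models \psi^y
  \iff
  (\sigma_{[0,b]})^s, v_1 = a_1, \ldots, v_k = a_k \models \psi
\]
holds. Applied at the top level, this immediately yields $\langfin(\phi'(x)) = \langfin(\phi(x)) \cdot (2^\Sigma)^*$, since the outer $\exists y$ ranges exactly over the possible prefix lengths and the invariant turns $\phi^y$ evaluated on $\sigma$ into $\phi$ evaluated on $\sigma_{[0,b]}$. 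The main technical care lies in the quantifier step of the induction: I must argue that, on the full word $\sigma$, restricting a variable $z$ to $[x,y]=[0,b]$ faithfully mimics the unrestricted FO quantification on the prefix $\sigma_{[0,b]}$. This works because the first-order structure $(\sigma_{[0,b]})^s$ has exactly the positions $\{0,\ldots,b\}$, which coincide with the values admitted by $x \le z \le y$ when $x=0$ and $y=b$; and the side condition $a_i \le b$ on the free variables is preserved automatically because the relativization introduces the conjuncts $x \le z \le y$ on every newly quantified variable.
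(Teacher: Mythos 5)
Your proposal is correct and follows essentially the same route as the paper's proof: relativize every quantifier of $\phi(x)$ to the window between $x$ and a fresh horizon variable $y$, existentially quantify $y$, and observe that the non-strict guards can be rewritten into the strict patterns of the \coSafetyFO grammar. The only differences are cosmetic (the paper bounds quantifiers by $x \le z < y$ rather than $x \le z \le y$) and that you spell out the correctness invariant and the grammar-membership check, which the paper leaves as ``simple to see''.
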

\begin{proof}
  Let $\phi(x)$ be a formula in \FO in negation normal form with one free
  variable. We define $\phi'(x)$ as the formula $\exists y \suchdot (x\le
  y \land \psi(x,y))$, where $\psi(x,y)$ is the formula with free variables
  $x$ and $y$ (where $y$ is a fresh variable that does not appear in
  $\phi(x)$) obtained from $\phi(x)$ by replacing each subformula of type
  $\exists z \suchdot \phi_1$ with $\exists z \suchdot (x \le z < y \land
  \phi_1)$ and each subformula of type $\forall z \suchdot \phi_1$ with
  $\forall z \suchdot (x \le z < y \to \phi_1)$.  It is simple to see that
  $\phi'(x)\in\coSafetyFO$ and $\langfin(\phi'(x))
  = \langfin(\phi(x))\cdot(2^\Sigma)^*$.
\end{proof}

We now prove that \coSafetyFO (interpreted over finite words) captures
$\semfinfin{\FO}$, as stated by the following Lemma.

\begin{lem}
\label{lem:cosafetyfo:finfin}
  $\semfin{\coSafetyFO} = \semfinfin{\FO}$.
\end{lem}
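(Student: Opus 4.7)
The plan is to prove the equality by two inclusions, each of which follows almost immediately from the two lemmas already in place. I would not attempt any new translation here: the combinatorial content has already been factored out into \cref{lemma:cosafetyfoseminf} (extension closure of \coSafetyFO-definable languages) and \cref{lem:fo:to:cosafetyfo} (the syntactic guarding trick that turns an arbitrary \FO formula into a \coSafetyFO formula defining the $\cdot (2^\Sigma)^*$-closure of the original language).

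For the inclusion $\semfin{\coSafetyFO} \subseteq \semfinfin{\FO}$, I would take $\lang \in \semfin{\coSafetyFO}$ with $\lang = \langfin(\phi)$ for some $\phi \in \coSafetyFO$. Since $\coSafetyFO$ is syntactically a fragment of $\FO$, we immediately get $\lang \in \semfin{\FO}$. By \cref{lemma:cosafetyfoseminf}, $\semfin{\coSafetyFO} = \semfinfin{\coSafetyFO}$, so $\lang = \lang \cdot (2^\Sigma)^*$. Setting $\lang' := \lang \in \semfin{\FO}$, we have $\lang = \lang' \cdot (2^\Sigma)^* \in \semfinfin{\FO}$.

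For the reverse inclusion $\semfinfin{\FO} \subseteq \semfin{\coSafetyFO}$, I would take an arbitrary $L \in \semfinfin{\FO}$ and write $L = \langfin(\phi(x)) \cdot (2^\Sigma)^*$ for some $\phi(x) \in \FO$ with a single free variable (using the natural correspondence between satisfaction of a one-variable \FO formula at position $0$ and membership in its finite-word language). Applying \cref{lem:fo:to:cosafetyfo} to $\phi(x)$ yields a formula $\phi'(x) \in \coSafetyFO$ with $\langfin(\phi'(x)) = \langfin(\phi(x)) \cdot (2^\Sigma)^* = L$, whence $L \in \semfin{\coSafetyFO}$.

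There is essentially no obstacle left at this stage: both directions are clean bookkeeping on top of previously established results. The only point requiring any care is making the inclusion $\semfin{\coSafetyFO} \subseteq \semfinfin{\FO}$ rigorous: one must witness $L$ as $L' \cdot (2^\Sigma)^*$ for some $L' \in \semfin{\FO}$, and the natural choice $L' = L$ is only legitimate because \cref{lemma:cosafetyfoseminf} gives the extension-closure $L = L \cdot (2^\Sigma)^*$. All the real work has been pushed into \cref{lem:fo:to:cosafetyfo}, whose guarding trick (relativizing every existential and universal to the interval $[x, y)$ under an outer $\exists y \ge x$) produces a formula in the restricted syntax of \coSafetyFO while enforcing exactly the $\cdot (2^\Sigma)^*$ closure needed on the right-hand side.
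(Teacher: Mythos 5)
Your proposal is correct and follows essentially the same route as the paper's own proof: the forward inclusion via the syntactic containment $\coSafetyFO \subseteq \FO$ together with the extension-closure $\semfin{\coSafetyFO}=\semfinfin{\coSafetyFO}$ from \cref{lemma:cosafetyfoseminf}, and the reverse inclusion via the guarding construction of \cref{lem:fo:to:cosafetyfo}. Your explicit remark that the witness for the forward direction is $\lang'=\lang$ itself is a slightly more careful spelling-out of a step the paper leaves implicit, but the argument is the same.
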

\begin{proof}
  We first prove the inclusion
  $\semfin{\coSafetyFO}\subseteq\semfinfin{\FO}$.  By
  \cref{lemma:cosafetyfoseminf}, it holds that
  $\semfin{\coSafetyFO}=\semfinfin{\coSafetyFO}$. Since \coSafetyFO is
  a syntactic fragment of \FO, it also holds that
  $\semfin{\coSafetyFO}\subseteq\semfin{\FO}$. It follows that
  $\semfin{\coSafetyFO}\subseteq\semfinfin{\FO}$.

  We now prove the inclusion $\semfinfin{\FO} \subseteq
  \semfin{\coSafetyFO}$. Let $\phi$ be a formula of \FO. By
  \cref{lem:fo:to:cosafetyfo}, there exists a formula $\phi'(x)$ such that
  $\langfin(\phi'(x)) = \langfin(\phi)\cdot(2^\Sigma)^*$. Since
  $\langfin(\phi)\cdot(2^\Sigma)^* \in \semfinfin{\FO}$ and
  $\langfin(\phi'(x)) \in \semfin{\coSafetyFO}$, this proves that
  $\semfinfin{\FO} \subseteq \semfin{\coSafetyFO}$.
\end{proof}

We can now prove that \coSafetyFO and \cosafetyltlnoweak capture the
co-safety fragment of \LTL interpreted over finite words, \ie
$\semfin{\LTL} \cap \coSAFETY$. By dualization, it also holds that
\SafetyFO and \safetyltlnonext are characterizations of the safety fragment
of \LTL over finite words in terms of temporal logics and first-order
logics, respectively.

\thfinchar*
\begin{proof}
  We first prove the case for the co-safety fragment. The following
  equivalences are true:
  \begin{align*}
      &\semfin{\LTL} \cap \coSAFETYfin \\
    = \quad &\semfinfin{\FO}      & \mbox{by \cref{prop:reverse,th:ltlfo,prop:galphafalpha}}\\
    = \quad &\semfin{\coSafetyFO} & \mbox{by \cref{lem:cosafetyfo:finfin}} \\
    = \quad &\semfin{\cosafetyltlnoweak} & \mbox{by \cref{cor:cosafetyfoiscosafetyltlnoweak}}
  \end{align*}
  Exploiting the duality between safety and co-safety fragments, one can
  directly obtain the proof for the safety case.
\end{proof}


\section{Comparison with related fragments}
\label{sec:comp}

In this section, we compare \coSafetyFO with two related fragments, that is
\cosafetyltl and \EBFO, another first-order logic characterization of
\LTL-definable co-safety properties. We also point out a practical
application of the translation of \cosafetyltl formulas into \coSafetyFO.
As before, all the results can be dualized to the safety case.

\subsection{Succinctness of \coSafetyFO with respect to \cosafetyltl}
\label{sub:succinct}

We show that there exists an equivalence-preserving translation from
\cosafetyltl into \coSafetyFO that involves only a linear blowup.

\begin{prop}
\label{prop:succinct}
  For all $\phi \in \cosafetyltl$, there exists $\phi' \in \coSafetyFO$
  such that:
  \begin{enumerate*}[label=(\roman*)]
    \item $\lang(\phi) = \lang(\phi')$; and
    \item $|\phi'| \in \mathcal{O}(|\phi|)$.
  \end{enumerate*}
\end{prop}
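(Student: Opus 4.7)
The plan is to re-use, essentially verbatim, the inductive translation $FO(\phi,x)$ introduced in the proof of \cref{lemma:fromTLtocosafetyfo}, extending it to handle the \emph{weak tomorrow} operator and observing that the whole construction incurs only constant overhead per operator. Concretely, I would set $\phi' \coloneqq FO(\phi,x)$, defined by induction as in \cref{lemma:fromTLtocosafetyfo} on the atomic, Boolean, \emph{tomorrow}, and \emph{until} cases, and I would add the new clause
\[
  FO(\ltl{wX\phi_1},x) = \exists y\bigl(x<y \land \forall z(x<z<y \implies \false) \land FO(\phi_1,y)\bigr).
\]
By \cref{obs:observationnoweaknext}, over infinite words $\ltl{wX}$ and $\ltl{X}$ coincide, so this new clause is semantically identical to the one already provided for $\ltl{X\phi_1}$; inspection of the grammar of \coSafetyFO confirms that the resulting formula stays inside the fragment.

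Correctness over infinite words is then established by a straightforward structural induction: one re-runs the argument of \cref{lemma:fromTLtocosafetyfo}, with $(2^\Sigma)^\omega$ in place of $(2^\Sigma)^+$. The step for each operator is already carried out there for the finite setting, and the semantic clauses for $\land$, $\lor$, $\ltl{X}$, $\ltl{wX}$, and $\ltl{U}$ on infinite words are precisely the ones captured pointwise by the first-order rewrite. Hence for every infinite $\sigma \in (2^\Sigma)^\omega$ and every position $i$, one has $\sigma,i \models \phi$ iff $(\sigma)^s,i \models FO(\phi,x)$, and in particular $\lang(\phi) = \lang(FO(\phi,x))$.

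For the succinctness bound, the key observation is that every clause of the translation wraps a constant-size first-order expression around the recursive translations of its subformulas: atomic cases cost $\mathcal{O}(1)$; the Boolean cases add a constant amount of glue around $FO(\phi_1,x)$ and $FO(\phi_2,x)$; the $\ltl{X}$ and $\ltl{wX}$ cases add a fixed header $\exists y(x<y \land \forall z(x<z<y \implies \false) \land \cdots)$ around $FO(\phi_1,y)$; and the $\ltl{U}$ case adds a fixed header around $FO(\phi_1,z)$ and $FO(\phi_2,y)$. A routine induction then gives $|FO(\phi,x)| \le c\,|\phi|$ for an absolute constant $c$, i.e.\ $|\phi'| \in \mathcal{O}(|\phi|)$.

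I do not anticipate a real obstacle here: the translation is local, and crucially it never performs the kind of case-analysis-over-orderings explosion that produces the nonelementary blowup in \cref{lem:froms1sfoguardtonormalform}. The only mildly delicate point is making sure the $\ltl{wX}$ clause is both semantically faithful (immediate from \cref{obs:observationnoweaknext}) and syntactically inside \coSafetyFO (the bounded universal quantifier $\forall z(x<z<y \implies \false)$ is exactly of the shape the grammar allows), after which the linear size bound is mechanical.
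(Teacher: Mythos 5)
Your proposal is correct and follows essentially the same route as the paper: both reuse the local translation $FO(\phi,x)$ from \cref{lemma:fromTLtocosafetyfo}, handle \ltl{wX} by identifying it with \ltl{X} over infinite words (\cref{obs:observationnoweaknext}), and conclude linearity from the fact that each operator contributes only constant overhead. The only cosmetic difference is that you justify the size bound by direct per-node counting, whereas the paper sets up and solves a recurrence $S(n)$; the two arguments are equivalent, and yours is arguably the cleaner of the two.
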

\begin{proof}
  The transformation of \cosafetyltl into \coSafetyFO is the same as the
  transformation of \cosafetyltlnoweak into \coSafetyFO maintaing the
  equivalence over finite words (see \cref{lemma:fromTLtocosafetyfo}). For
  sake of clarity, we report here the transformation.  We inductively
  define the formula $FO(\phi,x)$, where $x$ is a variable, as
  follows:\fitpar
  \begin{itemize}
    \item
      $FO(p,x) = P(x)$, for each $p \in \Sigma$
    \item
      $FO(\lnot p, x) = \lnot P(x)$, for each $p \in \Sigma$
    \item
      $FO(\phi_1 \land \phi_2,x) = FO(\phi_1,x) \land FO(\phi_2,x)$
    \item
      $FO(\phi_1 \lor \phi_2,x) = FO(\phi_1,x) \lor FO(\phi_2,x)$
    \item
      $FO(\ltl{X \phi_1},x) = \exists y(x < y \land y=x+1 \land FO(\phi_1,y))$\\
      where $y=x+1$ can be expressed as $\forall z(x<z<y \implies \false)$.
    \item 
      $FO(\ltl{\phi_1 U \phi_2},x) = {}
        \exists y( x \le y \land FO(\phi_2,y) \land 
          \forall z (x \le z < y \implies FO(\phi_1,z)))$
  \end{itemize}
  For each $\phi \in \cosafetyltlnoweak$, the formula $FO(\phi,x)$ has exactly
  one free variable $x$. 
  By the semantics of the operators in \cosafetyltl, it is immediate to see
  that for all infinite state sequences $\sigma \in (2^\Sigma)^\omega$, it
  holds that $\sigma \models \phi$ if and only if $(\sigma)^s,0 \models
  FO(\phi,x)$, and $FO(\phi,x) \in \coSafetyFO$.  Therefore,
  $\lang(FO(\phi,x)) \in \sem{\coSafetyFO}$.

  Now, we study the size of $FO(\phi,x)$ in terms of the size of $\phi$.
  From now on, let $n = |\phi|$.  If $\phi$ is an atomic formula, then
  $FO(\phi,x)$ is of constant size.  If instead $\phi \equiv
  \ltl{X}\phi_1$, then $|FO(\phi,x)| = \mathcal{O}(1) + |\phi_1|$.
  Otherwise, if $\phi \equiv \phi_1 \lor \phi_2$ or $\phi \equiv \phi_1 \land
  \phi_2$ or $\phi \equiv \phi_1 \ltl{U} \phi_2$, then without loss of
  generality we can suppose that $|\phi_1| = |\phi_2|
  = \ceil{\frac{|\phi|-1}{2}}$ and thus $|FO(\phi,x)| = \mathcal{O}(1)
  + 2 \cdot |FO(\phi_1,x)|$.

  Therefore, the size of $FO(\phi,x)$ is described by the following
  recurrence equation:
  \begin{align*}
    S(n) =
    \begin{cases}
      \mathcal{O}(1) & \mbox{if } n=1 \\
      \max\{\mathcal{O}(1) + 2 \cdot S(\frac{n}{2}), \mathcal{O}(1) + S(n-1)\} & \mbox{otherwise}
    \end{cases}
  \end{align*}
  We have that:
  \begin{align*}
    S(n) \le (2^i \cdot S(\frac{n}{2^i}) + i \cdot \mathcal{O}(1)) + 
           (S(n -1 -j) + j \cdot \mathcal{O}(1))
  \end{align*}
  For $i=\log_2(n)$ and for $j=n-2$, we obtain:
  \begin{align*}
    S(n) &\le 2^{\log_2(n)} \cdot S(\frac{n}{2^{\log_2(n)}}) + \log_2(n) \cdot
      \mathcal{O}(1) + S(n-1-n+2) + (n-2) \cdot \mathcal{O}(1) \\
    &\le n \cdot S(1) + \mathcal{O}(\log_2(n)) + S(1) + \mathcal{O}(n) \\
    &\le n \cdot \mathcal{O}(1) + \mathcal{O}(\log_2(n)) + \mathcal{O}(1) + \mathcal{O}(n) \\
    &\in \mathcal{O}(n)
  \end{align*}
  Therefore $|FO(\phi,x)| \in \mathcal{O}(|\phi|)$.
\end{proof}

Of course, also in this case, the result can be dualized, having that for
all $\phi \in \safetyltl$, there exists $\phi' \in \SafetyFO$ such that:
\begin{enumerate*}[label=(\roman*)]
  \item $\lang(\phi) = \lang(\phi')$; and
  \item $|\phi'| \in \mathcal{O}(|\phi|)$.
\end{enumerate*}

The other direction of \cref{prop:succinct} is less obvious.  The
translation of any \coSafetyFO formula into an equivalent one in
\cosafetyltl described in this paper (\cref{sec:fragments}) follows two
main steps:
\begin{enumerate*}[label=(\roman*)]
  \item the transformation of \coSafetyFO into normal form
    (\cref{lem:froms1sfoguardtonormalform});
  \item the transformation of the normal form to \cosafetyltl
    (\cref{lem:fromS1SFOguardtoTL}).
\end{enumerate*}
While the second step requires only a linear size increase, the first step, in
the general case, can produce a formula of nonelementary size with respect
to the size of the initial formula. This is mainly due to how the case of
\emph{conjunctions} is managed by the proof of
\cref{lem:froms1sfoguardtonormalform}: the resulting formula, in this case,
contains a subformula for each interleaving $\pi$ in the set of all
possible interleavings $\Pi$; since this set is exponentially larger than
the size of the starting formula, the formula resulting from the case of
conjunctions causes an exponential blow-up in the worst case.
As a consequence, the equivalence-preserving translation from \coSafetyFO
to \cosafetyltl shown in this paper is nonelementary in the size of the
final formula. 
Of course, this gives an upper bound to the succinctness of \coSafetyFO with
respect to \cosafetyltl: a still open question is about the lower bound, in
particular whether there exists a translation from any \coSafetyFO formula
to an equivalent \cosafetyltl one of polynomial size.

\subsection{A practical feedback of \coSafetyFO}
\label{sub:practice}

Interestingly, the succinctness of \coSafetyFO with respect to \cosafetyltl
described in \cref{sub:succinct} has a practical feedback in the context of
realizability and reactive synthesis.

Given a formula in \LTL over a set of \emph{controllable} and
\emph{uncontrollable} variables, realizability is the problem of
establishing whether, given any sequence $\Uncontr$ of uncontrollable
variables, there exists a strategy $s$ choosing the value of the
controllable variables in such a way to guarantee that any sequence
generated by $s$ responding to $\Uncontr$ is a model of the initial
formula. Reactive Synthesis is the problem of computing such a strategy (if
any).

In~\cite{ZhuTLPV17}, Zhu \etal consider the realizability from \safetyltl
specifications.  The first steps of their algorithm consist in negating the
starting formula (thus obtaining a formula in \cosafetyltl, after the
transformation into negation normal form), and the consequent translation
into \FO. This last step is used in order to exploit the tool
\mona~\cite{henriksen1995mona}, an efficient tool for the construction and
manipulation of automata. Interestingly, the formula resulting from this
step is a formula of \coSafetyFO of linear size with respect to the
starting one, although Zhu \etal never explicitly identified it as such.

\subsection{An alternative first-order logic characterization of (co)safety \LTL properties}
\label{sub:ebfo}

We start by giving a brief account of a different first-order logic
characterization of safety and co-safety \LTL properties, proposed by Thomas
in~\cite{thomas1988safety}.

Given a formula $\phi(x)$ in the language of \FO with one free variable
(recall \cref{sec:preliminaries}), we say that $\phi(x)$ is \emph{bounded}
if and only if all quantifiers in $\phi(x)$ are either of the form $\exists
y (y \le x \land {} \dots )$ or $\forall y (y \le x \to {} \dots)$.
The two fragments of \FO proposed by Thomas~\cite{thomas1988safety} for
capturing the safety and co-safety fragment of \LTL are defined as
follows.\footnotemark

\footnotetext{%
  Thomas did not give a name to these fragments. We chose to call them the
  \emph{Existential} and the \emph{Universal Bounded} fragment of \FO.
}

\begin{defi}
\label{def:ebfo}
  The \emph{Existential Bounded} fragment of \FO (\EBFO, for short) is the
  set of \FO sentences of type $\exists x \suchdot \phi(x)$, such that
  $\phi(x)$ is a bounded formula.
  The \emph{Universal Bounded} fragment of \FO (\UBFO, for short) is the
  set of \FO sentences of type $\forall x \suchdot \phi(x)$, such that
  $\phi(x)$ is a bounded formula.
\end{defi}
Note that, on the contrary of \coSafetyFO and \SafetyFO, formulas of \EBFO
and \UBFO do not contain any free variable. For this reason, the definition
of \emph{language} for \EBFO and \UBFO formulas differs from the case for
\coSafetyFO and \SafetyFO. We define the language of a formula $\phi$ in
\EBFO or \UBFO, denoted as $\lang(\phi)$, as the set of words
$\sigma\in(2^\Sigma)^\omega$ such that $(\sigma)^s\models\phi$.

The \EBFO and \UBFO fragments are heavily based on \FO and the
$\ltl{F\alpha}$ and $\ltl{G\alpha}$ normal forms
(\cref{prop:galphafalpha}); in particular, we recall that:
\begin{itemize}
  \item the set of \LTL-definable co-safety (resp. safety) properties is
    captured by the set of formulas of type $\ltl{F\alpha}$ (resp.
    $\ltl{G\alpha}$), where $\alpha\in\LTLFP$;
  \item by \cref{prop:reverse,th:ltlfo}, we have that $\sem{\LTLFP}
    = \sem{\FO}$.
\end{itemize}
Take for example the \EBFO fragment. The structure of its formulas
naturally resembles the $\ltl{F\alpha}$ normal form: the power of
\FO is used for representing all and only the formulas $\alpha$ in
\LTLFP, while the initial existential quantifier $\exists x \suchdot
(\ldots)$ together with the bound $\ldots \le x$ on all the other quantifiers
is used for modeling the \emph{eventually} ($\ltl{F}$) operator.
A similar rationale holds for \UBFO.
It follows that the \EBFO (resp. \UBFO) fragment is expressively complete
with respect to the co-safety (resp. safety) fragment of \LTL, that
is~\cite[Proposition 2.1]{thomas1988safety}:
\begin{itemize}
  \item $\sem{\EBFO} = \sem{\LTL} \cap \coSAFETY$
  \item $\sem{\UBFO} = \sem{\LTL} \cap \SAFETY$
\end{itemize}

\subsection{Comparison between \coSafetyFO and \EBFO}
\label{sub:comparison}

Since both \coSafetyFO and \EBFO capture the co-safety fragment of \LTL, it
follows that \EBFO and the fragment of \coSafetyFO with exactly one free
variable have the same expressive power. Clearly, the same holds for the
safety fragment, having that \UBFO and the fragment of \SafetyFO with
exactly one free variable are expressively equivalent.

We now show that, in addition of being expressively equivalent, there is
a linear-size translation between the fragment of \coSafetyFO with only one
free variable and \EBFO, and \viceversa.

\begin{prop}
\label{prop:comparison:trans}
  For any formula $\phi(x) \in \coSafetyFO$, there exists a formula $\phi'
  \in \EBFO$ such that:
  \begin{enumerate*}[label=(\roman*)]
    \item $\lang(\phi(x)) = \lang(\phi')$; and
    \item $|\phi'| \in \mathcal{O}(|\phi(x)|)$.
  \end{enumerate*}
\end{prop}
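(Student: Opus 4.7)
The plan is to introduce a single outer existential $\exists x'$ that serves as a uniform upper bound for every quantifier of the source formula. Given $\phi(x) \in \coSafetyFO$ with unique free variable $x$, define
\[
  \phi' \;\equiv\; \exists x' \cdot \exists x_0 \bigl( x_0 \le x' \land \mathrm{Min}(x_0, x') \land T(\phi, x') \bigr),
\]
where $\mathrm{Min}(x_0, x') \equiv \forall y(y \le x' \to x_0 \le y)$ forces $x_0 = 0$ in every model (since $0 \le x'$ is always in range), and $T(\cdot, x')$ is defined by induction on the source formula: it leaves atomic formulas unchanged, commutes with $\land$ and $\lor$, rewrites each $\exists y(v < y \land \chi)$ into $\exists y(y \le x' \land v < y \land T(\chi, x'))$, and each $\forall y(v < y < w \implies \chi)$ into $\forall y(y \le x' \to (v < y < w \implies T(\chi, x')))$. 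Every quantifier of $\phi'$ is then bounded by $x'$, so $\phi' \in \EBFO$, and since each production of $T$ adds only a constant number of symbols, a direct structural count yields $|\phi'| \in \mathcal{O}(|\phi|)$.

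Correctness reduces to the following bounded-witness lemma, which I would prove by induction on $\psi \in \coSafetyFO$: for every $\sigma \in (2^\Sigma)^\omega$ and every valuation $\nu$ of the free variables of $\psi$, the formula $\psi$ is satisfied by $(\sigma)^s$ under $\nu$ if and only if there exists $x' \ge \max \nu$ such that $T(\psi, x')$ is satisfied by $(\sigma)^s$ under $\nu$ extended by $x' \mapsto x'$. For $\exists y(v < y \land \chi)$ one picks a witness $y^*$ for the source formula, applies the induction hypothesis to $\chi$ to obtain a bound $x'_\chi$, and sets $x' := \max(y^*, x'_\chi)$. For $\forall y(v < y < w \implies \chi)$ the interval $(\nu(v), \nu(w))$ is finite, so one applies the induction hypothesis for each admissible $y$ and takes the maximum of the resulting bounds. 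Conjunction is handled by taking the max of the two bounds and disjunction by reusing the bound coming from the true disjunct. The converse direction is easier: dropping $y \le x'$ from an existential only weakens the guard, and dropping it from a universal leaves the meaning unchanged whenever $w \le x'$, which holds by the induction hypothesis.

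Instantiating the lemma at the top level with $x \mapsto 0$ yields $\sigma \in \lang(\phi(x))$ iff there exists $x' \in \N$ such that $(\sigma)^s$ satisfies $T(\phi, x')$ with $x_0 = 0$, which is precisely the semantics of $\phi'$ once $\mathrm{Min}(x_0, x')$ has pinned $x_0$ to $0$. The main obstacle I expect is formulating the induction hypothesis with enough generality: besides the free variables of $\psi$, one must track the fact that every variable appearing as an upper bound in a universal is itself $\le x'$, so that the bounded universals of $T(\psi, x')$ remain equivalent to the original unbounded ones. Once the hypothesis is set up carefully, each inductive case reduces to a short verification and the linear size estimate is immediate.
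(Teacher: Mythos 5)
Your proposal is correct and follows essentially the same route as the paper's proof: an outer existential quantifier supplying a uniform upper bound for all relativized quantifiers, an auxiliary bounded universal pinning the original free variable to $0$, and a constant-overhead-per-node rewriting that yields the linear size bound. The only difference is one of detail: the paper dismisses correctness as easy to see, whereas you make explicit the bounded-witness lemma (and the need to track that every upper bound of a universal is itself below the outer bound), which is exactly the right invariant to carry through the induction.
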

\begin{proof}
  Let $\phi(x)\in\coSafetyFO$. Since the language of the formula $\phi(x)$
  is defined as the set of state sequences that are models of $\phi(x)$
  when $x$ is interpreted as $0$ (recall \cref{sec:preliminaries}), it
  suffices to define a formula in \EBFO that extends $\phi(x)$ by forcing
  $x$ to be $0$. Formally, we define the formula $\phi'$ as follows:
  \begin{align*}
    \exists y \suchdot (\exists x \suchdot (x \le y \land \forall
    z \suchdot (z\le y \to (z< x \to \bot)) \land \phi_b(x)))
  \end{align*}
  where $\phi_b(x)$ is obtained from $\phi(x)$ by replacing any quantifier of
  the form $\exists k (\dots)$ (resp. $\forall k (\dots)$) with $\exists
  k (k \le y \land \dots)$ (resp. $\forall k (k \le y \to \dots)$).  It is
  easy to see that:
  \begin{enumerate}[label=(\roman*)]
    \item $\phi'$ is a formula of \EBFO;
    \item $\lang(\phi(x)) = \lang(\phi')$; and
    \item $|\phi'| \in \mathcal{O}(|\phi(x)|)$. \qedhere
  \end{enumerate}
\end{proof}

The converse direction holds as well.
\begin{prop}
  For any formula $\phi\in\EBFO$, there exists a formula $\phi'(x)
  \in\coSafetyFO$ such that:
  \begin{enumerate*}[label=(\roman*)]
    \item $\lang(\phi'(x)) = \lang(\phi)$; and
    \item $|\phi'(x)| \in \mathcal{O}(|\phi|)$.
  \end{enumerate*}
\end{prop}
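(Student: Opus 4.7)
The plan is to mirror the outer existential of the given \EBFO sentence $\phi = \exists w \suchdot \psi(w)$ by an existential in \coSafetyFO, using the free variable $x$ of the target formula (interpreted as position $0$) as a trivial lower bound for every quantified position. Concretely, I would set $\phi'(x) := \exists w \suchdot (x \le w \land T(\psi))$, where $T$ is a structural rewriting that acts as the identity on atomic formulas, commutes with the Boolean connectives, and replaces each bounded quantifier: $T(\exists y(y \le w \land \chi)) := \exists y(x \le y \le w \land T(\chi))$, and $T(\forall y(y \le w \to \chi)) := \forall y(x \le y \le w \to T(\chi))$.

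For membership in \coSafetyFO, I would rely on the shortcut conventions for non-strict quantifier guards spelled out after the grammars in \cref{sec:fragments}. The rewritten universals $\forall y(x \le y \le w \to \cdot)$ are directly the non-strict version of the allowed $\forall y(u < y < v \to \cdot)$. The rewritten existentials $\exists y(x \le y \le w \land \chi)$ unfold into $(x \le w \land \chi[y/x]) \lor \exists y(x < y \land y \le w \land \chi)$, where $y \le w$ inside the body reduces to the permitted atoms $y < w \lor y = w$. This is the same convention used in the proof of \cref{prop:succinct}, and under it every rewriting step adds only a constant number of symbols, yielding $|\phi'(x)| \in \mathcal{O}(|\phi|)$.

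Semantic correctness follows by a straightforward structural induction on $\psi$: since $x$ is interpreted as $0$ and every position in the structure is a natural number, the added conjuncts $x \le y$ are semantically vacuous, so $T(\psi)$ is equivalent to $\psi$ for any assignment with $x = 0$. Consequently, $(\sigma)^s, 0 \models \phi'(x)$ iff there exists $n \ge 0$ with $(\sigma)^s, n \models \psi(w)$, iff $(\sigma)^s \models \phi$, giving $\lang(\phi'(x)) = \lang(\phi)$.

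The only real obstacle is the accounting of the $\le$ shortcuts in the size bound: a literal expansion into the $<$-only primitive grammar could in principle duplicate each subformula $\chi$ (once as $\chi[y/x]$ and once inside $\exists y(x < y \land \ldots)$) and thus blow up badly when such expansions are nested. I would therefore explicitly invoke the same shortcut convention used in \cref{prop:succinct}, treating $\le$-guarded quantifiers as first-class \coSafetyFO constructs whose asymptotic cost matches that of their $<$-guarded counterparts; once this convention is fixed, the linear bound is immediate, and the translation mirrors, in the reverse direction, the one already given in \cref{prop:comparison:trans}.
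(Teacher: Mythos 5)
Your construction is correct and, at the level of the translation itself, matches the paper's: both relativize every quantifier of the \EBFO sentence by a vacuous lower bound tied to the free variable $x$, wrap the result in an outer existential, and obtain the linear size bound by treating the $\le$-guarded quantifiers as constant-overhead shortcuts (a convention the paper itself uses in \cref{prop:succinct} and in its own proof of this statement, so your explicit flagging of the duplication issue in the literal $<$-only expansion is a fair and welcome caveat rather than a gap). Where you genuinely diverge is in the correctness argument. The paper introduces a \emph{fresh} outer variable $y$ bounding the universal quantifiers by $x \le z < y$, proves that the resulting formula defines $\langfin(\phi)\cdot(2^\Sigma)^\omega$, and then separately establishes by induction that every \EBFO formula satisfies $\lang(\phi)=\langfin(\phi)\cdot(2^\Sigma)^\omega$, i.e., it factors the equivalence through finite-word languages. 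You instead reuse the \EBFO formula's own bound $w$ as the upper bound and observe that, since $x$ is interpreted as $0$, every added guard $x \le \cdot$ is semantically vacuous, so $\phi'(x)$ at position $0$ is \emph{literally} equivalent to $\phi$. Your route is shorter and avoids the auxiliary induction entirely; the paper's route is slightly longer but yields as a by-product the independently useful fact that \EBFO languages are of the form $\langfin(\phi)\cdot(2^\Sigma)^\omega$, which is the semantic content making \EBFO a co-safety formalism. Both are sound.
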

\begin{proof}
  We first prove that, for any formula $\phi\in\EBFO$, there exists
  a formula $\phi'(x) \in\coSafetyFO$ such that:
  \begin{enumerate*}[label=(\roman*)]
    \item $\lang(\phi'(x)) = \langfin(\phi)\cdot(2^\Sigma)^\omega$; and
    \item $|\phi'(x)| \in \mathcal{O}(|\phi|)$.
  \end{enumerate*}
  We define $\phi'(x)$ as the following formula:
  \begin{align*}
    \exists y \suchdot (x\le y \land \psi(x,y))
  \end{align*}
  where $\psi(x,y)$ is the formula obtained from $\phi$ by replacing each
  subformula of type $\exists z \suchdot \phi_1$ with $\exists z \suchdot
  (x \le z \land \phi_1)$ and each subformula of type $\forall z \suchdot
  \phi_1$ with $\forall z \suchdot (x \le z < y \to \phi_1)$.  It is simple
  to see that $\phi'(x)\in\coSafetyFO$, $\lang(\phi'(x))
  = \langfin(\phi)\cdot(2^\Sigma)^\omega$ and $|\phi'(x)| \in
  \mathcal{O}(|\phi|)$.

  Now, with a simple induction, one can prove that any formula
  $\phi\in\EBFO$ is such that
  $\lang(\phi)=\langfin(\phi)\cdot(2^\Sigma)^\omega$. Therefore, we have
  that $\lang(\phi'(x))=\lang(\phi)$, which concludes the proof.
\end{proof}

The expressively equivalence between the fragment of \coSafetyFO with only
one free variable, \EBFO and the co-safety fragment of \LTL, together with
the linear-size transformation of \coSafetyFO into \EBFO
(\cref{prop:comparison:trans}), allow for the following consideration: in
order to capture the whole co-safety fragment of \LTL, it is not necessary
to have the full power of \FO, on which, as noted above, \EBFO is strongly
based; on the contrary, it suffices to use the syntax of \coSafetyFO, \ie
with existential quantifiers of type $\exists y(x < y \land \dots)$ and
with universal quantifiers of type $\forall y(x < y < z \implies \dots)$.


\section{Other Characterizations of the (co-)safety fragment of \LTL}
\label{sec:char}

In this section, we give an overview of the other characterizations that
are present in the literature of the safety and co-safety fragments of
\LTL, both on infinite and finite words.


\begin{figure}
\centering
\begin{subfigure}[b]{\textwidth}
  \centering
  \begin{tikzpicture}[
        titlerect/.style={
            rectangle,
            draw, thick,
            inner ysep=2.5ex,
            inner xsep=1.5ex,
            minimum width=5.5cm,
            minimum height=2.2cm,
            align=center,
            text width=5.5cm,
            label={[anchor=center,
                    fill=white,
                    font=\large\bfseries\sffamily]above:#1}}]
    \path 
      (0,0) node[titlerect={Temporal Modal Logics},align=left] (ltl) { 
          \begin{varwidth}{\linewidth}
          \begin{itemize}
            \item $\ltl{F\alpha}$
            \item \cosafetyltl
            \item $\semfininf{\cosafetyltlnoweak}$
          \end{itemize}
          \end{varwidth}
      }
      ++(7.0,0.0) node[titlerect={First-order Logics},align=left] (fo) { 
          \begin{varwidth}{\linewidth}
          \begin{itemize}
            \item \EBFO
            \item \coSafetyFO
            \item $\semfininf{\coSafetyFO}$
          \end{itemize}
          \end{varwidth}
      }
      ++(0,-4) node[titlerect={Automata Theory},align=left] (nfas) { 
          \begin{varwidth}{\linewidth}
          \footnotesize
          \begin{itemize}
          \setlength\itemsep{0.3ex}
            \item counter-free guarantee Streett 
            \item counter-free deterministic Occurrence B\"uchi 
            \item counter-free terminal B\"uchi 
          \end{itemize}
          \end{varwidth}
      }
      ++(-7.0,0) node[titlerect={Formal Languages Theory},align=left] (re) { 
          \begin{varwidth}{\linewidth}
          \begin{itemize}
            \item $\SF \cdot \Sigma^\omega$
            \item[\textcolor{white}{-}]
            \item[\textcolor{white}{-}]
          \end{itemize}
          \end{varwidth}
      };
  
    \draw[<->, draw=black, thick]
      (ltl) -- (fo);

    \begin{scope}[<->, thick, draw=black]
      \draw (fo) -- (nfas |- 0,-2.7);
      \draw (nfas) -- (re);
      \draw (re |- 0,-2.7) -- (ltl);
    \end{scope}
  \end{tikzpicture}
   \caption{The co-safety fragment of \LTL on infinite words semantics.}
   \label{fig:cosafety:char:inf}
\end{subfigure}

\vspace{5ex}

\begin{subfigure}[b]{\textwidth}
  \centering
  \begin{tikzpicture}[
        titlerect/.style={
            rectangle,
            draw, thick,
            inner ysep=2.5ex,
            inner xsep=1.5ex,
            minimum width=5.5cm,
            minimum height=2.2cm,
            align=center,
            text width=5.5cm,
            label={[anchor=center,
                    fill=white,
                    font=\large\bfseries\sffamily]above:#1}}]
    \path 
      (0,0) node[titlerect={Temporal Modal Logics},align=left] (ltl) { 
          \begin{varwidth}{\linewidth}
          \begin{itemize}
            \item $\ltl{F\alpha}$
            \item \cosafetyltlnoweak
            \item $\semfinfin{\cosafetyltlnoweak}$
          \end{itemize}
          \end{varwidth}
      }
      ++(7.0,0.0) node[titlerect={First-order Logics},align=left] (fo) { 
          \begin{varwidth}{\linewidth}
          \begin{itemize}
            \item \EBFO
            \item \coSafetyFO
            \item $\semfinfin{\coSafetyFO}$
          \end{itemize}
          \end{varwidth}
      }
      ++(0,-4) node[titlerect={Automata Theory},align=left] (nfas) { 
          \footnotesize
          \begin{varwidth}{\linewidth}
          \begin{itemize}
          \setlength\itemsep{0.3ex}
            \item counter-free deterministic Occurrence B\"uchi 
            \item counter-free terminal \NFAs 
            \item[\textcolor{white}{-}]
          \end{itemize}
          \end{varwidth}
      }
      ++(-7.0,0) node[titlerect={Formal Languages Theory},align=left] (re) { 
          \begin{varwidth}{\linewidth}
          \begin{itemize}
            \item $\SF \cdot \Sigma^*$
            \item[\textcolor{white}{-}]
            \item[\textcolor{white}{-}]
          \end{itemize}
          \end{varwidth}
      };
  
    \draw[<->, draw=black, thick]
      (ltl) -- (fo);

    \begin{scope}[<->, thick, draw=black]
      \draw (fo) -- (nfas |- 0,-2.7);
      \draw (nfas) -- (re);
      \draw (re |- 0,-2.7) -- (ltl);
    \end{scope}
  \end{tikzpicture}
   \caption{The co-safety fragment of \LTL on finite words semantics.}
   \label{fig:cosafety:char:fin}
\end{subfigure}

\caption{Different characterizations for the co-safety fragment of \LTL
  over (a) infinite words and (b) finite words.}
\label{fig:cosafety:char}
\end{figure}


\begin{figure}
\centering
\begin{subfigure}[b]{\textwidth}
  \centering
  \begin{tikzpicture}[
        titlerect/.style={
            rectangle,
            draw, thick,
            inner ysep=2.5ex,
            inner xsep=1.5ex,
            minimum width=5.5cm,
            minimum height=2.2cm,
            align=center,
            text width=5.5cm,
            label={[anchor=center,
                    fill=white,
                    font=\large\bfseries\sffamily]above:#1}}]
    \path 
      (0,0) node[titlerect={Temporal Modal Logics},align=left] (ltl) { 
          \begin{varwidth}{\linewidth}
          \begin{itemize}
            \item $\ltl{G\alpha}$
            \item \safetyltl
            \item[\textcolor{white}{-}]
          \end{itemize}
          \end{varwidth}
      }
      ++(7.0,0.0) node[titlerect={First-order Logics},align=left] (fo) { 
          \begin{varwidth}{\linewidth}
          \begin{itemize}
            \item \UBFO
            \item \SafetyFO
            \item[\textcolor{white}{-}]
          \end{itemize}
          \end{varwidth}
      }
      ++(0,-4) node[titlerect={Automata Theory},align=left] (nfas) { 
          \footnotesize
          \begin{varwidth}{\linewidth}
          \begin{itemize}
          \setlength\itemsep{0.3ex}
            \item counter-free safety Streett 
            \item counter-free deterministic Occurrence co-B\"uchi 
            \item[\textcolor{white}{-}]
          \end{itemize}
          \end{varwidth}
      }
      ++(-7.0,0) node[titlerect={Formal Languages Theory},align=left] (re) { 
          \begin{varwidth}{\linewidth}
          \begin{itemize}
            \item $\overline{\SF \cdot \Sigma^\omega}$
            \item[\textcolor{white}{-}]
            \item[\textcolor{white}{-}]
          \end{itemize}
          \end{varwidth}
      };
  
    \draw[<->, draw=black, thick]
      (ltl) -- (fo);

    \begin{scope}[<->, thick, draw=black]
      \draw (fo) -- (nfas |- 0,-2.7);
      \draw (nfas) -- (re);
      \draw (re |- 0,-2.7) -- (ltl);
    \end{scope}
  \end{tikzpicture}
   \caption{The safety fragment of \LTL on infinite words semantics.}
   \label{fig:safety:char:inf}
\end{subfigure}

\vspace{5ex}

\begin{subfigure}[b]{\textwidth}
  \centering
  \begin{tikzpicture}[
        titlerect/.style={
            rectangle,
            draw, thick,
            inner ysep=2.5ex,
            inner xsep=1.5ex,
            minimum width=5.5cm,
            minimum height=2.2cm,
            align=center,
            text width=5.5cm,
            label={[anchor=center,
                    fill=white,
                    font=\large\bfseries\sffamily]above:#1}}]
    \path 
      (0,0) node[titlerect={Temporal Modal Logics},align=left] (ltl) { 
          \begin{varwidth}{\linewidth}
          \begin{itemize}
            \item $\ltl{G\alpha}$
            \item \safetyltlnonext
            \item[\textcolor{white}{-}]
          \end{itemize}
          \end{varwidth}
      }
      ++(7.0,0.0) node[titlerect={First-order Logics},align=left] (fo) { 
          \begin{varwidth}{\linewidth}
          \begin{itemize}
            \item \UBFO
            \item \SafetyFO
            \item[\textcolor{white}{-}]
          \end{itemize}
          \end{varwidth}
      }
      ++(0,-4) node[titlerect={Automata Theory},align=left] (nfas) { 
          \footnotesize
          \begin{varwidth}{\linewidth}
          \begin{itemize}
          \setlength\itemsep{0.3ex}
            \item counter-free deterministic Occurrence co-B\"uchi 
            \item[\textcolor{white}{-}]
            \item[\textcolor{white}{-}]
          \end{itemize}
          \end{varwidth}
      }
      ++(-7.0,0) node[titlerect={Formal Languages Theory},align=left] (re) { 
          \begin{varwidth}{\linewidth}
          \begin{itemize}
            \item $\overline{\SF \cdot \Sigma^*}$
            \item[\textcolor{white}{-}]
            \item[\textcolor{white}{-}]
          \end{itemize}
          \end{varwidth}
      };
  
    \draw[<->, draw=black, thick]
      (ltl) -- (fo);

    \begin{scope}[<->, thick, draw=black]
      \draw (fo) -- (nfas |- 0,-2.7);
      \draw (nfas) -- (re);
      \draw (re |- 0,-2.7) -- (ltl);
    \end{scope}
  \end{tikzpicture}
   \caption{The safety fragment of \LTL on finite words semantics.}
   \label{fig:safety:char:fin}
\end{subfigure}

\caption{Different characterizations for the safety fragment of \LTL
  over (a) infinite words and (b) finite words.}
\label{fig:safety:char}
\end{figure}

We start by recalling that there are four main characterizations of the set
of \LTL-definable $\omega$-languages:
\begin{itemize}
  \item in terms of temporal modal logics, $\sem{\LTL}$ is of course
    definable by \LTL and \LTLP~\cite{pnueli1977temporal};
  \item in terms of first-order logics, $\sem{\LTL}$ is captured by
    \FOTLO~\cite{kamp1968tense};
  \item in terms of regular expressions, $\sem{\LTL}$ is characterized by
    \emph{star-free} $\omega$-regular expressions~\cite{thomas1979star};
  \item in terms of automata, $\sem{\LTL}$ is captured by
    \emph{counter-free} B\"uchi automata~\cite{mcnaughton1971counter}.
\end{itemize}
Over finite words, the characterizations of \LTL are the same, except that
instead of star-free $\omega$-regular expressions and counter-free B\"uchi
automata, we consider star-free regular expressions and counter-free
nondeterministic finite automata.

In \Cref{fig:cosafety:char,fig:safety:char}, we summarize the
characterizations of the co-safety and safety fragments of \LTL, both over
infinite and finite words, in terms of:
\begin{enumerate*}[label=(\roman*)]
  \item temporal logics;
  \item first-order logics;
  \item regular expressions;
  \item automata.
\end{enumerate*}

\subsection{Temporal and first-order logics}

We first recall the characterizations in terms of temporal and first-order
logics.
In terms of temporal logics, the co-safety fragment of \LTL is captured:
\begin{itemize}
  \item over infinite words, by $\ltl{F\alpha}$, \cosafetyltl, and
    $\semfininf{\cosafetyltlnoweak}$ (\ie the finite-words interpretation
    of \cosafetyltlnoweak when concatenated to any possible infinite word);
  \item over finite words, by $\ltl{F\alpha}$, \cosafetyltlnoweak and
    $\semfinfin{\cosafetyltlnoweak}$ (\ie the finite-words interpretation
    of \cosafetyltlnoweak when concatenated to any possible finite word).
\end{itemize}
Dually, the safety fragment of \LTL is captured by $\ltl{G\alpha}$ and
\safetyltl, for the case of infinite words interpretation, and by
$\ltl{G\alpha}$ and \safetyltlnonext, for the case of finite words
interpretation.

As for first-order logics, $\sem{\LTL}\cap\coSAFETY$ (\ie the co-safety
fragment of \LTL over infinite words) is captured by \EBFO, \coSafetyFO and
$\semfininf{\coSafetyFO}$, while $\semfin{\LTL}\cap\coSAFETYfin$ (\ie the
co-safety fragment of \LTL over finite words) is captured by \EBFO,
\coSafetyFO and $\semfinfin{\coSafetyFO}$. The characterizations for the
safety fragment of \LTL over finite and infinite words is dual.

\subsection{Regular and \texorpdfstring{$\omega$}{ω}-regular expressions}

Consider now the characterization in terms of ($\omega$-)regular
expressions. We recall that a regular expression is an expression built
starting from the symbols in a finite alphabet $\Sigma$ using the
operations of union ($\cup$), complementation ($\overline{S}$),
concatenation ($\cdot$) and the Kleene'star (${}^*$). $\omega$-regular
expressions extend regular expressions by admitting also the operation
$S^\omega$, which is the $\omega$-closure of the set $S$.
\emph{Star-free} ($\omega$-)regular expressions are ($\omega$-)regular
expressions devoid of the Kleene'star.  We denote with \SF the set of
star-free regular expressions.  It is known that $\semfin{\LTL}$ (resp.
$\sem{\LTL}$) is captured by \emph{star-free} regular (resp.
$\omega$-regular) expressions~\cite{mcnaughton1971counter}.

We start with the co-safety fragment of \LTL over infinite words. Recall
that, by \cref{prop:galphafalpha}, $\sem{\LTL}\cap\coSAFETY$ is captured by
$\ltl{F\alpha}$, where $\alpha\in\LTLFP$. Moreover, by \cref{prop:reverse},
\LTLFP (\ie pure past \LTLP) is expressively equivalent to \LTL over finite
words, \ie $\semfin{\LTL} = \semfin{\LTLFP}$. Now, since $\semfin{\LTL}$ is
captured by star-free regular expressions, by the semantics of the
\emph{eventually} ($\ltl{F}$) operator, we have that
$\sem{\LTL}\cap\coSAFETY$ is captured by $\SF \cdot (\Sigma)^\omega$. For
finite words, by the same kind of reasoning, it follows that
$\semfin{\LTL}\cap\coSAFETYfin$ is captured by $\SF \cdot (\Sigma)^*$.  By
duality, $\sem{\LTL}\cap\SAFETY$ and $\semfin{\LTL}\cap\SAFETYfin$ are
captured by $\overline{\SF \cdot (\Sigma)^\omega}$ and $\overline{\SF \cdot
(\Sigma)^*}$, respectively.

\subsection{Automata}

In this part, we give an overview of some automata-based characterizations
proposed in the literature for the safety and co-safety fragments of \LTL.
We first recall some basic notions of automata theory.

\begin{defi}[Semi-automata]
\label{def:semiautom}
  A \emph{nondeterministic semi-automaton} $\autom_s$ is a tuple
  $\tuple{\Sigma,\allowbreak Q, \allowbreak q_0, \allowbreak \delta}$ such that:
  \begin{enumerate*}[label=(\roman*)]
    \item $\Sigma$ is a finite alphabet;
    \item $Q$ is a set of states;
    \item $q_0\in Q$ is the initial state;
    \item $\delta : \allowbreak Q \allowbreak \times \allowbreak \Sigma \allowbreak \to \allowbreak 2^Q$ is the transition function.
  \end{enumerate*}
\end{defi}

Given a finite alphabet $\Sigma$ and $\delta : Q \times \Sigma \to 2^Q$, we
can extend $\delta$ to $\delta^* : Q \times \Sigma^* \to 2^Q$ in the
natural way.
Given a semi-automaton $\autom_s=\tuple{\Sigma,\allowbreak Q, \allowbreak q_0, \allowbreak \delta}$, we say that
the word $\sigma\in\Sigma^*$ defines a \emph{nontrivial cycle} in $\autom$
if and only if there exists a state $q\in Q$ such that
$q\not\in\delta^*(q,\sigma)$ and $q\in\delta^*(q,\sigma^i)$ for some
$i>1$~\cite{mcnaughton1971counter,schiering1996counter}.

A semi-automaton $\autom_s=\tuple{\Sigma, \allowbreak Q, \allowbreak q_0, \allowbreak \delta}$ is said to be:
\begin{itemize}
  \item \emph{deterministic} if and only if $\delta(q,s)$ is a singleton
    set, for each $q\in Q$ and each $s\in\Sigma$.
  \item \emph{counter-free} if and only if it does \emph{not} contain any nontrivial
    cycle.
\end{itemize}

Given a semi-automaton $\autom_s=\tuple{\Sigma, \allowbreak Q, \allowbreak q_0, \allowbreak \delta}$ and a (finite
or infinite) word
$\sigma=\!\seq{\sigma_0,\sigma_1,\dots}\!\in\Sigma^*\cup\Sigma^\omega$, a run
$\pi$ over $\sigma$ is a (finite or infinite) sequence of states
$\seq{q_0,q_1, \dots}\in Q^*\cup Q^\omega$ such that $q_{i+1} \in
\delta(q_i,\sigma_i)$, for any $i\ge 0$. We denote with $\infpi(\pi)$ the
set of states that occur infinitely often in $\pi$, and with $\occpi(\pi)$
the set of states that occur at least once in $\pi$.

An automaton $\autom$ is a tuple $\tuple{\Sigma, \allowbreak Q, \allowbreak q_0, \allowbreak \delta, \allowbreak \alpha}$ such
that $\tuple{\Sigma, \allowbreak Q, \allowbreak q_0, \allowbreak \delta}$ is a semi-automaton and $\alpha$ is an
\emph{accepting condition}.
Starting from semi-automata, we can obtain many types of \emph{automata} by
defining different accepting conditions.
\begin{itemize}
  \item In nondeterministic finite automata (\NFAs, for short), $\alpha$ is
    a subset of $Q$ and is called the set of \emph{final state}. A run
    $\pi$ is accepting iff there exists an $i\ge 0$ such that $\pi_i\in
    \alpha$.
  \item In deterministic Streett automata (\DSAs, for short),
    $\alpha = \set{(G_1,R_1),\dots,(G_n,R_n)}$ where $G_i,R_i\subseteq Q$,
    for each $1\le i\le n$ and some $n\in\N$. A run $\pi$ is accepting iff,
    for all $1\le i\le n$, either $\infpi(\pi)\cap G_i = \emptyset$ or
    $\infpi(\pi)\cap R_i \neq \emptyset$.
  \item A B\"uchi automaton is a Streett automaton in which
    $\alpha=\set{(Q, R_1)}$. In this case, $R_1$ is called the set
    of \emph{final states} of the automaton.
  \item A co-B\"uchi automaton is a Streett automaton in which
    $\alpha=\set{(G_1, \emptyset)}$. In this case, $G_1$ is called the set
    of \emph{rejecting states} of the automaton.
  \item An Occurrence Streett automaton is a Streett automaton with
    accepting condition $\alpha = \set{(G_1,R_1),\dots,(G_n,R_n)}$ in which
    a run $\pi$ is accepting iff either $\occpi(\pi)\cap G_i = \emptyset$
    or $\occpi(\pi)\cap R_i \neq \emptyset$, for all $1\le i\le n$.
  \item The definitions of Occurrence B\"uchi and Occurrence co-B\"uchi
    follow from the definition of Occurrence Streett automaton.
\end{itemize}
For all types of automata, an automaton $\autom$ accepts a word $\sigma$ if
and only if there exists an accepting run induced by $\sigma$ in $\autom$.
The language recognized by $\autom$ is the set of words that are accepted
by $\autom$. It is known that each $\omega$-language definable in \LTL is
recognized by a counter-free B\"uchi automaton, and
\viceversa~\cite{mcnaughton1971counter}. Similary, a language of finite
words is definable in \LTL iff it is recognized by a counter-free
\NFA~\cite{mcnaughton1971counter}.

In~\cite{manna1990hierarchy}, Manna and Pnueli characterize the set of all
co-safety regular properties in terms of \emph{guarantee (deterministic)
Streett automata}.
A guarantee Streett automaton $\autom=\tuple{\Sigma, \allowbreak Q, \allowbreak q_0, \allowbreak \delta, \allowbreak \alpha}$
is a Streett automaton such that:
\begin{itemize}
  \item $\alpha=\set{(G_1,R_1)}$;
  \item $\goodset=(Q \setminus G_1) \cup R_1$ and $\badset=Q\setminus
    \goodset$;
  \item $\forall q \in \goodset \suchdot \forall q' \in \badset \suchdot
    \forall \sigma \in \Sigma \suchdot (q' \not\in \delta(q,\sigma))$.
\end{itemize}
Intuitively, any accepting run of a guarantee Streett automaton can visit
the states in $G_1$ only a finite number of times, after which it is forced
to visit only states in the $\goodset$ region. Crucially, once a run enters
the $\goodset$ region, each extension of it will result into an accepting
run, since it will never visit the states in $G_1$. For this reason, guarantee
Street automata capture $\REG \cap \coSAFETY$.  In order to characterize
$\sem{\LTL} \cap \coSAFETY$, by exploiting the equivalence between
counter-free automata and \LTL, Manna and Pnueli~\cite{manna1990hierarchy}
prove that \emph{counter-free guarantee Streett automata} capture the
co-safety fragment of \LTL over infinite words.
By a simple dualization, they define \emph{safety Streett automata} as
Streett automata in which there is no transition from the $\badset$ to the
$\goodset$ region, \ie:
\begin{itemize}
  \item $\alpha=\set{(G_1,R_1)}$;
  \item $\goodset=(Q \setminus G_1) \cup R_1$ and $\badset=Q\setminus
    \goodset$;
  \item $\forall q \in \badset \suchdot \forall q' \in \goodset \suchdot
    \forall \sigma \in \Sigma \suchdot (q' \not\in \delta(q,\sigma))$.
\end{itemize}
It holds that \emph{counter-free safety Streett automata} capture
$\sem{\LTL} \cap \SAFETY$.

In~\cite{CernaP03}, Cern{\'{a}} and Pel{\'{a}}nek prove that
\emph{deterministic Occurrence B\"uchi automata} are equivalent to
guarantee Streett automata, thus proving also that the formers characterize
the co-safety fragment of regular languages. The intuition behind this
characterization is simple. A run $\pi$ of a deterministic Occurrence
B\"uchi automaton is accepting if and only if it reaches a final state (say
at position $i$). Now, by definition of \emph{Occurrence B\"uchi
automaton}, every run that agrees with $\pi$ from $0$ to $i$ and then goes
on arbitrarly is accepting as well. It is not difficult to see that, in
order to capture $\sem{\LTL} \cap \coSAFETY$, it suffices to add the
counter-free condition to deterministic Occurrence B\"uchi automata.  By
dualization, Cern{\'{a}} and Pel{\'{a}}nek~\cite{CernaP03} obtain that
counter-free deterministic Occurrence co-B\"uchi automata capture
$\sem{\LTL} \cap \SAFETY$. It is simple to see that this characterization
of both the co-safety and the safety fragment of \LTL in terms of
counter-free deterministic Occurrence B\"uchi and co-B\"uchi automata holds
for finite words as well.

Last but not least, the co-safety fragment of \LTL can be captured by
\emph{counter-free terminal automata}~\cite{bloem1999efficient,CernaP03}.
Terminal automata are nondeterministic automata such that each final state
$q\in\alpha$ is such that $\delta(q,\sigma)\subseteq \alpha$ (for any
$\sigma\in\Sigma$), \ie any run, once reached a final state, cannot reach
a state which is not final. It holds that~\cite{CernaP03}:
\begin{enumerate*}[label=(\roman*)]
  \item $\sem{\LTL} \cap \coSAFETY$ is captured by counter-free terminal
    B\"uchi automata; 
  \item $\semfin{\LTL} \cap \coSAFETYfin$ is captured by counter-free
    terminal \NFAs.
\end{enumerate*}


\section{Conclusions}
\label{sec:conclusions}

In this paper, we gave a first-order characterization of safety and
co-safety languages, by means of two fragments of first-order logic,
\SafetyFO and \coSafetyFO. These fragments of \FOTLO provide a very
natural syntax and are \emph{expressively complete} with regards to
\LTL-definable safety and co-safety languages.

The core theorem establishes a correspondence between \SafetyFO (resp.,
\coSafetyFO) and \safetyltl (resp., \cosafetyltl), and thus it can be
viewed as a special version of Kamp's theorem for safety (resp., co-safety)
properties.
Thanks to these new fragments, we were able to provide a novel, compact,
and self-contained proof of the fact that \safetyltl captures
\LTL-definable safety languages. Such a result was previously proved by
Chang \etal~\cite{ChangMP92}, but in terms of the properties of
a non-trivial transformation from star-free languages to \LTL by
Zuck~\cite{zuck1986past}. As a by-product, we provided a number of results
that relate the considered languages when interpreted over finite and
infinite words. In particular, we highlighted the expressive power of the
\emph{weak tomorrow} temporal modality, showing it to be essential in
\cosafetyltl over finite words. Last but not least, we show that
\cosafetyltlnoweak and \safetyltlnonext capture the set of co-safety and
safety languages of finite words definable in \LTL, respectively.

The equivalence-preserving translation from \coSafetyFO to \cosafetyltl
shown in this paper can, in the worst case, produce formulas of
nonelementary size. An interesting future direction is to investigate
whether more efficient (even polynomial) translations are possible.

As we have seen, different fragments of \LTL can capture the (co-)safety
fragment. It is interesting to study the succinctness of these fragments,
in particular of \cosafetyltl and $\ltl{F\alpha}$, and to ask whether one
can be exponentially more succinct than the other, or whether they are
incomparable as far as succinctness is considered. Last but not least,
a natural related question is whether the previous results generalize to
the case of finite words as well, \ie for the logics \cosafetyltlnoweak and
$\ltl{F\alpha}$.

\section*{Acknowledgements}
Alessandro Cimatti, Angelo Montanari, and Stefano Tonetta acknowledge the
support of the MUR PNRR project FAIR - Future AI Research (PE00000013)
funded by the NextGenerationEU.
Luca Geatti, Nicola Gigante, and Angelo Montanari acknowledge the support
from the 2022 Italian INdAM-GNCS project \emph{``Elaborazione del
Linguaggio Naturale e Logica Temporale per la Formalizzazione di Testi''},
ref.~no.~\texttt{CUP\_E55F22000270001}.
Nicola Gigante acknowledges the support of the PURPLE project, in the
context of the AIPlan4EU project's First Open Call for Innovators.

\bibliographystyle{alphaurl}
\bibliography{biblio.bib}

\end{document}